\theoremstyle:=definition,remark,plain\do{%
  \expandafter\g@addto@macro\csname th@\theoremstyle\endcsname{%
    \addtolength\thm@preskip\parskip }%
}
\titleformat{\paragraph}[hang]{\bfseries}{}{0mm}{}
\titlespacing{\paragraph}{0mm}{\baselineskip}{0.5ex}
\theoremstyle{plain}
\newtheorem{theorem}{Theorem}[section]
\newtheorem{lemma}[theorem]{Lemma}
\newtheorem{proposition}[theorem]{Proposition}
\newtheorem{conjecture}[theorem]{Conjecture}
\newtheorem{definition}[theorem]{Definition}
\newtheorem{assumption}[theorem]{Assumption}
\theoremstyle{definition}
\newtheorem{example}[theorem]{Example}
\newtheorem{remark}[theorem]{Remark}
\newcommand{\rmd}{\mathrm{d}}
\newcommand{\1}{\mathds{1}}
\renewcommand{\P}{\mathbb{P}}
\newcommand{\E}{\mathbb{E}}
\newcommand{\norm}[2][]{#1\lVert #2 #1\rVert}
\newcommand{\abs}[2][]{#1\lvert #2 #1\rvert}
\newcommand{\tol}{\longrightarrow}
\newcommand{\weakcv}{\overset{w}{\longrightarrow}}
\DeclareMathOperator{\Score}{\Delta}
\DeclareMathOperator*{\argmin}{arg\,min}
\DeclareMathOperator*{\argmax}{arg\,max}
\newcommand{\pderiv}[3][{}]{\frac{\partial^{#1}#3}{\partial{#2}^{#1}}}
\newcommand{\deriv}[2][{}]{\frac{\rmd^{#1}}{\rmd{#2}^{#1}}}
\newcommand{\leb}{\mathrm{Leb}}
\renewcommand{\epsilon}{\varepsilon}
\renewcommand{\rho}{\varrho}
\renewcommand{\phi}{\varphi}
\renewcommand{\emptyset}{\varnothing}
\begin{document}

\title{Infinitesimal gradient boosting}
\author{Clément Dombry\footnote{Universit{\'e} Bourgogne Franche-Comt{\'e}, Laboratoire de Math{\'e}matiques de Besan\c{c}on, UMR CNRS 6623,   F-25000 Besan{\c c}on, France. Email: \href{mailto:clement.dombry@univ-fcomte.fr}{\texttt{clement.dombry@univ-fcomte.fr}},\ \href{mailto:jean-jil.duchamps@univ-fcomte.fr}{\texttt{jean-jil.duchamps@univ-fcomte.fr}}} \ and Jean-Jil Duchamps\footnotemark[1]}
\maketitle

\begin{abstract}
We define infinitesimal gradient boosting as a limit of the popular tree-based gradient boosting algorithm from machine learning. The limit is considered in the vanishing-learning-rate asymptotic, that is when the learning rate tends to zero and the number of gradient trees is rescaled accordingly. For this purpose, we introduce a new class of randomized regression trees bridging totally randomized trees and Extra Trees and using a softmax distribution for  binary splitting. Our main result is the convergence of the associated stochastic algorithm and the characterization of the limiting procedure as the unique solution of a nonlinear ordinary differential equation in a infinite dimensional function space. Infinitesimal gradient boosting defines a smooth path in the space of continuous functions along which the training error decreases, the residuals remain centered  and the total variation is well controlled.  
\end{abstract}

\textbf{Keywords:} gradient boosting, softmax regression tree, vanishing-learning-rate\\asymptotic, convergence of Markov processes.

\textbf{MSC 2020 subject classifications:} primary 60F17; secondary 60J20, 62G05.

\vfill
\pagebreak 

\tableofcontents

\section{Introduction and main results}

\subsection{Background}
Trying to understand how a set of covariates $X=(X^1,\ldots,X^p)$ impacts a quantity of interest $Y$ is a major task of statistical learning and machine learning that is crucial for predicting the response $Y$ when only $X$ can be observed. Among many modern numeric methods such as random forests or neural networks,  gradient boosting is a prominent one, as demonstrated by the incredible success of XGBoost~\citep{CG16} in machine learning contests. This paper proposes a first in-depth mathematical analysis of the dynamics underneath gradient boosting.

In a prediction framework, statistical learning aims at approaching the performance of the Bayes predictor which minimizes the expected prediction loss and is defined by
\begin{equation}\label{eq:optim-theo}
f^\ast=\argmin_{f}\mathbb{E}[L(Y,f(X))].
\end{equation}
Here $f:[0,1]^p\to\mathbb{R}$ is a measurable function called predictor and the function $L$, called loss function, measures the discrepancy between observation $Y$ and  prediction $f(X)$. For instance, the fundamental task of regression uses  the squared error loss $L(y,z)=(y-z)^2$ and the Bayes predictor is the regression function $f^*(x)= \mathbb{E}[Y\mid X=x]$. Because the distribution of $(X,Y)$ is unknown and accessible only through a sample of observations $(x_i,y_i)_{1\leq i\leq n}$, the optimization problem~\eqref{eq:optim-theo} is replaced by its empirical counterpart
\begin{equation}\label{eq:optim-emp}
\argmin_{f}\frac{1}{n}\sum_{i=1}^n L(y_i,f(x_i)).
\end{equation}
Here optimization is not performed over the entire function space because it would yield an overfit. In regression, interpolating functions may achieve zero loss but with poor generalization capacity to new observations. Common strategies are to restrict the problem to smaller parametric classes (e.g.\ linear regression, neural network) and/or to add penalties that impose more regularity of the solution (e.g.\ smoothing splines, generalized additive models). The choices of a suitable parametric class and/or of suitable penalty terms for regularization are critical and related to the generalization capacity of the predictor in relation with under/over-fitting.  When using a parametric model, algorithms from numerical analysis, such as gradient descent and their variants, can be used efficiently to solve the associated finite-dimensional optimization problem. For more general background on statistical learning, the reader should refer to \cite{ESL}.

 With a different strategy, the gradient boosting method, as proposed by \cite{F01}, is an original approach to tackle the optimization problem \eqref{eq:optim-emp} directly in the infinite dimensional function space. Akin to gradient descent, it is a recursive procedure that tries to improve the current predictor by performing small steps in suitable directions. A ``suitable direction'' is obtained by fitting a predictor, called base learner, to the residuals of the current model (i.e.\ the negative gradient of the empirical loss at the observation points). In practice, the most successful base learners are regression trees --- a precise description of regression trees and  gradient boosting  are provided in the next subsection. Efficient implementation in the R package GBM or Python library XGBoost makes tree-based gradient boosting one of the most useful techniques from modern machine learning. 

The development of boosting started with the algorithm AdaBoost for classification by \cite{FS99}, where the idea to combine many weak learners trained sequentially in order to improve the prediction proved successful. \cite{FHT00} were able to see the wider statistical framework of stagewise additive modeling which lead to gradient boosting \citep{F01} and its stochastic version \citep{F02}. In the mathematical analysis of boosting, the main issue discussed in the statistical literature is consistency, i.e.\ the ability of the procedure to achieve the optimal Bayes error rate when the sample size tends to infinity. Such consistency results where proved for AdaBoost \citep{J04}, more general boosting procedures \citep{LV04, BLV04, ZY05} or an infinite-population version of boosting \citep{B04}. These papers mostly use the fact that the Bayes predictor can be approximated by linear combinations taken over a class of simple functions (such as trees) together  with some measure of the complexity of the class (such as VC-dimension). 

The precise dynamics of the boosting procedure are seldom considered, an exception being the study of linear  L2-boosting by \cite{BY03}, where the authors consider a linear base learner and provide an explicit expression of the associated  boosting procedure relying on linear algebra.  Then, the precise knowledge of the behavior of the base learner eigenvalues in the large sample limit allows them to derive consistency. In this linear framework, \cite{DE20} recently introduced the vanishing-learning-rate asymptotic for linear L2-boosting. They proved that, as the learning rate converges to zero and the number of iterations is rescaled accordingly, the boosting procedure converges to the solution of a linear differential equation in a function space. The motivation comes from the fact that small learning rates are known to provide better performances and are commonly used in practice \citep{R07}. This vanishing-learning-rate asymptotic sheds some new light on linear L2-boosting, putting the emphasis on the dynamics of the procedure with finite sample size, rather than on its consistency as the sample size tends to infinity. 

In the present paper, we extend the vanishing-learning-rate asymptotic beyond linear boosting and obtain the existence of a limit for gradient boosting with a general convex loss function and non linear base learner given by regression trees.  Dealing with trees implies new technical issues because the state space of the boosting sequence is now truly infinite dimensional, while the proofs in \cite{DE20} rely on assumptions ensuring that the boosting sequence remains in a finite dimensional space. To tackle this issue, we work in a suitable space that we call the \emph{space of tree functions}. The construction relies on a new encoding of regression trees  by  discrete signed measures, see Section~\ref{sec:T-space}. Another issue with regression trees is their non-linearity and discontinuity with respect to the training sample. Both are due to the splitting procedure that makes use of the response variable in a greedy way, where the $\argmax$ functional underneath the best split selection is not continuous. This has lead us to design a new class of randomized regression trees,  that we call \emph{softmax regression trees}, where  the classical $\argmax$ selection is replaced by a softmax selection. An important feature of this model is that the expected tree is Lipschitz continuous with respect to the training sample, which makes it possible to develop the differential equation approach from \cite{DE20}. Now the dynamics are driven by a nonlinear differential equation and the Lipschitz property ensures the existence and uniqueness of solutions. 

We shortly mention the limitation and perspectives of the present work. Strong regularity properties of the expected base learner are assumed that are tailored for the  two main important tasks of statistical learning that are least squares regression and binary classification. Further statistical tasks such as quantile regression of robust regression are not covered by the theory we develop here and should be the subject of further research. We establish here a probabilistic theory for the vanishing-learning-rate asymptotic of gradient boosting based on a finite and fixed sample. The variability with respect to the sample distribution   and the large sample behavior should be considered in a future work putting the emphasis on statistical issues. Partial result for linear boosting are provided in \cite{DE20} with a bias/variance decomposition of the training and test errors.

The remainder of this section is devoted to a description of our main results with limited technical details. We present our framework for tree-based gradient boosting and our main results concerning the existence of the vanishing-learning-rate asymptotic (Theorem~\ref{thm:cv-lambda-to-0}) and the characterization of its dynamics in terms of a differential equation (Theorem~\ref{thm:EDO}). The technical material required to state and prove our results is developed in the next three sections. Section~\ref{sec:formal-softmax-trees} focuses on the base learner and develops the theory of softmax regression trees. Section~\ref{sec:T-space} is devoted to the construction and study of a new function space tailored to the analysis of tree-based gradient boosting. Section~\ref{sec:inf-gb} proposes a detailed analysis of the vanishing-learning-rate asymptotic for gradient boosting and the associated dynamics. Finally, all the technical proofs are postponed and gathered in Section~\ref{sec:proofs}.

\subsection{Regression trees} \label{sec:intro-reg-trees}

Gradient boosting is usually implemented with regression trees as base learners. Contrary to random forest where fully grown trees are used, gradient boosting usually makes use of shallow trees, typically with depth between $1$ and $5$. We describe below the classical Breiman regression tree and Extra-Tree models from machine learning and also introduce a new model that we call softmax regression tree and that is crucial in our theory. 

\paragraph{Breiman regression trees}
Breiman regression tree \citep{BFOS84} are built with a recursive top-down procedure that uses greedy binary splitting to partition the feature space  into hypercubes called leaves. Starting from the initial feature space $[0,1]^p$, recursive binary splitting produces first $2$ regions, then $4$ regions, and recursively $2^d$ regions that form a partition on $[0,1]^p$. The parameter $d\geq 1$ is called the tree depth.

Greedy binary splitting means that each split is determined so as to minimize the empirical mean squared error (mse). For a region $A\subset [0,1]^p$, we define
\begin{equation}\label{eq:mse}
\mathrm{mse}(A)=\frac 1n
\sum_{i=1}^n(y_i-\bar y(A))^2\1_A(x_i),
\end{equation}
with $\bar y(A)$ the mean of  $\{y_i: x_i\in A\}$. A split consists in dividing the hypercube $A$ into two hypercubes $A_0$ and $A_1$ according to whether some variable $X^j$  is below or above threshold $u$. More precisely, the split encoded by 
$(j,u)\in[\![1,p]\!]\times[0,1]$  yields the partition $A=A_0\cup A_1$ defined by
\begin{align}
  A_0=\{x\in A\ :\ x^j < a^j+u(b^j-a^j)\},\nonumber \\
  A_1=\{x\in A\ :\ x^j\geq a^j+u(b^j-a^j)\},\label{eq:A0-A1}
\end{align}
where $x^{j}$ denotes the $j$th coordinate of $x$, $a^j = \inf_{x\in A}x^j$ and $b^j = \sup_{x\in A}x^j$. The mse decrease -- or simply score -- associated to this split is
\begin{align} 
  \Score(j,u;A) &= \mathrm{mse}(A)-\mathrm{mse}(A_0)-\mathrm{mse}(A_1) \nonumber \\
  &= \frac{n(A_0)}{n}\big(\bar y(A_0)-\bar y(A)\big)^2+\frac{n(A_1)}{n}\big(\bar y(A_1)-\bar y(A)\big)^2,\label{eq:def-score}
\end{align}
with $n(A)$ the number of observations in $A$.  The greedy binary split is encoded by
\[
(j^*,u^*)=\argmax_{(j,u)}\Score(j,u;A),
\]
meaning that Breiman's original algorithm maximizes the mse decrease  over all admissible splits.

\paragraph{Extra-Trees}
For large sample  and/or high dimensional covariate space, the search for the best split can  be computationally expensive and the computational burden can be alleviated by the use of Extra-Trees \citep{GEW06}. In this algorithm the splits are randomized and optimization is not performed over all admissible splits, but only among $K$ random proposals.

At each split, $K$ random proposal $(j_1,u_1),\ldots,(j_K,u_K)$ are drawn uniformly on $[\![1,p]\!]\times[0,1]$ and the split effectively performed is encoded by
\[
(j^*,u^*)=\argmax_{(j,u)\in(j_k,u_k)_{1\leq k\leq K}}\Score(j,u;A).
\]
Extra-Trees are randomized regression tree with two main parameters: the depth $d\geq 1$ controls the numbers of leaves and  the number of proposals at each split $K\geq 1$ controls the degree of randomness.  When $K=1$, the split involves only one random draw and no maximization so that the resulting partition do not depend on the input sample -- the model reduces to the so-called \emph{totally randomized tree}. On the opposite, as $K\to\infty$, Extra-Trees approaches  Breiman's  regression tree.

\paragraph{Softmax regression trees}
Our analysis requires a regularity property of the (expected) tree with respect to its input $(y_i)_{1\leq i\leq n}$ that is not satisfied by the two aforementioned models. This lack of regularity is due to the discontinuity of the $\mathrm{argmax}$ operator. It is indeed well-known that regression trees tend to be unstable since  small modifications in the input may modify the first split and thereby the whole tree structure.  

Our  main idea for regularization is simply to modify  the Extra-Trees construction by replacing the $\mathrm{argmax}$ selection by a smoother $\mathrm{softmax}$ selection. The $\mathrm{softmax}$ function with parameter $\beta\geq 0$ is defined by
\begin{equation}\label{eq:softmax}
\mathrm{softmax}_\beta(z)=\left( \frac{e^{\beta z_k}}{\sum_{l=1}^Ke^{\beta z_l}}\right)_{1\leq k\leq K},\quad z\in\mathbb{R}^K.
\end{equation}
The output $\mathrm{softmax}_\beta(z)$ is interpreted as a probability distribution.  When $\beta=0$, this probability is uniform on $[\![1,K]\!]$. When $\beta\to+\infty$, it concentrates on the subset $\argmax_{k} z_k$.

The \textit{softmax regression trees}  relies on softmax binary splitting defined as follows. As for Extra-Trees, the splits are randomized and involve $K\geq 1$ random proposals $(j_k,u_k)_{1\leq k\leq K}$ uniformly drawn on $[\![1,K]\!]\times[0,1]$. Then the split effectively performed is  randomly  selected with distribution 
\[
(j^*,u^*)\sim \mathrm{softmax}_\beta((\Score(j_k,u_k;A))_{1\leq k\leq K}).
\]
The parameter $\beta\geq 0$ is a further hyperparameter of the model which provides a bridge between totally randomized trees (as $\beta\to 0$) and Extra-Trees (as $\beta\to+\infty$). Furthermore we retrieve  Breiman's regression tree in the limit $K\to+\infty$ and $\beta\to+\infty$. See Remarks~\ref{rk:beta-to-infty} and~\ref{rk:K-to-infty} for a formal justification. A  property crucial to our theory is that the mean softmax regression tree is Lipschitz continuous in its input $(y_i)_{1\leq i\leq n}$, see Proposition~\ref{prop:reg-tree-Lipschitz}.

For future reference and to fix notation, we introduce the following definition. 
\begin{definition}\label{def:softmax-reg-tree} We call softmax regression tree with parameter $(\beta,K,d)$ the  tree function
\[
T(x;(x_i,y_i)_{1\leq i\leq n},\zeta)=\sum_{1\leq k\leq 2^d} \bar y(A_k) \1_{A_k}(x), \quad x\in[0,1]^p,
\]
where $(x_i,y_i)_{1\leq i\leq n}$ denotes the input sample, $\zeta$ the auxiliary randomness used to perform the splits and $(A_k)_{1\leq k\leq 2^d}$ the resulting partition of $[0,1]^p$.
\end{definition}
We describe formally in Section~\ref{sec:formal-softmax-trees} below what is the structure of the auxiliary randomness $\zeta$ and how it determines the partition $(A_k)_{1\leq k\leq 2^d}$.

\paragraph{Illustration}
We illustrate the softmax regression tree model in the simplest case of a $1$-dimensional regression model  with regression trees of depth $d=1$, also called stomps. Let us consider a sample $(x_i,y_i)_{1\leq i\leq n}$  of size $n=100$ from the  regression model $(X,Y)\in[0,1]\times \mathbb{R}$  where
\begin{equation}\label{eq:1d-regression}
X\sim \mathrm{Unif}([0,1]),\quad  Y=\sin\Big(\frac{\pi}{4} + \frac{3\pi}{2}X\Big)+\epsilon
\end{equation}
and the error $\varepsilon$ is a centered Gaussian with standard deviation $\sigma=0.1$ and independent of $X$. Figure~\ref{fig:argmax-softmax} shows the scatter plot associated with the sample  and the argmax / softmax problem associated with the three models (Breiman regression tree, Extra-Tree, softmax regression tree). The tree functions (estimation of the regression function) are displayed in Figure~\ref{fig:example-regression-trees} where one can see that $\beta$ plays the role of a regularization parameter and that the softmax regression trees bridges Breiman regression trees and totally random trees.

\begin{figure}[ht!]
\includegraphics[width=\linewidth]{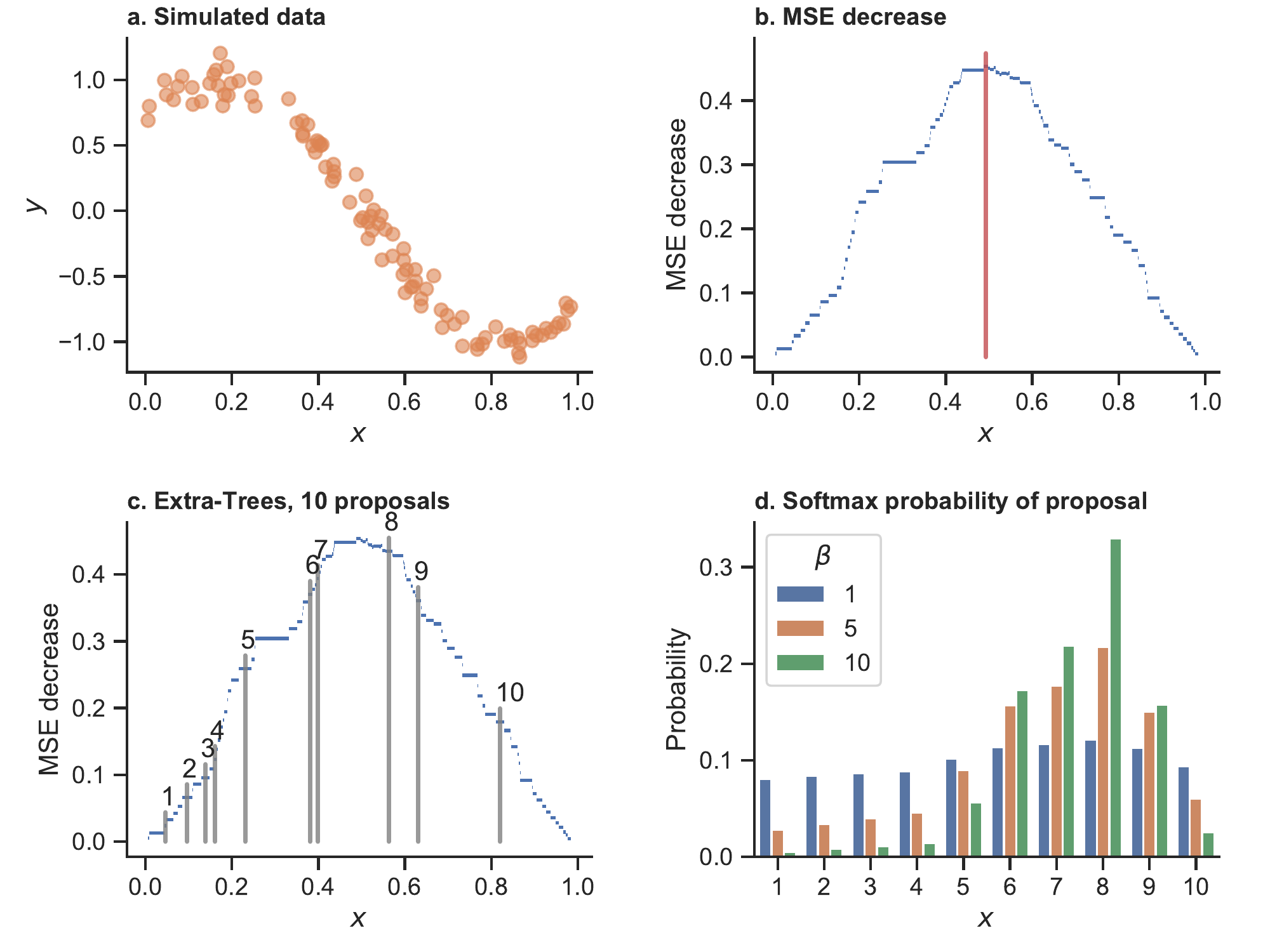}
\caption{A simple $1$-dimensional regression problem and the  maximisation problem associated with depth $1$ trees. Panel a: scatter plot for a sample of size $n=100$ from the regression model \eqref{eq:1d-regression}. Panel b: for Breiman regression tree, the MSE decrease \eqref{eq:def-score} is maximised over all possible thresholds; the vertical red line represents the argmax. Panel c: for Extra-Trees,  the maximisation is restricted over $K=10$ random proposals represented by the vertical gray lines. Panel d: for softmax regression trees, the threshold is chosen randomly among the $K=10$ proposals with the softmax distribution $\mathrm{softmax}_\beta(\mathrm{scores})$  represented by the barplot, for $\beta\in\{1,5,10\}$ here. } \label{fig:argmax-softmax}
\end{figure}

\begin{figure}[ht!]
\includegraphics[width=\linewidth]{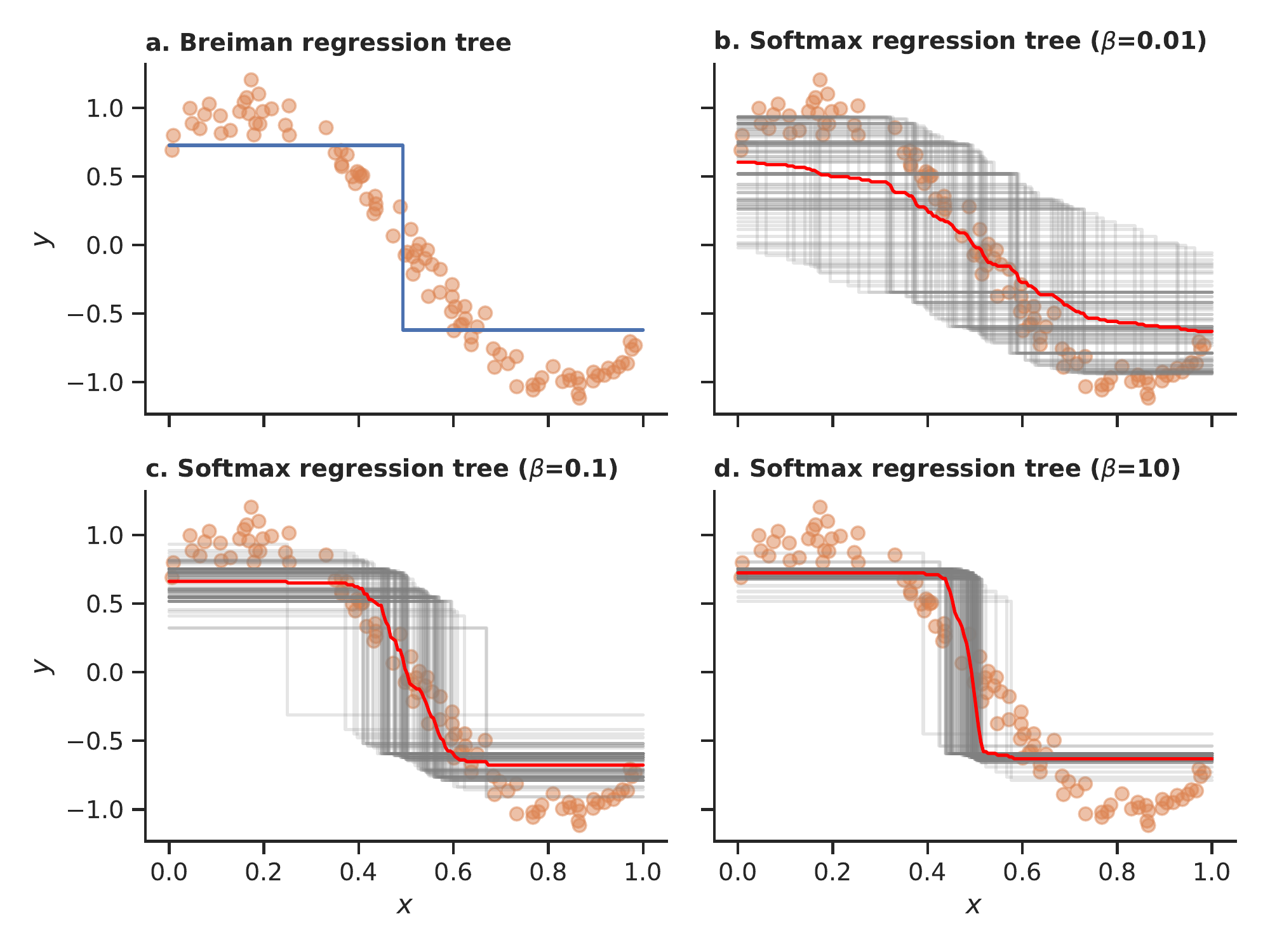}
\caption{Regression tree functions based on the same data as in Figure~\ref{fig:argmax-softmax}. Panel a: Breiman regression tree with depth $1$. Panels b,c,d: 100 random realizations of the softmax regression tree (in gray) and their mean (in red) for the softmax regression tree with $K=20$ and $\beta=0.01$ in panel b (resp. $\beta=0.1$ and $\beta=10$ in panels c and d).}
\label{fig:example-regression-trees}
\end{figure}

\subsection{Tree-based gradient boosting} \label{sec:intro-gradient boosting}

We focus in this paper on gradient boosting as introduced by~\cite{F01}. We also refer to \cite{R07} for practical guidelines on gradient boosting. Recall that the task \eqref{eq:optim-emp} is to minimize the empirical loss. We assume here  that the loss function $L$ is convex and twice differentiable in its second variable. The boosting procedure is an ensemble method that combines many instances of the base learner that are fitted sequentially so as to gradually improve the current model. We use here our  softmax regression tree with parameter $(\beta,K,d)$ as  base learner.  Gradient boosting with learning rate $\lambda>0$ then produces the sequence of predictors $(\hat F_m^\lambda)_{m\geq 0}$ defined as follows:
\begin{enumerate}
\item (Initialization). Set $\hat F_0^\lambda$ equal to the constant predictor: 
\begin{equation}\label{eq:boosting-init}
\hat F_0^\lambda(x)\equiv \argmin_{z\in\mathbb{R}}\frac{1}{n}\sum_{i=1}^n L(y_i,z),\quad x\in[0,1]^p.
\end{equation}
\item (Recursion). At step $m\geq 0$, 
\begin{enumerate}[label={(\roman*)}]
\item Compute the residuals (negative loss gradients):
\begin{equation}\label{eq:residuals}
r_{m,i}^\lambda=-\frac{\partial L}{\partial z}(y_i,\hat F_m^\lambda(x_i)),\quad 1\leq i\leq n;
\end{equation}
\item Fit a softmax regression tree with parameter $(\beta,K,d)$ to the residuals:
\begin{equation}\label{eq:boosting-tree}
T_{m+1}(x;(x_i,r_{m,i}^\lambda)_{1\leq i\leq n},\zeta_{m+1})=\sum_{1\leq k\leq 2^d} \bar r_m^\lambda(A_k) \1_{A_k}(x), 
\end{equation}
where $\zeta_{m+1}$ denotes the auxiliary randomness (independent of the past), $(A_k)_{1\leq k\leq 2^d}$ the resulting partition of $[0,1]^p$ into $2^d$ leaves and $\bar r_m^\lambda(A_k)$ the mean residual in leaf $A_k$;
\item Modify the leaf values according to a line search one-step approximation:
\begin{equation}\label{eq:boosting-modified-tree}
\widetilde{T}_{m+1}(x)=\sum_{1\leq k\leq 2^d} \tilde r_m^\lambda(A_k) \1_{A_k}(x), 
\end{equation}
with
\begin{equation}\label{eq:boosting-line-search-approximation}
\tilde r_m^\lambda(A_k) = -\frac{\sum_{i=1}^n \frac{\partial L}{\partial z}(y_i,\hat F_m^\lambda(x_i))\1_{A_k}(x_i) }{\sum_{i=1}^n \frac{\partial^2 L}{\partial z^2}(y_i,\hat F_m^\lambda(x_i))\1_{A_k}(x_i)}.
\end{equation}
(see Remark~\ref{rk:line-search} below for a justification of this expression).
\item Update the model by adding a shrunken version of the tree:
\begin{equation}\label{eq:boosting-update}
\hat F_{m+1}^\lambda(x)=\hat F_{m}^\lambda (x)+\lambda \widetilde{T}_{m+1}(x).
\end{equation}
\end{enumerate}
\end{enumerate}

We emphasize that at each step, the tree $\widetilde{T}_{m+1}$ is obtained by fitting a softmax regression tree to the residuals in order to compute the partition  (step \textit{iii}) and then modifying the tree values  according to the line search one-step approximation (step~\textit{iv}). This motivates the following definition of \textit{softmax gradient trees}.

\begin{definition}\label{def:softmax-gradient-tree}
Let $(x_i)_{1\leq i\leq n}$ be fixed. For  a bounded function $F:[0,1]^p\to\mathbb{R}$, the residuals at $F$ are given by
\[
r_i=-\frac{\partial L}{\partial z}(y_i, F(x_i)),\quad 1\leq i\leq n.
\]
The softmax gradient tree with parameter $(\beta,K,d)$ is defined as the  tree function
\[
\widetilde{T}(x;F,\zeta)=\sum_{1\leq k\leq 2^d} \tilde r(A_k) \1_{A_k}(x), \quad x\in[0,1]^p,
\]
where $\zeta$ denotes the auxiliary randomness, $(A_k)_{1\leq k\leq 2^d}$ the partition associated with the softmax regression tree $T(\,\cdot\,;(x_i,r_i)_{1\leq i\leq n},\zeta)$ and
\begin{equation}\label{eq:gradient-tree-line-search-approximation}
\tilde r(A_k) = -\frac{\sum_{i=1}^n \frac{\partial L}{\partial z}(y_i, F(x_i))\1_{A_k}(x_i) }{\sum_{i=1}^n \frac{\partial^2 L}{\partial z^2}(y_i,F(x_i))\1_{A_k}(x_i)},\quad 1\leq k\leq 2^d,
\end{equation}
the leaf values.
\end{definition}
With this definition, Equations~\eqref{eq:residuals}-\eqref{eq:boosting-update} take the simple form
\begin{equation}\label{eq:Markov}
\hat F_{m+1}^\lambda=\hat F_m^\lambda+\lambda \widetilde{T}(\,\cdot\,;\hat F_m^\lambda,\zeta_{m+1}),\quad m\geq 0,
\end{equation}
clearly evidencing the Markov structure of the  sequence $(\hat F_m^\lambda)_{m\geq 0}$.

\begin{remark}\label{rk:line-search} In the original definition of gradient boosting by \cite{F01}, stagewise  additive modeling is considered in a greedy way and the modification of the leaf values  in step \textit{(iii)} is given by
\begin{equation}\label{eq:boosting-line-search}
\tilde r(A_k)=\argmin_{z\in\mathbb{R}}\sum_{i=1}^n L(r_i, F(x_i)+z)\1_{A_k}(x_i)
\end{equation}
This corresponds to a line search for optimally updating the current model in an additive way on leaf $A_k$. Quite often, the line search problem \eqref{eq:boosting-line-search}  has no explicit solution and numerical optimization has to be used. To alleviate the computational burden, it is usually replaced by its one-step approximation 
\[
\tilde r(A_k)=\argmin_{z\in\mathbb{R}}\sum_{i=1}^n\Big( L(r_{i},F(x_i))+ \frac{\partial L}{\partial z}L(r_i, F(x_i))z+\frac{1}{2}\frac{\partial^2 L}{\partial z^2} L(r_i,F(x_i))z^2\Big)\1_{A_k}(x_i),
\]
where the function to optimize is replaced by its second order Taylor approximation. Solving for this quadratic problem we retrieve exactly Equation~\eqref{eq:gradient-tree-line-search-approximation}. Because of its constant use in modern implementation of gradient boosting such as XGBoost \citep{CG16}, we directly use this one-step approximation in our definition of gradient boosting and softmax gradient trees.
\end{remark}

\begin{example}\label{ex:regression} (regression with square loss). In the case of square loss $L(y,z)=\frac{1}{2}(y-z)^2$, the residuals are $r_{i}=y_i- F(x_i)$. We recover the usual notion of residual, that is the difference between observation and  predicted value. In this important case, the line search and its one-step approximation  are both equal to the mean residual.  The gradient tree is the same as the regression tree, i.e.\ $T_{m+1}=\widetilde{T}_{m+1}$ in Equations~\eqref{eq:boosting-tree}-\eqref{eq:boosting-modified-tree}. Gradient boosting for $L^2$-regression thus consists in sequentially updating the model in an additive way with a shrunken version of the regression tree fitted to the current residuals.
\end{example}

\begin{example}\label{ex:classification} (binary classification with cross-entropy).
Binary classification aims at predicting a binary response variable $Y\in\{0,1\}$ given $X=x$, the values $0$ and $1$ being often interpreted as failure and success respectively. The goal is to predict the success probability $p^\ast(x)=\mathbb{P}(Y=1\mid X=x)$. The binary cross entropy corresponds to the negative log-likelihood $L(y,z)=-yz+\log(1+e^z)$. The Bayes predictor is then equal to $f^\ast(x)=\log(p^\ast(x)/(1-p^\ast(x))$, which is the logit of the success probability. The loss derivatives are given by
\[
\frac{\partial L}{\partial z}(y,z)=y-p\quad \mbox{and}\quad \frac{\partial^2 L}{\partial z^2}(y,z)=p(1-p)\quad \mbox{with } p=\frac{e^z}{1+e^z}.
\]
\end{example}

\begin{example}\label{ex:AdaBoost} (binary classification with exponential loss).
The algorithm AdaBoost by \cite{FS99} was at the origin of the success of boosting. \cite{F01} showed that it is related to gradient boosting when the loss is exponential. Here $Y\in\{-1,1\}$ and the loss function is $L(y,z)=\exp(-yz)$. The Bayes predictor is $f^\ast(x)=\frac{1}{2}\log(p^\ast(x)/(1-p^\ast(x))$ with $p^\ast(x)=\mathbb{P}(Y=1\mid X=x)$. The loss derivatives are simply
\[
\frac{\partial L}{\partial z}(y,z)=ye^{-yz} \quad \mbox{and}\quad \frac{\partial^2 L}{\partial z^2}(y,z)=e^{-yz}.
\]
\end{example}

\begin{remark} Gradient boosting is a versatile procedure that can handle many different statistical tasks such as quantile regression or robust regression thanks to suitable choices of the loss function. For instance, the least absolute deviation $L(y,z)=|y-z|$ leads to the median regression and the corresponding Bayes predictor is the conditional median of $Y$ given $X=x$. The theory we develop in this paper uses a strong regularity assumption on the loss function and does not cover this example. The same remark applies for the $\tau$th quantile loss used in quantile regression or for the Huber loss used in robust regression. Such examples will be considered in further research. The regularity  assumption we assume in the present paper are tailored for least squares regression and binary classification.
\end{remark}

\paragraph{Illustration}
Using the  simple regression model \eqref{eq:1d-regression} that was used for the illustration of regression trees, we next consider the sequential aggregation of trees provided by gradient boosting. While a single tree provides a very crude estimate of the regression function (see Figure~\ref{fig:example-regression-trees}), the aggregation of many trees can approximate the regression function fairly well. We provide in Figure~\ref{fig:example-gradient-boosting} the output of gradient boosting based on softmax regression trees with depth $d=1$ and with a learning rate $\lambda=0.01$. For $\beta=0$ (in blue), $\beta=0.01$ (in green) or $\beta=1$ (in red), the output after $1000$ iterations provide a reasonable fit of the regression function. We observe that a larger value of $\beta$ implies a faster decrease of the MSE.

\begin{figure}[h!]
\includegraphics[width=\linewidth]{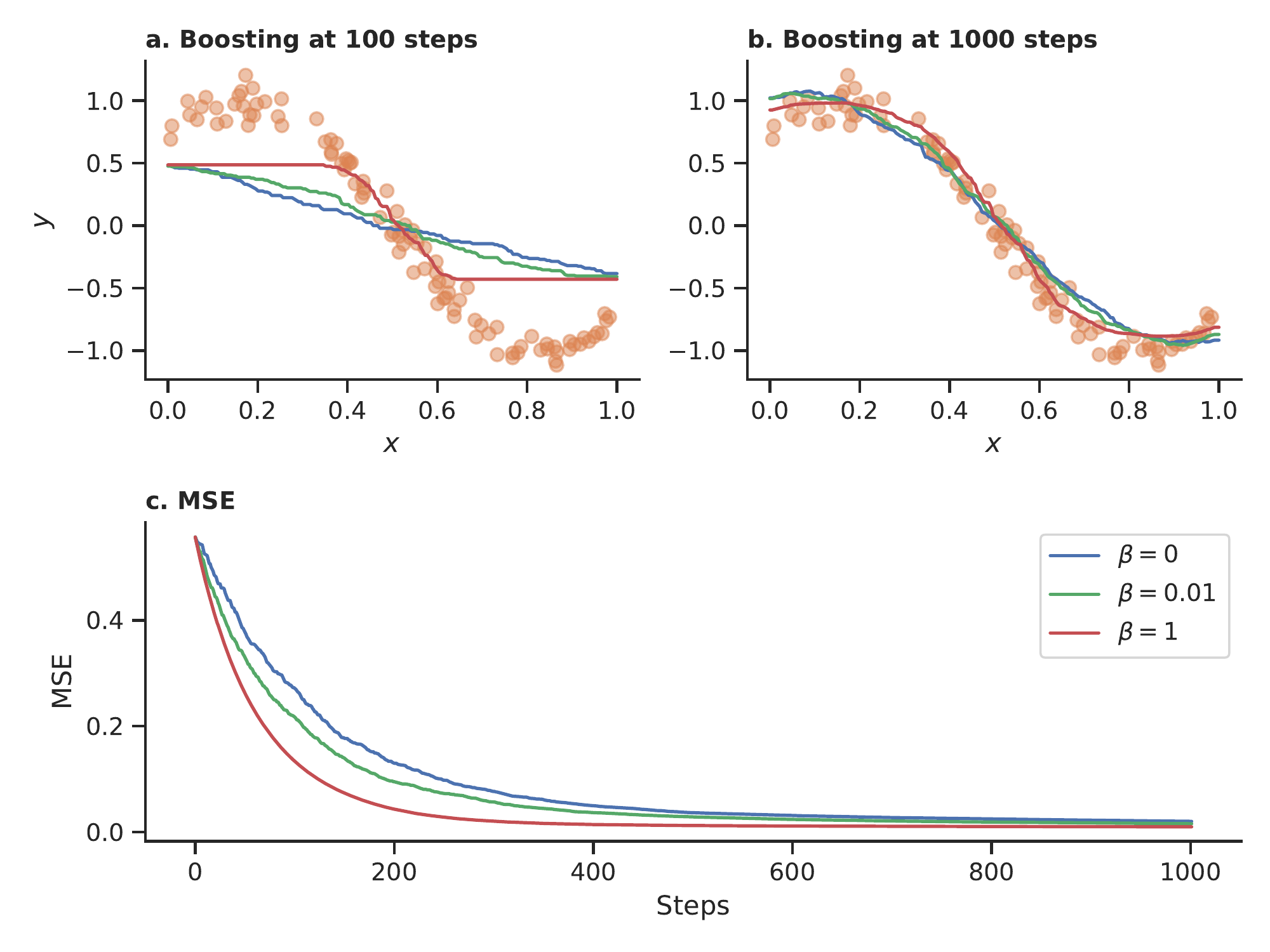}
\caption{Output of the gradient boosting algorithm  based on the same data  as in Figure~\ref{fig:argmax-softmax}. The learning rate is  $\lambda=0.01$, the parameter $\beta$ is either $0$ (blue curves), $0.01$ (green) or $1$ (red), and the number of proposals is $K=20$.  Panel a: gradient boosting output after $n=100$ steps.  Panel b: gradient boosting output after $n=1000$ steps.  Panel c: evolution of the mean squared error.} \label{fig:example-gradient-boosting}
\end{figure}

\subsection{Main results}\label{sec:main-results} 
We present our main results on the existence of the vanishing-learning-rate asymptotic for  gradient boosting based on softmax regression trees and the characterization of the corresponding dynamics. For the sake of simplicity, we state them first in the Banach space $\mathbb{B}=\mathbb{B}([0,1]^p,\mathbb{R})$ of bounded functions endowed with the sup norm. However, because of separability issues, we will develop our theory in the space  $\mathbb{T}=\mathbb{T}([0,1]^p,\mathbb{R})$ of tree functions introduced in the next section. We state
our results under the following assumptions.

\begin{assumption} \label{ass:A} Let the  input $(x_i,y_i)_{1\leq i\leq n}$ be fixed and consider gradient boosting with loss function $L$, softmax regression tree with parameter $(\beta,K,d)$ as base learner   and  learning rate $\lambda>0$. Denote by $(\hat F_m^\lambda)_{m\geq 0}$ the corresponding sequence of predictors defined by Equations~\eqref{eq:boosting-init}-\eqref{eq:boosting-update}.\\
Furthermore, assume the loss function satisfies the following conditions: 
\begin{enumerate}[label={$(A\arabic*)$}]
\item \label{ass:A1} $L:\mathbb{R}^2\to [0,\infty)$ is twice differentiable with respect to the second variable and, for all $y\in\mathbb{R}$,  $z\mapsto \frac{\partial^2 L}{\partial z^2}(y,z)$ is positive and locally Lipschitz.
\item \label{ass:A2} For each $C>0$, 
$\sup_{(y,z):L(y,z)\leq C} \left| \frac{\partial L}{\partial z}(y,z) / \frac{\partial^2 L}{\partial z^2}(y,z)\right|  <\infty$.
\end{enumerate}
\end{assumption}
Note that these assumptions are satisfied in the case of regression and binary classification considered in Examples~\ref{ex:regression},~\ref{ex:classification} and~\ref{ex:AdaBoost} above.

Equation~\eqref{eq:Markov} implies that the sequence  $(\hat F_m^\lambda)_{m\geq 0}$ is a $\mathbb{B}$-valued Markov chain. We consider its asymptotic as the learning rate $\lambda\to 0$ and the number of iterations $m$ is rescaled accordingly. Our first result states that the limit does exist and is deterministic. We call this regime the vanishing-learning-rate asymptotic.
 
\begin{theorem}\label{thm:cv-lambda-to-0}
Under Assumption~\ref{ass:A}, there exists a $\mathbb{B}$-valued  process $(\hat F_t)_{t\geq 0}$ such that, for all $T>0$,
\begin{equation}\label{eq:cv-lambda-to-0}
\sup_{t\in [0,T]} \sup_{x\in [0,1]^p}|\hat F_{[t/\lambda]}^\lambda(x)-\hat F_t(x)| \longrightarrow 0 \quad \mbox{in probability as $\lambda\to 0$.}
\end{equation}
Furthermore, the limit process is continuous as a function of $t$ and deterministic.
\end{theorem}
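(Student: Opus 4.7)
The plan is to interpret the Markov chain $(\hat F_m^\lambda)_{m\geq 0}$, rescaled in time by $t=m\lambda$, as an Euler scheme for an ordinary differential equation in the function space $\mathbb{B}$ and to establish convergence via a fluid-limit (Kurtz-type) argument. First I would identify the candidate drift: setting $b(F):=\mathbb{E}[\widetilde T(\,\cdot\,;F,\zeta)]$, where the expectation is over the auxiliary randomness $\zeta$ of a single softmax gradient tree, the one-step update~\eqref{eq:Markov} decomposes as
\begin{equation*}
\hat F_{m+1}^\lambda = \hat F_m^\lambda + \lambda\,b(\hat F_m^\lambda) + \lambda\,M_{m+1}^\lambda,
\qquad M_{m+1}^\lambda := \widetilde T(\,\cdot\,;\hat F_m^\lambda,\zeta_{m+1}) - b(\hat F_m^\lambda),
\end{equation*}
where $(M_{m+1}^\lambda)_{m\geq 0}$ is, by construction, a sequence of centered bounded martingale increments for the filtration generated by $(\zeta_k)_{k\geq 1}$. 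The candidate limit $(\hat F_t)_{t\geq 0}$ will be the unique solution of $\dot F_t = b(F_t)$ with initial value $\hat F_0 = \hat F_0^\lambda$.

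Second I would establish the two analytic ingredients needed for this program. Assumption~\ref{ass:A2} shows that the leaf values of $\widetilde T(\,\cdot\,;F,\zeta)$ are uniformly bounded whenever the empirical loss at $F$ stays below a fixed level, and a one-step Taylor expansion of $L$ combined with the convexity/positivity hypothesis~\ref{ass:A1} yields that the empirical loss is non-increasing along the chain up to an $O(\lambda)$ error. Iterating, $(\hat F_m^\lambda)_{m\leq T/\lambda}$ is confined to a deterministic ball $\{\norm{F}_\infty\leq R_T\}$ uniformly in $\lambda$. On this ball, the Lipschitz continuity of the mean softmax regression tree with respect to its input $(y_i)_i$ (Proposition~\ref{prop:reg-tree-Lipschitz}), combined with the local Lipschitz regularity of $F\mapsto (\partial_z L/\partial_z^2 L)(y_i,F(x_i))$ coming from~\ref{ass:A1}--\ref{ass:A2}, shows that $b$ is Lipschitz; Picard--Lindelöf then yields a unique continuous solution $\hat F$ to the ODE.

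Third, I would compare $\hat F_{[t/\lambda]}^\lambda$ and $\hat F_t$ via the telescoping identity
\begin{equation*}
\hat F_{[t/\lambda]}^\lambda - \hat F_t = \lambda\sum_{m=0}^{[t/\lambda]-1}\bigl(b(\hat F_m^\lambda) - b(\hat F_{m\lambda})\bigr) + \lambda\sum_{m=0}^{[t/\lambda]-1} M_{m+1}^\lambda + O(\lambda).
\end{equation*}
The first sum is bounded by the Lipschitz constant of $b$ times $\int_0^t \norm{\hat F_{[s/\lambda]}^\lambda - \hat F_s}_\infty\,\rmd s$, which Gronwall absorbs. For the noise term, a Doob $L^2$-inequality applied to the martingale $t\mapsto \lambda\sum_{m=0}^{[t/\lambda]-1}M_{m+1}^\lambda(x)$ at each fixed $x$ yields an $O(\sqrt{\lambda})$ bound. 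This delivers the claimed convergence \emph{pointwise in $x$}, and the fact that the limit solves the ODE forces it to coincide with the deterministic function $\hat F_t$.

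The main obstacle will be to upgrade this pointwise-in-$x$ control of the martingale noise to the uniform-in-$x$ bound stated in~\eqref{eq:cv-lambda-to-0}. Since $\mathbb{B}=\mathbb{B}([0,1]^p,\mathbb{R})$ is non-separable, $\sup_{x\in[0,1]^p}$ cannot simply be exchanged with expectation and there is no immediate tightness argument available. This is precisely what justifies the detour through the space $\mathbb{T}$ of tree functions of Section~\ref{sec:T-space}: the chain is naturally $\mathbb{T}$-valued, $\mathbb{T}$ is separable, and the norm induced by its signed-measure encoding dominates the sup norm in a way that allows the Doob and Gronwall steps to be carried out uniformly in $x$. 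Theorem~\ref{thm:cv-lambda-to-0}, stated in $\mathbb{B}$, should then follow as a direct corollary of the stronger convergence in $\mathbb{T}$.
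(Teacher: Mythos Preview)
Your high-level strategy---drift/martingale decomposition of~\eqref{eq:Markov}, a priori bounds via Assumption~\ref{ass:A2} and the near-monotonicity of the empirical loss, local Lipschitzness of the drift, then Doob plus Gr\"onwall---matches the paper's route through Theorem~\ref{thm:cv-L2x} and the proof in Section~\ref{sec:proofsCV}. The gap is in your final paragraph, where you address the passage from $L^2$-type control to the uniform-in-$x$ statement~\eqref{eq:cv-lambda-to-0}.

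First, a factual error: $(\mathbb{T},\norm{\cdot}_{\mathrm{TV}})$ is \emph{not} separable (the paper says so explicitly in Section~\ref{sec:L2}), so ``the chain is $\mathbb{T}$-valued and $\mathbb{T}$ is separable'' cannot carry the Doob argument. The paper instead embeds $\mathbb{T}$ in the separable Hilbert space $L^2_{\mathbf{x}}$ and runs the martingale/tightness machinery there; the TV norm is used only for a priori bounds (Equations~\eqref{eq:increments-TV}--\eqref{eq:totalVarBounded}), not as the topology of convergence.

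Second, even after obtaining convergence in $L^2_{\mathbf{x}}$, the upgrade to the sup norm is not automatic and your sketch does not supply the missing mechanism. The paper's device is the decomposition $\hat F^\lambda = \hat F^{\lambda+}-\hat F^{\lambda-}$ into \emph{monotone} components in $\mathbb{T}^+$: this is what makes $\overline{\mathbb{T}}^+$ a proper metric space (Proposition~\ref{prop:L2-T+}\,\textit{(i)}, giving tightness for free) and, crucially, what allows the P\'olya-type statement Proposition~\ref{prop:L2-T+}\,\textit{(ii)} turning $L^2$ convergence into uniform convergence---\emph{provided} the limit $\hat F_t^\pm$ is continuous in $x$. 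That continuity is a separate nontrivial input (Proposition~\ref{prop:regularity}, via Lemma~\ref{lem:regularity}) and does not fall out of the ODE argument alone. The final step in the paper's proof of Theorem~\ref{thm:cv-lambda-to-0} also exploits that $t\mapsto \hat F^{\lambda\pm}_{[t/\lambda]}(x)$ is nondecreasing to interpolate uniform-in-$t$ control from a finite subdivision. None of the three ingredients---the $\pm$ decomposition, the spatial continuity of the limit, and the P\'olya/monotonicity sandwich---appears in your outline, and without them the step from ``pointwise in $x$'' to ``uniform in $x$'' is not justified.
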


The proof is postponed to \Cref{sec:proofsCV}, and is deduced from a more specific but technical result (\Cref{thm:cv-L2x}) that we will state in \Cref{sec:CV}.
Note that, due to their path regularities, the processes in Equation~\eqref{eq:cv-lambda-to-0} are separable and we can freely restrict the supremum to rational values so as to avoid measurability issues.

For $\lambda>0$, the rescaled sequence $(\hat F_{[t/\lambda]}^\lambda)_{t\geq 0}$ defines a c\`adl\`ag stochastic process. At times $\lambda m$, $m\geq 1$, jumps occur as randomized trees are added to the model.  Theorem~\ref{thm:cv-lambda-to-0} states that both jumps and randomness disappear in the vanishing-learning-rate asymptotic. We call the limit process $(\hat F_t)_{t\geq 0}$ the  \textit{infinitesimal gradient boosting process}. 
We will additionally argue that the variations from the deterministic limit are normal and of order $\sqrt{\lambda}$ as $\lambda\to 0$.
More precisely, we will prove the following theorem as a by-product of the proof of \Cref{thm:cv-lambda-to-0}, where $L^2_{\mathbf{x}}$ is an adequate function space that will be properly defined in \Cref{sec:T-space}.
\begin{theorem} \label{thm:second-order}
   Under \Cref{ass:A}, for all $T>0$,
  \[
    \sup_{t\in[0,T]}\norm{\hat{F}^{\lambda}_{[t/\lambda]} - \hat{F}_t}_{L^2_{\mathbf{x}}} \;= \; O_{\mathbb{P}}(\sqrt{\lambda}),
  \]
where $O_{\mathbb{P}}$ is the standard notation for  ``big O in probability''.
\end{theorem}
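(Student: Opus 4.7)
The plan is to sharpen the proof of \Cref{thm:cv-lambda-to-0} by tracking explicit $L^2_{\mathbf{x}}$-estimates for its two sources of error. Writing $h(F) := \E[\widetilde T(\cdot;F,\zeta)]$ for the mean softmax gradient tree, Equation~\eqref{eq:Markov} rewrites in drift-plus-martingale form
\[
\hat F_{m+1}^\lambda = \hat F_m^\lambda + \lambda\, h(\hat F_m^\lambda) + \lambda\, \Delta M_{m+1}^\lambda,\qquad \Delta M_{m+1}^\lambda := \widetilde T(\cdot;\hat F_m^\lambda,\zeta_{m+1}) - h(\hat F_m^\lambda),
\]
where $\Delta M_{m+1}^\lambda$ is a centered, $L^2_{\mathbf{x}}$-valued martingale increment conditional on $\hat F_m^\lambda$. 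Since by \Cref{thm:EDO} the limit satisfies $\dot{\hat F_t} = h(\hat F_t)$, telescoping the discrete sum against the integral gives
\[
\hat F_{[t/\lambda]}^\lambda - \hat F_t \;=\; \lambda\sum_{m=0}^{[t/\lambda]-1}\bigl(h(\hat F_m^\lambda) - h(\hat F_{m\lambda})\bigr) \;+\; N_t^\lambda \;+\; R_t^\lambda,
\]
with $N_t^\lambda := \lambda\sum_{m=0}^{[t/\lambda]-1} \Delta M_{m+1}^\lambda$ and a remainder $\|R_t^\lambda\|_{L^2_{\mathbf{x}}} = O(\lambda)$ collecting the Riemann-sum and endpoint errors, controlled using that $s\mapsto h(\hat F_s)$ stays bounded along the trajectory.

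For the martingale term, orthogonality of $L^2_{\mathbf{x}}$-valued increments together with a uniform second-moment bound $\E[\|\Delta M_{m+1}^\lambda\|_{L^2_{\mathbf{x}}}^2\mid \hat F_m^\lambda] \leq C$---which follows from \Cref{ass:A2} because the leaf values \eqref{eq:gradient-tree-line-search-approximation} are bounded on any sublevel set of the training loss, a set that $(\hat F_m^\lambda)$ never leaves by monotonicity of the boosting iterations---yield
\[
\E\bigl\|N_t^\lambda\bigr\|_{L^2_{\mathbf{x}}}^2 \;\leq\; \lambda^2\,[T/\lambda]\,C \;=\; O(\lambda).
\]
Since $L^2_{\mathbf{x}}$ is a (finite-dimensional) Hilbert seminorm, Doob's maximal inequality applied coordinate-wise upgrades this to $\sup_{t\in[0,T]}\|N_t^\lambda\|_{L^2_{\mathbf{x}}} = O_{\mathbb{P}}(\sqrt{\lambda})$.

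For the drift error, \Cref{prop:reg-tree-Lipschitz} provides a Lipschitz constant $L_h$ for $F\mapsto h(F)$ in the $L^2_{\mathbf{x}}$-norm on the invariant sublevel set. Setting $\epsilon_m^\lambda := \|\hat F_m^\lambda - \hat F_{m\lambda}\|_{L^2_{\mathbf{x}}}$, the triangle inequality yields
\[
\epsilon_{[t/\lambda]}^\lambda \;\leq\; L_h\,\lambda \sum_{m=0}^{[t/\lambda]-1} \epsilon_m^\lambda \;+\; \sup_{s\in[0,T]}\|N_s^\lambda\|_{L^2_{\mathbf{x}}} \;+\; O(\lambda),
\]
and a discrete Gronwall lemma closes the bootstrap as $\sup_{t\in[0,T]} \epsilon_{[t/\lambda]}^\lambda \leq e^{L_h T}\bigl(\sup_{s\in[0,T]}\|N_s^\lambda\|_{L^2_{\mathbf{x}}} + O(\lambda)\bigr) = O_{\mathbb{P}}(\sqrt{\lambda})$. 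Combined with the $O(\lambda)$-Lipschitz continuity of $t\mapsto \hat F_t$ between consecutive grid points, this delivers the claimed rate.

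The main obstacle will be making the constants $L_h$ and $C$ genuinely uniform in $m$. \Cref{prop:reg-tree-Lipschitz} furnishes only local control, so one must first establish that, with probability tending to one, the whole trajectory $(\hat F_m^\lambda)_{m\lambda\leq T}$ stays in a fixed bounded set. This is where monotonicity of the training loss under gradient boosting is crucial: it pins the dynamics inside a compact sublevel set on which the Lipschitz and variance constants are uniform, and a standard stopping-time localization then turns this a priori confinement into the stated conclusion.
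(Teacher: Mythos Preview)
Your approach is essentially the paper's: decompose the recursion into drift plus centered martingale increments, bound the martingale part at order $\sqrt{\lambda}$ via orthogonality and Doob, then close with the Lipschitz property of $\mathcal{T}$ and Gronwall. Two inaccuracies should be fixed. First, $L^2_{\mathbf{x}}=L^2([0,1]^p,\nu_{\mathbf{x}})$ is \emph{not} finite-dimensional; the maximal inequality you need is Doob's inequality for square-integrable martingales in a separable Hilbert space (the paper cites M\'etivier for this), which applies directly without any coordinate-wise argument. Second, \Cref{prop:reg-tree-Lipschitz} is about the mean softmax \emph{regression} tree as a map $(r_i)\in\mathbb{R}^n\to\mathbb{B}$, not about $h=\mathcal{T}$; the Lipschitz property you actually use is \Cref{prop:boosting-operator-Lipschitz}, and its extension to $L^2_{\mathbf{x}}$ is spelled out in the proof of \Cref{thm:cv-L2x}~\textit{(i)}. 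With these corrections your argument coincides with the paper's proof in Section~\ref{sec:CLT}, the only cosmetic difference being that the paper writes the drift error as a continuous integral $\int_0^t(\mathcal{T}(\hat F^\lambda_{[s/\lambda]})-\mathcal{T}(\hat F_s))\,\rmd s$ and applies continuous Gr\"onwall, whereas you use the discrete sum and discrete Gronwall.
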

A more precise derivation of the second-order variations around the limit goes beyond the scope of the present paper: we expect it would require stronger assumptions than \Cref{ass:A} and careful arguments to tackle the inevitable technical difficulties related to infinite-dimensional diffusion processes that should arise.
The proof of \Cref{thm:second-order}, and more detailed heuristics for these second-order variations, shall be developed in \Cref{sec:CLT}.

Our second main result is the characterization of the infinitesimal gradient boosting process as the solution of a differential equation in the Banach space $\mathbb{B}=\mathbb{B}([0,1]^p,\mathbb{R}^d)$. We first define the infinitesimal boosting operator that drives the dynamics.
\begin{definition}\label{def:boosting-operator}
Let $(x_i)_{1\leq i\leq n}$ be fixed. The infinitesimal boosting operator $\mathcal{T}:\mathbb{B}\to\mathbb{B}$ is defined by
\[
\mathcal{T}(F)(x)=\mathbb{E}_\zeta[\widetilde{T}(x;F,\zeta)], \quad x\in[0,1]^p, F\in\mathbb{B},
\]
where $\widetilde{T}(\,\cdot\,;F,\zeta)$ denotes the softmax gradient tree from Definition~\ref{def:softmax-gradient-tree}.
\end{definition}
In words, the infinitesimal boosting operator at $F$ is the expectation of the softmax gradient tree used in gradient boosting when updating the predictor $F$. In general, $\mathcal{T}$ is a non linear operator. It implicitly depends on the sample $(x_i,y_i)_{1\leq i\leq n}$, on the loss function $L$ and on the parameter $(\beta,K,d)$ used for the softmax regression trees. 

\begin{theorem}\label{thm:EDO}
Suppose Assumption~\ref{ass:A} is satisfied. In the Banach space $\mathbb{B}$, consider the differential equation 
\begin{equation}\label{eq:ODE}
F'(t)=\mathcal{T}(F(t)),\quad t\geq 0.
\end{equation}
The following properties are satisfied:
\begin{enumerate}[(i)]
\item For all  $F_0\in\mathbb{B}$, Equation~\eqref{eq:ODE} admits a unique solution defined on  $[0,\infty)$ and started at $F_0$;
\item The infinitesimal gradient boosting process $(\hat F_t)_{t\geq 0}$ is  the solution of~\eqref{eq:ODE}  started at $F_0\equiv \argmin_{z\in\mathbb{R}}\frac{1}{n} \sum_i L(y_i,z)$.
\end{enumerate}
\end{theorem}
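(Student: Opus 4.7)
The plan is to deduce both parts from two ingredients: the local Lipschitz continuity of $\mathcal{T}:\mathbb{B}\to\mathbb{B}$, which is inherited from the softmax regularization formalized in Proposition~\ref{prop:reg-tree-Lipschitz}, and the convergence of $\hat F^\lambda$ to the infinitesimal gradient boosting process provided by Theorem~\ref{thm:cv-lambda-to-0}. Part (i) will follow from Picard--Lindel\"of in the Banach space $\mathbb{B}$, together with an a priori bound ruling out blow-up in finite time. Part (ii) will follow by rewriting the Markov recursion~\eqref{eq:Markov} as a discrete analogue of~\eqref{eq:ODE} with a vanishing martingale remainder, and then identifying the limit via the uniqueness from part (i).

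For part (i), I first verify that $\mathcal{T}$ is locally Lipschitz. Given the partition, the leaf values $\tilde r(A_k)$ in~\eqref{eq:gradient-tree-line-search-approximation} are smooth functions of the finite vector $(F(x_i))_{1\leq i\leq n}$ by Assumption~\ref{ass:A1}, and the softmax mixture of partitions depends Lipschitz-continuously on the residuals by Proposition~\ref{prop:reg-tree-Lipschitz}. Since $\mathcal{T}(F)$ depends on $F$ only through its values at the training points, this immediately gives a Lipschitz bound $\|\mathcal{T}(F)-\mathcal{T}(G)\|_\infty\leq C_R\|F-G\|_\infty$ on balls $\|F\|_\infty,\|G\|_\infty\leq R$, so Picard--Lindel\"of yields a unique maximal solution for any $F_0\in\mathbb{B}$. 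To extend it to $[0,\infty)$, I monitor the training loss $\ell(F(t))=\frac{1}{n}\sum_i L(y_i,F(t)(x_i))$: differentiating along the flow and exploiting the Newton-step form of $\tilde r(A_k)$ yields, leaf by leaf and before taking $\mathbb{E}_\zeta$, the nonpositive contribution $-\bigl(\sum_{i:x_i\in A_k}\partial_z L(y_i,F(x_i))\bigr)^2/\sum_{i:x_i\in A_k}\partial_z^2 L(y_i,F(x_i))$, so $\ell\circ F$ is non-increasing. Hence $L(y_i,F(t)(x_i))\leq n\ell(F_0)$, and Assumption~\ref{ass:A2} uniformly bounds $|\tilde r(A_k)|$, so $\|\mathcal{T}(F(t))\|_\infty$ remains bounded and the solution is global.

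For part (ii), telescoping~\eqref{eq:Markov} gives
\[
  \hat F_{[t/\lambda]}^\lambda=\hat F_0^\lambda+\lambda\sum_{m=0}^{[t/\lambda]-1}\mathcal{T}(\hat F_m^\lambda)+M_{[t/\lambda]}^\lambda,
\]
where $M^\lambda$ is the martingale of centered tree increments scaled by $\lambda$. Each increment is uniformly bounded by the same level-set argument as in part (i), so Doob's inequality yields $\sup_{t\leq T}|M_{[t/\lambda]}^\lambda(x)|=O_{\mathbb{P}}(\sqrt\lambda)$ at each training point, and the piecewise-constant structure of softmax trees over axis-aligned rectangles transfers this control to the sup norm over $[0,1]^p$. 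Combining with Theorem~\ref{thm:cv-lambda-to-0} and the continuity of $\mathcal{T}$, I pass to the limit $\lambda\to 0$ to obtain $\hat F_t=F_0+\int_0^t\mathcal{T}(\hat F_s)\,\mathrm{d}s$; continuity of $s\mapsto\mathcal{T}(\hat F_s)$ upgrades this to the strong ODE~\eqref{eq:ODE}, and by part (i) $\hat F$ is its unique solution with the prescribed initial condition~\eqref{eq:boosting-init}.

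The main technical obstacle is the sup-norm control of the martingale, since $\mathbb{B}$ is non-separable. The piecewise-constant structure of randomized trees makes this essentially a finite-dimensional issue, but a clean treatment is more naturally carried out in the separable space $\mathbb{T}$ of tree functions constructed in Section~\ref{sec:T-space}, which is precisely the route taken by the authors via Theorem~\ref{thm:cv-L2x}.
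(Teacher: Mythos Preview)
Your argument for part~(i) is correct and coincides with the paper's: local Lipschitz continuity of $\mathcal{T}$ on $(\mathbb{B},\|\cdot\|_\infty)$ (Proposition~\ref{prop:boosting-operator-Lipschitz} and Remark~\ref{rk:lipschitz}), together with the monotonicity of the training loss along the flow and the bound from Assumption~\ref{ass:A2}, yields global existence exactly as in the proof of Theorem~\ref{thm:EDO-T}.

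For part~(ii), the overall strategy---telescoping, martingale remainder, passage to the limit---is the same mechanism the paper uses inside the proof of Theorem~\ref{thm:cv-L2x}. However, your sup-norm control of $M^\lambda$ contains a genuine gap. The claim that the ``piecewise-constant structure of softmax trees over axis-aligned rectangles transfers this control to the sup norm'' does not hold: each individual tree is piecewise constant on $2^d$ leaves, but $M^\lambda_{[t/\lambda]}$ is a sum of order $1/\lambda$ trees with \emph{independent random} partitions, so the superposition lives on order $2^d/\lambda$ cells whose geometry is not determined by the training points. Bounding $M^\lambda$ at the $x_i$ says nothing about its oscillation in between. You correctly flag the difficulty at the end, but the transfer step as written is false.

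Two clean repairs are available. The paper's route is to carry out the martingale argument in the separable Hilbert space $L^2_{\mathbf{x}}$ (where Doob applies directly), work with the decomposition $\hat F^{\lambda\pm}$ so that the limits $\hat F_t^\pm$ lie in $\mathbb{T}^+$, and then upgrade differentiability of $t\mapsto\hat F_t^\pm$ from $L^2_{\mathbf{x}}$ to $\mathbb{B}$ via the P\'olya-type statement Proposition~\ref{prop:L2-T+}~\textit{(ii)}, using the continuity of $\mathcal{T}^\pm(\hat F_t)$ from Proposition~\ref{prop:regularity}. A second, more elementary fix stays closer to your outline: apply scalar Doob at \emph{every} $x\in[0,1]^p$, not only at the $x_i$ (the increments are uniformly $O(\lambda)$ by Lemma~\ref{lem:control-increment}), to get $M^\lambda_{[t/\lambda]}(x)\to 0$ in probability pointwise. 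Since Theorem~\ref{thm:cv-lambda-to-0} already gives $\hat F^\lambda_{[t/\lambda]}\to\hat F_t$ in sup norm and the Lipschitz property gives $\lambda\sum_m\mathcal{T}(\hat F^\lambda_m)\to\int_0^t\mathcal{T}(\hat F_s)\,\rmd s$ in sup norm, the difference $M^\lambda$ converges in sup norm to the deterministic function $\hat F_t-F_0-\int_0^t\mathcal{T}(\hat F_s)\,\rmd s$, which must then vanish identically by the pointwise limit. This sidesteps sup-norm martingale estimates entirely.
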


The proof is postponed to \Cref{sec:proofsCV}.
Theorem~\ref{thm:EDO} reveals the dynamics associated with infinitesimal gradient boosting. In Section~\ref{sec:inf-gb}, we will state stronger version of this result (\Cref{thm:cv-L2x}) in the space of tree functions and study more precisely the properties of the infinitesimal gradient boosting process in terms of regularity and long time behavior.
In particular, it should be noted that the map $\mathcal{T}:\mathbb{B}\to\mathbb{B}$ has some properties akin to that of gradient fields: solutions to \eqref{eq:ODE} have a non-increasing training error $\sum_iL(y_i,F_t(x_i))$ (see \Cref{prop:properties}), and under some technical assumptions this error tends to $0$ as $t\to\infty$ (see \Cref{prop:igb-asymptotic}).

\begin{remark}
Let us stress that this paper focuses on gradient boosting  with a fixed learning rate -- i.e. $\lambda$  does not depend on $m$ in Equation~\eqref{eq:Markov} -- and considers the asymptotic when  $\lambda\to 0$ and time is accelerated  by a factor $1/\lambda$. This regime is very similar to the one used in the discretization of ODEs and Theorems~\ref{thm:cv-lambda-to-0} and~\ref{thm:EDO} can be interpreted as stochastic versions of the convergence of the Euler scheme associated with Equation~\eqref{eq:ODE}. Different regimes are also considered in the literature on gradient boosting and stochastic gradient descent, where the learning rate / step size depends on time, i.e. $\lambda$ is replaced by $\lambda_m\to 0$ in Equation~\eqref{eq:Markov} --  see \cite{BC21} for instance for a convergence result in this different setting.
\end{remark}

\section{Properties of softmax regression trees} \label{sec:formal-softmax-trees}
\subsection{The distribution of softmax regression trees}
We propose now a formal definition  of the softmax regression tree with parameter $(\beta,K,d)$ considered in Definition~\ref{def:softmax-reg-tree} and first set up some notation. 

The binary rooted tree with depth $d\geq 1$ (from graph theory) is defined on the vertex set $\mathscr{T}_d=\cup_{l=0}^d \{0,1\}^l$. A vertex $v\in \{0,1\}^l$ is   seen as a word of size $l$ in the letters $0$ and $1$. The empty word  $v=\emptyset$ corresponds to the tree root. The vertex set is divided into the internal nodes $v\in \mathscr{T}_{d-1}$ and the terminal nodes $v\in\{0,1\}^d$, also called leaves. Each internal node $v$ has two children denoted $v0$ and $v1$ (concatenation of words) while the terminal nodes have no offspring. 

A regression tree is encoded by a \textit{splitting scheme} 
\[
\xi=(j_v,u_v)_{v\in\mathscr{T}_{d-1}}\in ([\![1,p]\!]\times (0,1))^{\mathscr{T}_{d-1}}
\]
giving the splits at each internal node, and its leaf values $(\bar r_v)_{v\in\{0,1\}^d}$. The splitting scheme $\xi$ allows to associate to each vertex $v\in\mathscr{T}_{d}$ a region $A_v=A_v(\xi)$ defined recursively by $A_\emptyset=[0,1]^p$ and, for $v\in\mathscr{T}_{d-1}$, 
\begin{align*}
    &A_{v0} = A_v \cap \{x\in[0,1]^p\ :\ x^{j_{v}} < a_v+u_v(b_v-a_v)\},\\
 &A_{v1} = A_v \cap \{x\in[0,1]^p\ :\ x^{j_{v}} \geq a_v+u_v(b_v-a_v)\},
  \end{align*}
with $a_v=\inf_{x\in A_v} x^{j_v}$ and $b_v=\sup_{x\in A_v} x^{j_v}$. Note that $A_v$ depends only on the splits attached to the ancestors of $v$. For each level $l=0,\ldots,d$,  $(A_v)_{v\in \{0,1\}^l}$ is a partition   of $[0,1]^p$ into $2^l$ hypercubes. The leaf values are then
\[
\bar r_v=\bar r(A_v)=\frac{1}{n(A_v)}\sum_{i:x_i\in A_v} r_i.
\]
Finally, the tree function associated with the sample $(x_i,r_i)_{1\leq i\leq n}$ and  splitting scheme $\xi=(j_v,u_v)_{v\in\mathscr{T}_{d-1}}$ is the piecewise constant function
\begin{equation}\label{eq:reg-tree}
T(x;(x_i,r_i)_{1\leq i\leq n},\xi)=\sum_{v\in\{0,1\}^d}\bar r(A_v)\1_{A_v}(x).
\end{equation}
See Figure~\ref{fig:muT1} for an illustration of a splitting scheme with depth two in one dimension, and the tree function associated with it.
Figure~\ref{fig:muT2} shows a two-dimensional splitting scheme with depth two, and the associated partition of $[0,1]^2$ into four leaves.

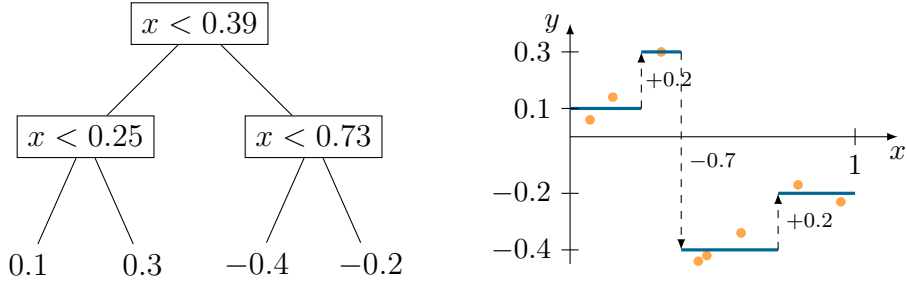
\begin{figure}[ht] \centering
  \begin{tikzpicture}[level 1/.style={sibling distance=2cm},
  level 2/.style={sibling distance=1cm},
  level distance=1cm,scale=1.5]
  \coordinate (root) at (0,0.3);
  \draw (root) node[draw] {$x < 0.39$}
  child { node[draw] {$x < 0.25$}
    child {node[yshift=-2mm,align=center] {$0.1$}  edge from parent}
    child {node[yshift=-2mm,align=center] {$0.3$}}
  } 
  child { node[draw] {$x < 0.73$}
    child {node[yshift=-2mm,align=center] {$-0.4$}}
    child {node[yshift=-2mm,align=center] {$-0.2$} edge from parent}
  };
  \begin{scope}[xshift=3.25cm,yshift=-.7cm,scale=2.5]
  \begin{scope}[color=orange!70]
    \fill (.07,.06) circle (0.17mm);
    \fill (.15,.14) circle (0.17mm);
    \fill (.32,.3) circle (0.17mm);
    \fill (.45,-.44) circle (0.17mm);
    \fill (.48,-.42) circle (0.17mm);
    \fill (.6,-.34) circle (0.17mm);
    \fill (.8,-.17) circle (0.17mm);
    \fill (.95,-.23) circle (0.17mm);
  \end{scope} \small
  \draw[->,>=latex] (0,-.45) -- (0,.4) node[left] {$y$};
  \draw[->,>=latex] (0,0) -- (1.15,0) node[below] {$x$};
  \draw[very thick,MidnightBlue] (0,.1) -- (.25,.1) (.25,.3) -- (.39,.3)
                            (.39,-.4) -- (.73,-.4) (.73,-.2) -- (1,-.2);
  \draw (-.03,.1) node[left] {$0.1$} -- (.03,.1);
  \draw (-.03,.3) node[left] {$0.3$} -- (.03,.3);
  \draw (-.03,-.4) node[left] {$-0.4$} -- (.03,-.4);
  \draw (-.03,-.2) node[left] {$-0.2$} -- (.03,-.2);
  \draw (1,-.03) node[below] {$1$} -- (1,.03);
  {\scriptsize
  \draw[->,>=latex,dashed] (.25,.1) -- (.25,.3) node[midway,right] {\!$+0.2$};
  \draw[->,>=latex,dashed] (.39,.3) -- (.39,-.4) node[pos=.55,right] {$-0.7$};
  \draw[->,>=latex,dashed] (.73,-.4) -- (.73,-.2) node[midway,right] {$+0.2$};
  }
  \end{scope}
  \end{tikzpicture}
\caption{%
A regression tree function $T$ with $d=2$ and $p=1$.
The data points are represented by the orange dots on the right.
The splitting scheme is represented on the left and corresponds to $\xi=(j_v,u_v)_{v\in\mathscr{T}_{d-1}}$ given by: $j_v=1$ for all $v$, and  $u_{\emptyset}=0.39$, $u_{0}=0.25/0.39\approx 0.64$ and $u_{1}=(0.73-0.39)/(1-0.39)\approx 0.56$.
The leaf values are below the corresponding leaves.
The corresponding function $T:[0,1]\to\mathbb{R}$ is represented on the right.}
\label{fig:muT1}
\end{figure}

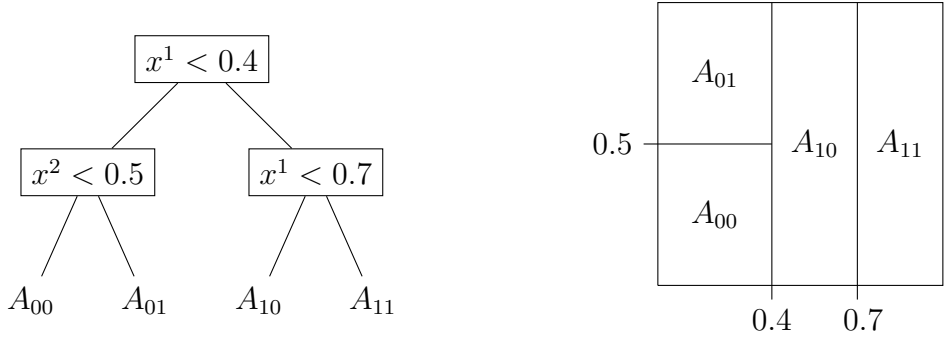
\begin{figure}[ht] \centering
  \begin{tikzpicture}[level 1/.style={sibling distance=2cm},
  level 2/.style={sibling distance=1cm},
  level distance=1cm,scale=1.5]
  \coordinate (root) at (0,0.3);
  \draw (root) node[draw] {$x^{1} < 0.4$}
  child { node[draw] {$x^{2} < 0.5$}
    child {node[yshift=-2mm,align=center] {$A_{00}$}}
    child {node[yshift=-2mm,align=center] {$A_{01}$}}
  } 
  child { node[draw] {$x^{1} < 0.7$}
    child {node[yshift=-2mm,align=center] {$A_{10}$}}
    child {node[yshift=-2mm,align=center] {$A_{11}$}}
  };
  \begin{scope}[xshift=4cm,yshift=-1.7cm,scale=2.5]
  \draw (0,0) -- (0,1) -- (1,1) -- (1,0) -- (0,0);
  \draw (.4,-.05) node[below]{$0.4$} -- (.4,1);
  \draw (.7,-.05) -- (.7,1);
  \draw (-.05,.5) node[left]{$0.5$} -- (.4,.5);
  \node[below] at (.72,-0.05) {$0.7$};
  \node at (0.2,0.25) {$A_{00}$};
  \node at (0.2,0.75) {$A_{01}$};
  \node at (0.55,0.5) {$A_{10}$};
  \node at (0.85,0.5) {$A_{11}$};
  \end{scope}
  \end{tikzpicture}
  \caption{ A tree function $T$ with $d=p=2$. The splitting scheme is represented on the left and corresponds to $\xi=(j_v,u_v)_{v\in\mathscr{T}_{d-1}}$ given by $(j_{\emptyset},u_{\emptyset})=(1,0.4)$, $(j_{0},u_{0})=(2,0.5)$ and $(j_{1},u_{1})=(1,0.5)$.
  The induced partition $(A_v)_{v\in\{0,1\}^d}$ of $[0,1]^2$ is represented on the right.}
  \label{fig:muT2}
\end{figure}

Softmax regression trees are randomized trees and we next describe their distribution. In view of the previous discussion, it is enough to  give the distribution of the associated splitting scheme. Let $P_0$ be the distribution of the splitting scheme of a totally randomized tree, that is the splits $(j_v,u_v)_{v\in\mathscr{T}_{d-1}}$ are  i.i.d.  uniformly distributed on $[\![1,p]\!]\times (0,1)$ under $P_0$. The candidate splits that appear during the procedure can be gathered into candidate splitting schemes  $\xi^1,\ldots,\xi^K$ assumed i.i.d. with distribution $P_0$. A splitting scheme $\xi$ is constructed by selection among the candidate splits, so that  $\xi=(j_v^{\phi(v)},u_v^{\phi(v)})_{v\in\mathscr{T}_{d-1}}=:\xi^\phi$ for some selection map $\phi:\mathscr{T}_{d-1}\to [\![1,K]\!]$ specifying which split is selected at each internal node.  According to the softmax selection rule, the distribution of $\xi$ given $\xi^1,\ldots,\xi^K$ is given by
\begin{align}
&\P(\xi=(j_v^{\phi(v)},u_v^{\phi(v)})_{v\in\mathscr{T}_{d-1}}\mid \xi^1,\ldots,\xi^K)\nonumber\\
&=\prod_{v\in \mathscr{T}_{d-1}} \big(\mathrm{softmax}_\beta( \Delta(s_v^{k},u_v^{k}; A_v(\xi^\phi))_{1\leq k\leq K})\big)_{\phi(v)}.\label{eq:law-of-xi}
\end{align}
In this equation, the factor indexed by the internal node $v$ corresponds to the probability that the region $A_v(\xi)$ is split  according to $(j_v^{\phi(v)},u_v^{\phi(v)})$ among the candidates $(j_v^{k},u_v^k)$, $1\leq k\leq K$. It should be noted that $A_v(\xi)$ only depends on the splits $(j_{v'},u_{v'})$ attached to the ancestors  $v'$ of $v$ which  implies that $(A_v(\xi))_{v\in\mathscr{T}_{d-1}}$ enjoys a branching Markov property which naturally corresponds to the recursive procedure described above.

We next deduce a characterization of the distribution  of the splitting scheme $\xi$, which, in view of Equation~\eqref{eq:reg-tree}, characterizes the distribution of the softmax regression tree. The distribution depends on the parameter $(\beta,K,d)$ and also on the input $(x_i,r_i)_{1\leq i\leq n}$ but  we only write $P_{\beta,K}$ for the sake of readability.

\begin{proposition}\label{prop:RN-derivative}
Let $(x_i,r_i)_{1\leq i\leq n}$ and $(\beta,K,d)$ be fixed. We denote by $P_{\beta,K}$ the distribution  of the splitting scheme $\xi$ associated with the softmax regression tree with parameter $(\beta,K,d)$ grown on the sample $(x_i,r_i)_{1\leq i\leq n}$. Then $P_{\beta,K}$ is absolutely continuous with respect to $P_0 $ with Radon-Nykodym derivative
\begin{equation}\label{eq:RN}
\frac{\rmd P_{\beta,K}}{\rmd P_0}(\xi)= 
\int \prod_{v\in \mathscr{T}_{d-1}}  \frac{\exp(\beta \Score(j_v^1,u_v^1; A_v(\xi)))}{K^{-1}\sum_{k=1}^K \exp(\beta \Score(j_v^{k},u_v^{k}; A_v(\xi)))} P_0(\rmd \xi^2)\cdots P_0(\rmd \xi^K),
\end{equation}
with $\xi^k=(j_v^k,u_v^k)_{v\in\mathscr{T}_{d-1}}$, $2\leq k\leq K$, and for $k=1$, we take $\xi^1=(j_v^1,u_v^1)_{v\in\mathscr{T}_{d-1}}=\xi$.
\end{proposition}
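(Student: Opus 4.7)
The plan is to compute the marginal law of $\xi$ directly from its conditional description~\eqref{eq:law-of-xi} and the i.i.d.\ structure of the candidate splitting schemes $\xi^1,\ldots,\xi^K\sim P_0$. For any measurable set $B$, expanding over the selection map,
\[
  P_{\beta,K}(B) = \sum_{\phi}\int \1_B(\xi^\phi)\prod_{v\in\mathscr{T}_{d-1}}\frac{\exp(\beta\Score(j_v^{\phi(v)},u_v^{\phi(v)};A_v(\xi^\phi)))}{\sum_{k=1}^K \exp(\beta\Score(j_v^k,u_v^k;A_v(\xi^\phi)))}\prod_{k=1}^K P_0(\rmd\xi^k),
\]
where $\phi$ ranges over all maps $\mathscr{T}_{d-1}\to[\![1,K]\!]$; the goal is to rewrite the right-hand side as $\int_B(\cdots)\,P_0(\rmd\xi)$ for the claimed density.

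The key step is a symmetry argument. Under $P_0^{\otimes K}$, the family $\bigl((j_v^k,u_v^k)\bigr)_{v\in\mathscr{T}_{d-1},\,1\leq k\leq K}$ is i.i.d., so for each fixed $\phi$ the \emph{per-node transposition} $S_\phi$ that, at every internal node $v$, swaps $(j_v^1,u_v^1)$ with $(j_v^{\phi(v)},u_v^{\phi(v)})$ leaves the joint law invariant. In the swapped coordinates $\tilde\xi^k$, the selected scheme $\xi^\phi$ becomes $\tilde\xi^1$, whence $A_v(\xi^\phi)=A_v(\tilde\xi^1)$ at every $v$; the numerator at $v$ transforms into $\exp(\beta\Score(\tilde j_v^1,\tilde u_v^1;A_v(\tilde\xi^1)))$; and the denominator, being a sum over all $k$, is invariant under the transposition $1\leftrightarrow\phi(v)$.

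Once this invariance is applied, every summand in the sum over $\phi$ takes the same value, so the sum contributes only a combinatorial factor $K^{|\mathscr{T}_{d-1}|}$. This factor is exactly absorbed by rewriting each of the $|\mathscr{T}_{d-1}|$ denominators as $K\cdot(K^{-1}\sum_k)$, and Fubini applied to the remaining integrals against $P_0(\rmd\xi^2)\cdots P_0(\rmd\xi^K)$ identifies the bracketed quantity as the Radon-Nikodym derivative with respect to $P_0(\rmd\xi^1)=P_0(\rmd\xi)$, which is precisely~\eqref{eq:RN}.

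The main point requiring care is the invariance under $S_\phi$: although it acts differently at different nodes, the i.i.d.\ structure across both $v$ and $k$ ensures preservation of the joint law, and one must track that $A_v(\xi^\phi)$, which depends only on the splits attached to strict ancestors of $v$ in $\xi^\phi$, indeed becomes $A_v(\tilde\xi^1)$ in the new coordinates. Everything else is routine combinatorial bookkeeping.
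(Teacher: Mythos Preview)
Your proposal is correct and follows essentially the same approach as the paper's own proof: both expand $P_{\beta,K}(B)$ as a sum over selection maps $\phi$, invoke the exchangeability of the candidate splits under $P_0^{\otimes K}$ to reduce every summand to the $\phi\equiv 1$ term, and then collect the resulting combinatorial factor $K^{|\mathscr{T}_{d-1}|}=K^{2^d-1}$ into the normalizing constants to identify the density~\eqref{eq:RN}. Your write-up is slightly more explicit about the per-node transposition $S_\phi$ and the tracking of $A_v(\xi^\phi)=A_v(\tilde\xi^1)$, whereas the paper simply says ``by invariance of the product measure, all the terms in the sum are equal''; but the underlying argument is the same.
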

The proof is postponed to \Cref{sec:proofs-reg-trees}.
Note that the Radon-Nykodym derivative~\eqref{eq:RN} is bounded from above by $K^{2^d-1}$.

\begin{remark} \label{rk:zeta-xi}
In Definition~\ref{def:softmax-reg-tree}, the softmax regression tree was introduced in terms of an external randomness $\zeta$. In view of the preceding discussion, we can take $\zeta=(\xi^1,\ldots,\xi^K,\gamma)$ where $\xi^1,\ldots,\xi^K$ denote  $K$ independent splitting schemes with distribution $P_0$ giving the candidate splits  and $\gamma=(\gamma_v)_{v\in\mathscr{T}_{d-1}}$ be independent uniform random variables on $[0,1]$ used to perform the softmax selection at each internal node (using e.g.\ the probability integral transform). In the following, it will be  convenient to write 
\[
T(\,\cdot\,,(x_i,r_i)_{1\leq i\leq n},\zeta)= T(\,\cdot\,,(x_i,r_i)_{1\leq i\leq n},\xi)
\]
where the splitting scheme $\xi$ has distribution $P_{\beta,K}$ and can be seen as a (deterministic) function of $\zeta=(\xi^1,\ldots,\xi^K,\gamma)$ and $(x_i,r_i)_{1\leq i\leq n}$.
\end{remark}

\begin{remark}\label{rk:beta-to-infty}
The softmax distribution converges to the uniform distribution as $\beta\to 0$ and concentrates on the $\argmax$ set as $\beta\to+\infty$. More precisely, 
for $z\in\mathbb{R}^K$ and $I_z=\mathop{\mathrm{\argmax}}_{1\leq k\leq K} z_k$,
\[
\frac{\exp(\beta z_1)}{\sum_{k=1}^K\exp(\beta z^k)}\longrightarrow
\left\{ \begin{array} {ll}
1/K & \mbox{as $\beta \to 0$} \\
\1_{I_z}(1)/\mathrm{card}(I_z)& \mbox{as $\beta \to +\infty$}
\end{array}\right.. 
\]
We deduce readily that  $\rmd P_{\beta,K}/\rmd P_0$ has  limits when $\beta\to 0$ or $\beta\to +\infty$. By Scheff\'e's theorem, this implies convergence  in total variation of the  splitting scheme distribution. These limits   correspond  respectively to totally randomized trees  and  Extra-Trees.
\end{remark}

\begin{remark} \label{rk:K-to-infty} The law of large numbers implies that, for  $(\xi^k)_{k\geq 2}$ i.i.d. with distribution $P_0$ , 
\[
\frac{\exp(\beta \Score(j_v^1,u_v^1; A_v(\xi)))}{K^{-1}\sum_{k=1}^K \exp(\beta \Score(j_v^{k},u_v^{k}; A_v(\xi)))}
\longrightarrow 
\frac{\exp(\beta  \Score(j_v^1,u_v^1; A_v(\xi)))}{\mathbb{E}[\exp(\beta  \Score(j,u; A_v(\xi)))]} \quad \mbox{a.s.},
\]
as $K\to\infty$, with  expectation taken with respect to $(j,u)$ uniformly distributed on $[\![1,p]\!]\times (0,1)$. This limit is interpreted as an exponentially tilted distribution for the law of the split $(j_v^1,u_v^1)$ of the region $A_v(\xi)$. Proposition~\ref{prop:RN-derivative} and dominated convergence then implies, as $K\to\infty$, 
\[
\frac{\rmd P_{\beta,K}}{\rmd P_{0}}(\xi)\longrightarrow \frac{\rmd P_{\beta,\infty}}{\rmd P_{0}}(\xi):=\prod_{v\in \mathscr{T}_{d-1}} \frac{\exp(\beta  \Delta(j_v,u_v; A_v(\xi)))}{\mathbb{E}[\exp(\beta  \Delta(j,u; A_v(\xi))) ]}, 
\]
where $\xi=((j_v,u_v))_{v\in\mathscr{T}_{d-1}}$.
This limit corresponds to the distribution of the splitting scheme of a randomized regression tree  grown thanks to a binary splitting rule based on the  exponentially tilted distribution. Furthermore, letting $\beta\to+\infty$, the exponentially tilted distribution concentrates on the $\argmax$ set so that  Breiman's regression trees are recovered.
\end{remark}

\subsection{A crucial Lipschitz property}
Our motivation for softmax regression trees is the regularity property of the mean tree defined by
\begin{align*}
\bar T_{\beta,K,d}(x; (x_i,r_i)_{1\leq i\leq n})&=\mathbb{E}_\zeta[T(x;(x_i,r_i)_{1\leq i\leq n},\zeta)]\\
&=\int T(x; (x_i,r_i)_{1\leq i\leq n},\xi) P_{\beta,K}(\rmd \xi).
\end{align*}
See Remark~\ref{rk:zeta-xi} for the relationship between $\zeta$ and $\xi$ and  recall that $P_{\beta,K}$ depends on $(x_i,r_i)_{1\leq i\leq n}$. This will be handled with care in the proof with more explicit notation.

\begin{proposition}\label{prop:reg-tree-Lipschitz}
Let $(x_i)_{1\leq i\leq n}\in [0,1]^p$ and $(\beta,K,d)$ be fixed. The mean softmax regression tree is locally Lipschitz in its input $(r_i)_{1\leq i\leq n}$, i.e 
\[
\left\{\begin{array}{lll}
(\mathbb{R}^n,\norm{\cdot}_{\infty})&\longrightarrow&(\mathbb{B},\norm{\cdot}_{\infty})\\
(r_i)_{1\leq i\leq n}&\mapsto &\bar T_{\beta,K,d}(\,\cdot\,;(x_i,r_i)_{1\leq i\leq n})
\end{array}\right.
\]
 is locally Lipschitz.
\end{proposition}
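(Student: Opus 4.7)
The plan is to use Proposition~\ref{prop:RN-derivative} to rewrite the mean tree as an integral against the $r$-independent reference measure $P_0$,
\[
  \bar T_{\beta,K,d}(x;(x_i,r_i)_{1\le i\le n}) \;=\; \int T(x;(x_i,r_i)_{1\le i\le n},\xi)\,\Phi(r,\xi)\,P_0(\rmd\xi),
\]
with $\Phi(r,\xi) = \rmd P_{\beta,K}/\rmd P_0(\xi)$ given by~\eqref{eq:RN}, and then check that the integrand is locally Lipschitz in $r$ with constants uniform in $\xi$ and $x$. Integrating against $P_0$ and taking the supremum over $x\in[0,1]^p$ would then yield the conclusion. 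Concretely, I would fix $R>0$ and work on the ball $B_R=\{r\in\mathbb{R}^n:\norm{r}_\infty\le R\}$.

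The first factor is immediate: $r\mapsto T(x;(x_i,r_i),\xi) = \sum_{v\in\{0,1\}^d}\bar r(A_v(\xi))\1_{A_v(\xi)}(x)$ is \emph{linear} in $r$ (with the convention $\bar r(A)=0$ for empty leaves), hence $1$-Lipschitz in $\norm{\cdot}_\infty$ and bounded by $R$ on $B_R$, uniformly in $\xi$ and $x$.

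For the Radon--Nikodym factor, I would first observe that each score $\Score(j,u;A)$ is a nonnegative quadratic polynomial in $(r_i)_{x_i\in A}$ bounded by $\mathrm{mse}(A)\le R^2$ on $B_R$. Consequently $r\mapsto e^{\beta\Score(j,u;A)}$ is smooth, uniformly bounded by $e^{\beta R^2}$, and locally Lipschitz with a constant depending only on $\beta$, $R$ and $n$. Since the denominator $K^{-1}\sum_{k=1}^K e^{\beta\Score_k}$ appearing in~\eqref{eq:RN} is bounded below by $1$ (all scores being nonnegative), each of the $2^d-1$ ratios in the product is itself bounded and locally Lipschitz in $r$ with constants uniform in $\xi$; the product of bounded Lipschitz functions is bounded Lipschitz, and integrating against $P_0(\rmd\xi^2)\cdots P_0(\rmd\xi^K)$ preserves these bounds.

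It then remains to combine the two factors via the elementary product rule (if $|f_i|\le M_i$ and $f_i$ is $L_i$-Lipschitz on $B_R$, then $f_1f_2$ is $(M_1L_2+M_2L_1)$-Lipschitz) to conclude that $r\mapsto T(x;(x_i,r_i),\xi)\,\Phi(r,\xi)$ is locally Lipschitz in $r$ on $B_R$, with constant uniform in $\xi$ and $x\in[0,1]^p$; averaging against $P_0$ preserves this constant. The main bookkeeping obstacle I foresee is ensuring uniformity of all estimates in $\xi$ and $x$, but this reduces to the simple observations that scores are controlled empirical second moments and that leaf averages are convex combinations of the $r_i$, so no genuine analytic difficulty is expected.
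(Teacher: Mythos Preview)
Your proposal is correct and follows essentially the same approach as the paper: write the mean tree as $\int T(\cdot;r,\xi)\,\Phi(r,\xi)\,P_0(\rmd\xi)$ via Proposition~\ref{prop:RN-derivative}, then control separately the Lipschitz constant of the tree (leaf averages are convex combinations, hence $1$-Lipschitz) and of the Radon--Nikodym density (scores are bounded quadratics in $r$, denominators are bounded away from zero), and combine by the product rule. The paper carries out exactly this decomposition, with slightly more explicit constants (e.g.\ using that softmax is $\tfrac12$-Lipschitz to obtain the bound~\eqref{eq:reg-tree-lipschitz-3}), but the structure is identical.
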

The proof is postponed to \Cref{sec:proofs-reg-trees}.
The same result does not hold when $\beta=\infty$, i.e.\ for Extra-Trees and Breiman's regression trees, because of the lack of continuity of the $\argmax$ selection.

\section{The space of tree functions}\label{sec:T-space}
\subsection{Tree function space and total variation norm}
An original and powerful point of view to study the sequence of functions $(\hat F_m^\lambda)_{m\geq 0}$ produced by gradient boosting is to see it as a Markov chain in a suitable function space that we call the space of tree functions $\mathbb{T}=\mathbb{T}([0,1]^p,\mathbb{R})$. The main idea is that a regression tree is naturally associated with a signed measured. This is especially useful because the space of non-negative measures enjoys nice compactness properties that will ease tightness considerations. We introduce here  the space of tree functions and discuss some useful properties.

We first introduce some notation. Let $\mathcal{M}=\mathcal{M}([0,1]^p)$ (resp. $\mathcal{M}^+=\mathcal{M}^+([0,1]^p))$ denote the space of Borel finite signed measures on $[0,1]^p$ (resp. Borel finite non-negative measures). The total variation of a signed measure $\mu\in\mathcal{M}$ is 
\[
\|\mu\|_{\mathrm{TV}}=\sup\big\{\mu(f)\ :\ f\in \mathbb{B}, \|f\|_\infty\leq 1\big\},
\]
where we let $\mu(f)=\int f \,\rmd \mu$ denote the integral. We note $\mu=\mu^+-\mu^-$ the Jordan decomposition of $\mu$ and $|\mu|=\mu^++\mu^-$ the  variation  of $\mu$ --- see \citet[Section 32]{B95} for background on signed measures. We also consider the subset $\mathcal{M}_0\subset \mathcal{M}$  of signed measures $\mu$ satisfying $\abs{\mu}([0,1]^p\setminus[0,1)^p)=0$.
In the rest of the paper we will often use the notation
\[
	[0,x] := [0,x_1]\times[0,x_2]\times \dots\times[0,x_p],
\]
for any $x\in [0,1]^p$.

\begin{definition}
The space of tree functions $\mathbb{T}=\mathbb{T}([0,1]^p,\mathbb{R})$ is defined as
\[
\mathbb{T}=\big\{T:[0,1]^p\to\mathbb{R}\;:\ T(x)=\mu([0,x])\ \mbox{for some $\mu\in \mathcal{M}_0$ and all $x\in[0,1]^p$}\big\}.
\]
For all $T\in\mathbb{T}$, the measure $\mu\in\mathcal{M}_0$ such that $T(x)=\mu([0,x])$
 is unique and denoted by $\mu_T$. The total variation of $T$ is defined by $\|T\|_{\mathrm{TV}}=\|\mu_T\|_{\mathrm{TV}}$. 
\end{definition}
Clearly, $\norm{\cdot}_{\mathrm{TV}}$ defines a norm on $\mathbb{T}$, and the mapping $\mu\in\mathcal{M}_0\mapsto T\in\mathbb{T}$ is an isomorphism so that the space of tree functions $(\mathbb{T},\norm{\cdot}_{\mathrm{TV}})$ is a Banach space. Furthermore, the inclusion $\mathbb{T}\subset\mathbb{B}$ holds with inequality $\|T\|_{\infty}\leq \|T\|_{\mathrm{TV}}$ so that the injection is continuous.

We now explain the relationship between regression trees and the space $\mathbb{T}$.
Figures~\ref{fig:muT1} and~\ref{fig:muT2} provide an illustration of the fact that regression trees belong to $\mathbb{T}$ in dimension $p=1$ and $p=2$ respectively.
In Figure~\ref{fig:muT1}, we see that $T\in \mathbb{T}$ with $\mu_T=0.1\delta_{0}+0.2\delta_{0.25}-0.7\delta_{0.39}+0.2\delta_{0.73}$, and $\norm{T}_{\mathrm{TV}}=0.1+0.2+0.7+0.2=1.2$.
In Figure~\ref{fig:muT2}, letting $a,b,c$ and $d$ denote the respective leaf values on $A_{00},A_{01},A_{10}$ and $A_{11}$, we can see that $T\in \mathbb{T}$ with $\mu_T=a\delta_{(0,0)}+(b-a)\delta_{(0,0.5)}+(c-a)\delta_{(0.4,0)}+(a-b)\delta_{(0.4,0.5)}+(d-c)\delta_{(0.7,0)}$; therefore we also get $\norm{T}_{\mathrm{TV}}\leq 4\abs{a}+2\abs{b}+2\abs{c}+\abs{d} \leq 4^2\norm{T}_{\infty}$.

\begin{proposition}\label{prop:reg-tree-measure}
Let $\xi\in ([\![1,p]\!]\times (0,1))^{\mathscr{T}_{d-1}}$ be a splitting scheme with depth $d$ and $(\tilde r_v)_{v\in\{0,1\}^d}\in\mathbb{R}^{{\{0,1\}}^d}$ be leaf values. Then the function
\[
T(x)=\sum_{v\in\{0,1\}^d}\tilde r_v \1_{A_v(\xi)}(x),\quad x\in[0,1]^p,
\]
belongs to $\mathbb{T}$ and satisfies 
\[
\|T\|_{\mathrm{TV}}\leq 2^{d+\min(d,p)}\|T\|_{\infty} \leq 4^d \norm{T}_{\infty}.
\]
Furthermore, if $d<p$, the measure $\mu_T$ is supported by the set $\mathcal{F}_d$ of points $x\in[0,1]^p$ with at most $d$ positive components.
\end{proposition}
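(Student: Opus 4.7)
The plan is to construct $\mu_T$ explicitly, leaf by leaf. Each leaf region $A_v$ is a hyperrectangle $A_v=I_v^1\times\cdots\times I_v^p$ in $[0,1]^p$, where the coordinate interval $I_v^j$ is either $[a_v^j,b_v^j)$ (when some ancestor of $v$ performed a ``$<\tau$''-split along coordinate $j$, which produces $b_v^j<1$) or $[a_v^j,b_v^j]$ with $b_v^j=1$. In particular, if coordinate $j$ is not split at all along the path from the root to $v$, then $a_v^j=0$, $b_v^j=1$ and $I_v^j=[0,1]$.

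First I would encode each factor $\1_{I_v^j}$ as the cumulative of a signed measure on $[0,1)$. Setting $\mu_v^j=\delta_{a_v^j}-\delta_{b_v^j}$ when $b_v^j<1$ and $\mu_v^j=\delta_{a_v^j}$ when $b_v^j=1$, a one-line verification yields $\1_{I_v^j}(t)=\mu_v^j([0,t])$ for all $t\in[0,1]$, with $\mu_v^j$ supported in $[0,1)$ and $\|\mu_v^j\|_{\mathrm{TV}}\in\{1,2\}$. The product measure $\tilde\mu_v=\mu_v^1\otimes\cdots\otimes\mu_v^p$ then belongs to $\mathcal{M}_0$ and satisfies $\1_{A_v}(x)=\tilde\mu_v([0,x])$ by Fubini, so that
\[
\mu_T=\sum_{v\in\{0,1\}^d}\tilde r_v\,\tilde\mu_v\in\mathcal{M}_0\qquad\text{satisfies}\qquad T(x)=\mu_T([0,x]),
\]
proving $T\in\mathbb{T}$.

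For the total variation bound, the key remark is that $\|\mu_v^j\|_{\mathrm{TV}}=2$ only when $b_v^j<1$, which requires at least one ``$<\tau$''-split along coordinate $j$ somewhere on the ancestor path of $v$. Since the path has length $d$, the set of such coordinates has cardinality at most $\min(d,p)$, whence $\|\tilde\mu_v\|_{\mathrm{TV}}=\prod_j\|\mu_v^j\|_{\mathrm{TV}}\leq 2^{\min(d,p)}$. Combining with $|\tilde r_v|\leq \|T\|_{\infty}$ over the $2^d$ leaves yields $\|T\|_{\mathrm{TV}}\leq 2^{d+\min(d,p)}\|T\|_{\infty}\leq 4^d\|T\|_{\infty}$.

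Finally, the support statement when $d<p$ is transparent from the construction: any coordinate $j$ that is not split along the ancestors of $v$ forces $\mu_v^j=\delta_0$, so the $j$-th coordinate of every atom of $\tilde\mu_v$ is $0$. Hence the positive coordinates of an atom of $\tilde\mu_v$ lie among the at most $d$ coordinates split along the path to $v$, placing every atom in $\mathcal{F}_d$, and hence the same for $\mu_T$. The only delicate point throughout is the endpoint treatment at $b_v^j=1$: had we uniformly written $\1_{[a,b]}(t)=\1_{[0,t]}(a)-\1_{[0,t]}(b)$ we would have placed mass on $\{x^j=1\}$ and exited $\mathcal{M}_0$, so the case distinction above is essential, and it is also what produces the ``cheap'' factors $\|\mu_v^j\|_{\mathrm{TV}}=1$ responsible for the sharper bound $2^{\min(d,p)}$ rather than a trivial $2^p$.
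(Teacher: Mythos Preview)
Your proof is correct and follows essentially the same approach as the paper. The paper writes the signed measure associated with a single leaf $\1_{[a,b\rangle}$ directly as an inclusion--exclusion sum $\sum_{\epsilon\in\{0,1\}^p}(-1)^{|\epsilon|}\1_{[0,1)^p}(a+\epsilon\cdot(b-a))\,\delta_{a+\epsilon\cdot(b-a)}$, whereas you factor it as a tensor product $\bigotimes_j\mu_v^j$ of one-dimensional signed measures; expanding your product gives exactly the paper's formula, and both arguments then count the coordinates with $b_v^j<1$ to obtain the $2^{\min(d,p)}$ factor and to locate the support in $\mathcal{F}_d$.
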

The proof is postponed to \Cref{sec:proofs-T-space}.

\begin{remark}~ \label{rk:tree-space-props}
In dimension $p=1$, $\mathbb{T}([0,1],\mathbb{R})$ is exactly the set of càdlàg functions with finite total variation that are continuous at $1$. In dimension $p\geq 2$, tree functions $T\in\mathbb{T}$ can be seen as the ``cumulative distribution function'' of a signed measure and therefore have quartant limits and are right-continuous.  For instance, when $p=2$, the four limits $\lim_{y\to x, y\in Q_x^l}T(y)$ exist for all $x\in[0,1]^2$ and $l=1,\ldots,4$, where 
\begin{align*}
Q_x^1&=\{y : y^1\geq x^1,y^2\geq x^2\},\quad &Q_x^2=\{y: y^1\geq x^1,y^2<x^2\},\\ Q_x^3&=\{y: y^1< x^1,y^2\geq x^2\},\quad &Q_x^4=\{y: y^1< x^1,y^2<x^2\}.
\end{align*}
Since $x\in Q_x^1$, $T(x)$ is equal to the limit on $Q_x^1$.
More generally, in dimension $p\geq 1$ there are $2^p$ such quartants. We refer to~\cite{Neu71} for the definition and properties of the multiparameter Skorokhod space  $\mathbb{D}([0,1]^p,\mathbb{R})$. The inclusion $\mathbb{T}([0,1]^p,\mathbb{R})\subset \mathbb{D}([0,1]^p,\mathbb{R})$ holds. For each $T\in\mathbb{T}$, the condition $\mu_T\in \mathcal{M}_0$ implies that $T$ is continuous on $[0,1]^p\setminus[0,1)^p$, i.e.\ at each point having at least one component equal to $1$. 
\end{remark}

\subsection{Decomposition of tree functions and \texorpdfstring{$L^2$}{L2}-norm} \label{sec:L2}
We now consider a specific decomposition of tree functions that will be useful in the sequel. For $J\subset [\![1,p]\!]$ and $\epsilon\in \{0,1\}^{J^c}$, define the $\abs{J}$-dimensional face $\mathcal{F}_{J,\epsilon}$ of $[0,1]^p$ by
\[
  \mathcal{F}_{J,\epsilon} = \{x\in [0,1]^p\ :\ \forall j\in J,\,x^j\in (0,1);\;\forall j\in J^c,\,x^j =\epsilon_j\}.
\]
When $\epsilon$ is identically null on $J^c$, we simply write $\mathcal{F}_J$.

\begin{example}
In dimension $p=2$, the square $[0,1]^2$ is decomposed into corners, edges and interior: the faces of dimension $0$, $1$ and $2$ are respectively given by
\begin{itemize}
\item $\mathcal{F}_{\emptyset}=\{(0,0)\}$, $\mathcal{F}_{\emptyset,(0,1)}=\{(0,1)\}$, $\mathcal{F}_{\emptyset,(1,0)}=\{(1,0)\}$, $\mathcal{F}_{\emptyset,(1,1)}=\{(1,1)\}$;
\item $\mathcal{F}_{\{1\}}=\{0\}\!\times\! (0,1)$, $\mathcal{F}_{\{2\}}=(0,1)\!\times\! \{0\}$, $\mathcal{F}_{\{1\},1}=\{1\}\!\times\! (0,1)$, $\mathcal{F}_{\{2\},1}=(0,1)\!\times\! \{1\}$;
\item $\mathcal{F}_{\{1,2\}}=(0,1)^2$.
\end{itemize}
\end{example}

Each measure $\mu\in\mathcal{M}_0$ is supported by $[0,1)^p$ and can be decomposed as
\[
\mu=\sum_{J\subset [\![1,p]\!]} \mu^J\quad \mbox{with } \mu^J(\cdot)=\mu(\cdot \cap \mathcal{F}_J).
\]
Correspondingly, each tree function $T\in\mathbb{T}$ can be decomposed as
\[
T=\sum_{J\subset [\![1,p]\!]} T^J\quad \mbox{with } T^J(x)=\mu_T^J([0,x]),
\]
where  $T^J(x)$ depends  on $x$ only through the components  $x^j$ for $j\in J$. Alternatively the decomposition is  characterized recursively by  $T^\emptyset=T(0)$ and 
\[
T^J(x)=T(x^J)-\sum_{J'\subsetneq J}T^{J'}(x) ,\quad J\neq  \emptyset,
\]
where $x^J=(x^j\1_{j\in J})_j$ is the vector with components $x^j$ if $j\in J$ and $0$ if $j\notin J$.

\begin{remark}~ \label{rk:interaction}
\begin{enumerate}
\item From a statistical perspective, $T^J(x)$  is interpreted as the interaction effect of the covariates $x^j$, $j\in J$, when the value $0$ is taken as reference. This is more easily explained in the case $p=2$:  $T^\emptyset(x)=T(0,0)$ corresponds to the value of $T$ with both covariates at reference; $T^{\{1\}}(x^1,x^2)=T(x^1,0)-T(0,0)$ corresponds to the additional effect of  $x^1$ with $x^2$ at reference and similarly for $T^{\{2\}}(x^1,x^2)$; finally, $T^{\{1,2\}}(x^1,x^2)=T(x^1,x^2)-T(x^1,0)-T(0,x^2)+T(0,0)$ corresponds to the additional interaction effect between $x^1$ and $x^2$. 
\item According to Proposition~\ref{prop:reg-tree-measure}, the measure $\mu_T$ of a regression tree $T$ with depth $d<p$ is supported by $\mathcal{F}_d=\cup_{\abs{J}\leq d}\mathcal{F}_J$. This implies that $T^J=0$ whenever $\abs{J}>d$, i.e.\ there are no interaction effect involving more than $d$ variables. In particular, when $d=1$,  
$T(x)=T(0) +\sum_{j=1}^p T^{\{j\}}(x)$ where $T^{\{j\}}(x)$ depends on  $x^j$ only, which corresponds to an additive model in the covariates $x^1,\ldots,x^p$.
\end{enumerate}
\end{remark}

Next we consider $\mathbb{T}$ as a subset of a well-chosen $L^2$ space. This is useful because the total variation norm induces a topology that is too strong for our purpose (because the Banach space $(\mathbb{T},\norm{\cdot}_{\mathrm{TV}})$ is non-separable).
Since the space $L^2$ that we will construct is the usual $L^2$ space associated with a certain probability measure, the topology that is induced on $\mathbb{T}$ is separable, allowing us to use techniques for the convergence of stochastic processes that are tailored for such spaces.
For  $J\subset [\![1,p]\!]$ and $\epsilon\in \{0,1\}^{J^c}$, we define the  measure $\leb_{J,\epsilon}$ on $[0,1]^p$  supported by the $\abs{J}$-dimensional face  $\mathcal{F}_{J,\epsilon}$ by
\[
  \leb_{J,\epsilon}(\rmd x) = \prod_{j\in J^c}\delta_{\epsilon_j}(\rmd x^j)\prod_{j\in J}\rmd x^j.
\]
We consider the probability measure $\nu$ on $[0,1]^p$ defined by
\[
  \nu = \frac{1}{3^p}\sum_{J,\epsilon} \leb_{J,\epsilon}
\]
and the Hilbert space  $L^2 = L^2([0,1]^p,\nu)$ with associated norm $\norm{\cdot}_2$. Note that if the tree functions $T_1,T_2 \in\mathbb{T}$ are equal $\nu(\rmd x)$-almost everywhere, then they are equal everywhere because they are right-continuous in the sense of Remark~\ref{rk:tree-space-props}. This implies that $\norm{\cdot}_2$ is  a norm on $\mathbb{T}$. Therefore we can write $\mathbb{T} \subset L^2$, and we have $\norm{T}_2\leq  \norm{T}_{\infty}\leq \norm{T}_{\mathrm{TV}}$ for all $T\in\mathbb{T}$. Note that $\mathbb{T}$ is not closed in $L^2$ since for instance $\1_{[1/n,1]^p}\in \mathbb{T}$ converges in $L^2$ to  $\1_{(0,1]^p}\notin \mathbb{T}$ as $n\to+\infty$.
The proofs of the two following results, relating $L^2$ convergence on $\mathbb{T}$ with other modes of convergence, are postponed to \Cref{sec:proofs-T-space}.

\begin{proposition}~ \label{prop:L2-cv-on-T}
  \begin{enumerate}[(i)]
  	\item $\mathbb{T}$ is dense in $L^2$.
    \item Any subset $K\subset \mathbb{T}$ satisfying $\sup_{T\in K}\norm{T}_{\mathrm{TV}} < \infty$ is relatively compact in $L^2$.
    \item Let $T, T_1,T_2,\dots \in \mathbb{T}$ be such that $(\|T_n\|_{\mathrm{TV}})_{n\geq 1}$ is bounded. Then, $\norm{T_n-T}_2 {\to 0}$ if and only if $\mu_{T_n}^{J} \weakcv \mu_{T}^{J}$ for all $J\subset [\![1,p]\!]$, where $\weakcv$ denotes weak convergence of measures.
  \end{enumerate}
\end{proposition}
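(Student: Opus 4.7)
The plan is to prove (i), (iii) and (ii) in that order, as (ii) follows readily from the technique developed for (iii).

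\textbf{Part (i):} The cleanest route is a duality argument. It suffices to show that any $g \in L^2$ orthogonal to $\mathbb{T}$ must vanish $\nu$-almost everywhere. Since $\mathbb{T}$ contains $\1_{[a,\mathbf{1}]}$ for every $a \in [0,1)^p$ (this is the tree function associated with $\delta_a \in \mathcal{M}_0$), orthogonality gives $\int_{[a,\mathbf{1}]} g\,\mathrm{d}\nu = 0$ for all such $a$. A standard $\pi$-$\lambda$ argument then propagates this identity to all Borel subsets of $[0,1]^p$; to reach the boundary faces of $\nu$ (those with $x^j=1$ for some $j$), I let $a^j \nearrow 1$ on the appropriate coordinates. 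This forces $g\equiv 0$ $\nu$-a.e.

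\textbf{Part (iii)-$\Leftarrow$ and Part (ii):} The common technical lemma is that for any finite signed measure $\eta$ on $[0,1]^p$ the set $\{x : |\eta|(\partial[0,x])>0\}$ is $\nu$-null, because on each face it reduces to $|\eta|$ charging at most countably many coordinate hyperplanes $\{y^j = c\}$. Combined with the Portmanteau theorem, any weak convergence $\mu_n \weakcv \mu^*$ with bounded total variation gives $\mu_n([0,x]) \to \mu^*([0,x])$ for $\nu$-a.e.~$x$, and dominated convergence (using $\sup_n \|\mu_n\|_{\mathrm{TV}}<\infty$) delivers $L^2(\nu)$-convergence. For (iii)-$\Leftarrow$ I apply this lemma with $\mu_n = \mu_{T_n}^J$ and $\mu^* = \mu_T^J$, then sum over the finitely many $J$. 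For (ii), Banach--Alaoglu in $C([0,1]^p)^*$ extracts a weak-$*$ subsequential limit of $\mu_{T_n}$, and the same lemma produces an $L^2$-convergent subsequence.

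\textbf{Part (iii)-$\Rightarrow$:} By the standard subsequence trick it suffices to show that any subsequence has a further subsequence along which $\mu_{T_n}^J \weakcv \mu_T^J$ for every $J$. Iterated Banach--Alaoglu over the finitely many $J$ produces weak limits $\nu^J$, and the $\Leftarrow$-type reasoning combined with $T_n \to T$ in $L^2$ yields the identity $\sum_J \nu^J([0,x]) = T(x) = \sum_J \mu_T^J([0,x])$ for $\nu$-a.e.~$x$. I then plan to identify $\nu^J$ with $\mu_T^J$ by induction on $|J|$: the base case $|J|=0$ follows from evaluating the identity at $x=0$; the inductive step uses the identity on the face $\mathcal{F}_J$ together with the induction hypothesis on lower-dimensional $J'$ to read off $\nu^J$ on the open face $\mathcal{F}_J$ and to control possible boundary contributions. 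The main obstacle is precisely that a priori the weak limit $\nu^J$ of measures supported on $\mathcal{F}_J$ may develop mass on the closure $\overline{\mathcal{F}_J}\setminus \mathcal{F}_J$ (e.g.~at points with some $y^j \in \{0,1\}$), since weak convergence does not preserve support. Ruling out such boundary atoms requires coupling the identity across several faces $\mathcal{F}_{K,\epsilon}$ simultaneously, and it is here that the hypothesis $T\in\mathbb{T}$ (i.e.~$\mu_T\in\mathcal{M}_0$, so $T$ carries no mass at $\{\max_j y^j = 1\}$) is crucial: without it, compensating atoms of the $\nu^J$ on the upper boundary could persist and the identification would fail.
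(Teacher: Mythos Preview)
Your approach is correct and essentially the same as the paper's: weak-$*$ compactness of the measures, Portmanteau on continuity sets to obtain $\nu$-a.e.\ pointwise convergence, then dominated convergence for the $L^2$ limit. Your duality argument for (i) is a harmless variant of the paper's approximation argument, and your treatment of (iii)-$\Rightarrow$ is in fact more explicit than the paper's (which simply says the identification is ``easily deduced'' from the limit formula in the proof of (ii)); the induction on $|J|$ you outline---together with reading the identity on faces $\mathcal{F}_{K,\epsilon}$ with some $\epsilon_j=1$ to rule out mass escaping to the upper boundary---is exactly what completes the argument, and your observation that $T\in\mathbb{T}$ (i.e.\ $\mu_T\in\mathcal{M}_0$) is what prevents compensating boundary atoms is spot on.
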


To deal with tightness and uniform convergence in the proof of Theorem~\ref{thm:cv-lambda-to-0}, it will be convenient  to consider the subset  $\mathbb{T}^+=\{T\in \mathbb{T}: \mu_T\in\mathcal{M}^+\}$ of tree functions  with positive measure.  We denote by $\overline{\mathbb{T}}^+$ the adherence of $\mathbb{T}^+$ in $L^2$ and equip it with the induced metric. 

\begin{proposition}~ \label{prop:L2-T+}
  \begin{enumerate}[(i)]
    \item $\overline{\mathbb{T}}^+$ is a proper metric space, i.e.\ bounded sets are relatively compact.
        \item  Let $T, T_1,T_2,\dots \in \mathbb{T^+}$ and assume $T$ is continuous. Then, $\norm{T_n-T}_2 \to 0$ implies $\norm{T_n-T}_{\infty}\to 0$.
  \end{enumerate}
\end{proposition}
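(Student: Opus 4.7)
The plan is to identify a key reverse-type inequality for the positive cone: for all $T \in \mathbb{T}^+$ one has $\|T\|_{\mathrm{TV}} \leq 3^{p/2}\|T\|_2$. Indeed, since $\mu_T \geq 0$, the total variation coincides with the total mass, $\|T\|_{\mathrm{TV}} = \mu_T([0,1]^p) = T(1,\ldots,1)$. Moreover, the term $J=\emptyset$, $\epsilon=(1,\ldots,1)$ in the definition of $\nu$ is exactly $3^{-p}\delta_{(1,\ldots,1)}$, so $\|T\|_2^2 \geq 3^{-p}T(1,\ldots,1)^2$ and the inequality follows. This reverses (up to a constant) the chain $\|T\|_2 \leq \|T\|_\infty \leq \|T\|_{\mathrm{TV}}$ within $\mathbb{T}^+$, and is what will allow us to import Proposition~\ref{prop:L2-cv-on-T} into the $\overline{\mathbb{T}}^+$ setting.

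For (i), let $(T_n)$ be bounded in $\overline{\mathbb{T}}^+$ with $\|T_n\|_2 \leq M$. By density of $\mathbb{T}^+$ in $\overline{\mathbb{T}}^+$, pick $S_n \in \mathbb{T}^+$ with $\|T_n-S_n\|_2 \leq 1/n$. Then $(\|S_n\|_2)$ is bounded, hence $(\|S_n\|_{\mathrm{TV}})$ is bounded by the key inequality. Proposition~\ref{prop:L2-cv-on-T}(ii) extracts a subsequence $S_{n_k}$ converging in $L^2$ to some limit $S$; by closedness, $S \in \overline{\mathbb{T}}^+$, and $T_{n_k} \to S$ in $L^2$ as well, giving relative compactness.

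For (ii), the sequence $\|T_n\|_2$ is bounded since it converges to $\|T\|_2$, so $\|T_n\|_{\mathrm{TV}}$ is bounded, and Proposition~\ref{prop:L2-cv-on-T}(iii) yields $\mu_{T_n}^J \weakcv \mu_T^J$ for every $J \subset [\![1,p]\!]$. All these measures are non-negative because $T_n, T \in \mathbb{T}^+$. A short induction on $|J|$ using the identity $T^J(x) = T(x^J) - \sum_{J'\subsetneq J}T^{J'}(x)$ shows that every $T^J$ is continuous on $[0,1]^p$ as soon as $T$ is. It then remains to upgrade the weak convergence of the non-negative finite measures $\mu_{T_n}^J$ to uniform convergence of the corresponding ``cumulative distribution functions'' $T_n^J$, knowing that the limit $T^J$ is continuous. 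Summing over the finite family of subsets $J$ will then give $\|T_n - T\|_\infty \to 0$.

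The main obstacle is precisely this last step, a multivariate Pólya-type statement: weak convergence of finite non-negative measures whose limiting multivariate CDF is continuous implies uniform convergence of the CDFs. I would carry it out by a standard grid argument: exploit uniform continuity of the continuous, bounded function $T^J$ on the compact face $\mathcal{F}_J$ to choose a finite grid on which $T^J$ oscillates by at most $\varepsilon$; invoke weak convergence together with the absence of discontinuities of $T^J$ to obtain $|T_n^J - T^J| \leq \varepsilon$ at each grid point for $n$ large; finally, use the coordinate-wise monotonicity of $T_n^J$ (inherited from $\mu_{T_n}^J \geq 0$) to sandwich the error at arbitrary points by the values at neighbouring grid points, yielding a uniform bound of order $\varepsilon$ and concluding the proof.
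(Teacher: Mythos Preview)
Your proof is correct and follows essentially the same route as the paper: both hinge on the observation that the Dirac mass at $(1,\ldots,1)$ in $\nu$ gives $\|T\|_{\mathrm{TV}} = T(1,\ldots,1) \leq 3^{p/2}\|T\|_2$ on $\mathbb{T}^+$, reducing (i) to Proposition~\ref{prop:L2-cv-on-T}(ii), and then (ii) is the multivariate P\'olya theorem. The only difference is cosmetic: the paper cites the Billingsley--Tops{\o}e version of P\'olya directly for $\mu_{T_n}\weakcv\mu_T$, while you decompose into faces $T_n^J$ and sketch the standard grid-and-monotonicity argument on each.
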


When dealing with gradient boosting, we need the function $F$ to be well-defined at $(x_i)_{1\leq i\leq n}$ because the algorithm involves the values $F(x_i)$. The following remark provides a slight modification of the previous result in order to deal with this issue and introduces the function space $L^2_\mathbf{x}$ that will be useful in Section~4.
\begin{remark}\label{rk:L2x} With the sample $\mathbf{x}=(x_i)_{i=1}^n$ being fixed, we consider the space $L^2_{\mathbf{x}} = L^2([0,1]^p, \nu_{\mathbf{x}})$, where $\nu_{\mathbf{x}}$ is the probability measure
\[
  \nu_{\mathbf{x}} = \frac{\nu}{2}+\frac{1}{2n}\sum_{i=1}^n\delta_{x_i}.
\]
For $F\in L^2_{\mathbf{x}}$, the values $(F(x_i))_{1\leq i\leq n}$ are well defined and we can consider the gradient tree $\widetilde T(\,\cdot\,;F,\xi)$ for $F\in L^2_{\mathbf{x}}$.

It is readily checked that $\mathbb{T}\subset L^2_{\mathbf{x}}$, with $\norm{T}_{L^2_\mathbf{x}}\leq \norm{T}_{\mathrm{TV}}$ and that the statements and proofs of Propositions~\ref{prop:L2-cv-on-T} and~\ref{prop:L2-T+} still hold with $L^2$ replaced by $L^2_{\mathbf{x}}$, the only modification being that  \textit{(iii)} in Proposition~\ref{prop:L2-cv-on-T} has to be replaced by
\begin{itemize}
  \item[\textit{(iii')}] Let $T, T_1,T_2,\dots \in \mathbb{T}$ be such that $(\|T_n\|_{\mathrm{TV}})_{n\geq 1}$ is bounded. Then, ${\norm{T_n-T}}_{L^2_\mathbf{x}}{\to 0}$ if and only if $\mu_{T_n}^{J} \weakcv \mu_{T}^{J}$ for all $J\subset [\![1,p]\!]$ and $T_n(x_i)\to T(x_i)$ for all $i\in [\![1,n]\!]$.
\end{itemize}
The proof is easily adapted thanks to the fact that $L^2_{\mathbf{x}}$ is isomorphic to $L^2\times \mathbb{R}^n$ via the isomorphism $F\mapsto (F,(F(x_i))_{1\leq i\leq n})$.
\end{remark}

\section{Infinitesimal gradient boosting} \label{sec:inf-gb}
\subsection{Existence and uniqueness of the ODE solution} \label{sec:ode}

For the sake of simplicity, our results were first stated in Section~\ref{sec:main-results} in the Banach space $\mathbb{B}=\mathbb{B}([0,1]^p,\mathbb{R})$ of bounded function. We now state and prove a version of Theorem~\ref{thm:EDO} in the Banach space $\mathbb{T}$ of tree functions endowed with the total variation norm. Existence and uniqueness of the ODE solution follow essentially from the fact that the infinitesimal boosting operator introduced in Definition~\ref{def:boosting-operator} is locally Lipschitz. The proof is similar to the proof of Proposition~\ref{prop:reg-tree-Lipschitz} but slightly more involved because we have to consider softmax gradient trees (Definition~\ref{def:softmax-gradient-tree}), that have a more complex structure than softmax regression trees (Definition~\ref{def:softmax-reg-tree}). 

The results of this section and its subsection are important for the proofs of our main results, Theorems~\ref{thm:EDO} and~\ref{thm:cv-lambda-to-0}.
We defer all proofs to \Cref{sec:proofsMain}.

\begin{proposition}\label{prop:boosting-operator-Lipschitz} Suppose Assumption~\ref{ass:A} is satisfied. Then the infinitesimal boosting operator $\mathcal{T}$ defined in Definition~\ref{def:boosting-operator} satisfies $\mathcal{T}(F)\in\mathbb{T}$, for all  $F\in\mathbb{B}$, and the map $\mathcal{T}:(\mathbb{B},\norm{\cdot}_{\infty})\to (\mathbb{T}, \norm{\cdot}_{\mathrm{TV}})$ is locally Lipschitz.
\end{proposition}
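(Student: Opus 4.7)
The plan is to proceed in two stages. First I will confirm that $\mathcal{T}(F)\in\mathbb{T}$ with a uniform bound on $\norm{\mathcal{T}(F)}_{\mathrm{TV}}$ over bounded sets of $\mathbb{B}$. Then I will derive the Lipschitz estimate by a telescoping decomposition that isolates the two $F$-dependencies inside the softmax gradient tree: the leaf values and the distribution of the splitting scheme.

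Fix $M>0$ and restrict attention to $F\in\mathbb{B}$ with $\norm{F}_{\infty}\leq M$. Since $(y_i)_{1\leq i\leq n}$ is fixed and $(F(x_i))_i$ remains in a fixed compact of $\mathbb{R}$, Assumption~\ref{ass:A} yields uniform bounds on $\partial_z L$ and $\partial^2_z L$ together with a uniform positive lower bound on $\partial^2_z L$. From~\eqref{eq:gradient-tree-line-search-approximation} this gives $|\tilde r(A_v,F)|\leq C_M$, hence $\norm{\widetilde T(\,\cdot\,;F,\zeta)}_{\infty}\leq C_M$ and, by Proposition~\ref{prop:reg-tree-measure}, $\norm{\widetilde T(\,\cdot\,;F,\zeta)}_{\mathrm{TV}}\leq 4^d C_M$ uniformly in $\zeta$. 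Defining $\mu_F(A)=\mathbb{E}_\zeta[\mu_{\widetilde T(\,\cdot\,;F,\zeta)}(A)]$ for Borel $A\subset[0,1]^p$ produces an element of $\mathcal{M}_0$ satisfying $\mu_F([0,x])=\mathcal{T}(F)(x)$, which proves $\mathcal{T}(F)\in\mathbb{T}$ with $\norm{\mathcal{T}(F)}_{\mathrm{TV}}\leq 4^d C_M$.

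For the Lipschitz estimate, Proposition~\ref{prop:RN-derivative} allows rewriting
\[
  \mathcal{T}(F)=\int H(\xi,F)\,h_F(\xi)\,P_0(\rmd\xi),
\]
where $H(\xi,F)\in\mathbb{T}$ is the tree with \emph{deterministic} partition $(A_v(\xi))_v$ and leaf values~\eqref{eq:gradient-tree-line-search-approximation} evaluated at $F$, and $h_F=\rmd P_{\beta,K}^{(F)}/\rmd P_0$ is the density from~\eqref{eq:RN} (with scores computed from the residuals of $F$). The telescoping identity
\[
  H(\xi,F)h_F(\xi)-H(\xi,G)h_G(\xi)=\bigl[H(\xi,F)-H(\xi,G)\bigr]h_F(\xi)+H(\xi,G)\bigl[h_F(\xi)-h_G(\xi)\bigr]
\]
splits $\mathcal{T}(F)-\mathcal{T}(G)$ into two pieces whose TV norms I bound separately. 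For the first, $H(\xi,F)$ and $H(\xi,G)$ share the \emph{same} partition, so $H(\xi,F)-H(\xi,G)$ is a tree on $(A_v(\xi))_v$ with leaf values $\tilde r(A_v,F)-\tilde r(A_v,G)$; Proposition~\ref{prop:reg-tree-measure} gives $\norm{H(\xi,F)-H(\xi,G)}_{\mathrm{TV}}\leq 4^d\max_v|\tilde r(A_v,F)-\tilde r(A_v,G)|$, and the local Lipschitz character of the ratio in~\eqref{eq:gradient-tree-line-search-approximation} (numerator $C^1$ in $F$, denominator locally Lipschitz and uniformly bounded away from $0$, all consequences of Assumption~\ref{ass:A}) yields a bound $\leq C_M'\norm{F-G}_{\infty}$ uniformly in $\xi$. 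Multiplying by the pointwise bound $h_F\leq K^{2^d-1}$ from Proposition~\ref{prop:RN-derivative} and integrating against $P_0$ settles this piece.

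For the second piece, $\norm{H(\xi,G)}_{\mathrm{TV}}\leq 4^d C_M$ uniformly in $\xi$, so it remains to bound $\int|h_F(\xi)-h_G(\xi)|\,P_0(\rmd\xi)$ by a multiple of $\norm{F-G}_{\infty}$. The integrand in~\eqref{eq:RN} defining $h_F(\xi)$ is a smooth, uniformly bounded function of the family of scores $(\Score^F(j_v^k,u_v^k;A_v(\xi)))_{v,k}$, each of which is a polynomial expression in the residuals $r_i(F)=-\partial_z L(y_i,F(x_i))$ via~\eqref{eq:def-score}; since $F\mapsto r_i(F)$ is locally Lipschitz under~\ref{ass:A1}, composing these maps yields a local Lipschitz estimate on $F\mapsto h_F(\xi)$ with constant uniform in $\xi$, and dominated convergence closes the argument. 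The main obstacle is precisely this last step: controlling the variation of the Radon--Nikodym density crucially relies on the smoothness of the softmax selection, which is exactly why softmax was introduced in place of $\argmax$---under an $\argmax$ selection the density $h_F$ would be discontinuous in $F$ and this entire estimate would collapse.
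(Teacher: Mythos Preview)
Your proof is correct and follows essentially the same approach as the paper: both establish $\mathcal{T}(F)\in\mathbb{T}$ via the expected measure $\mathbb{E}_\zeta[\mu_{\widetilde T(\cdot;F,\zeta)}]$, and both prove the Lipschitz estimate by the same telescoping into a fixed-partition leaf-value term (controlled via Proposition~\ref{prop:reg-tree-measure} and the local Lipschitz property of~\eqref{eq:gradient-tree-line-search-approximation}) and a Radon--Nikodym density term (controlled via the softmax smoothness, exactly as in Equation~\eqref{eq:reg-tree-lipschitz-3}). Your emphasis on why the softmax selection is essential here is also precisely the point.
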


\begin{remark}\label{rk:lipschitz}
  Because $\mathbb{T}\subset\mathbb{B}$ and $\norm{T}_{\infty}\leq \norm{T}_{\mathrm{TV}}$ for all $T\in\mathbb{T}$, the previous proposition implies that $\mathcal{T}$ can be seen as a locally Lipschitz operator on $(\mathbb{B},\norm{\cdot}_{\infty})$ and $(\mathbb{T}, \norm{\cdot}_{\mathrm{TV}})$. This is useful when considering the ODE \eqref{eq:ODE} on both spaces, in Theorem~\ref{thm:EDO} or Theorem~\ref{thm:EDO-T} below respectively.
\end{remark}

\begin{theorem}\label{thm:EDO-T}
Consider the differential equation \eqref{eq:ODE} in the space $(\mathbb{T},\norm{\cdot}_{\mathrm{TV}})$. Under Assumption~\ref{ass:A}, for any initial condition $F_0\in\mathbb{T}$, it has a unique solution defined on $[0,\infty)$ and satisfying $F(0)=F_0$.
\end{theorem}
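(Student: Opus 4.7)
The plan is to apply Picard-Lindelöf in the Banach space $(\mathbb{T}, \|\cdot\|_{\mathrm{TV}})$ and then rule out finite-time blow-up via an a priori TV bound coming from the monotonicity of the training error. By Proposition~\ref{prop:boosting-operator-Lipschitz} together with Remark~\ref{rk:lipschitz}, the map $\mathcal{T}:(\mathbb{T}, \|\cdot\|_{\mathrm{TV}}) \to (\mathbb{T}, \|\cdot\|_{\mathrm{TV}})$ is locally Lipschitz. The standard Cauchy-Lipschitz theorem in a Banach space therefore gives, for any $F_0\in \mathbb{T}$, a unique maximal solution $F:[0, T^*)\to \mathbb{T}$ with $T^*\in (0, +\infty]$, global uniqueness on $[0, T^*)$ following from local uniqueness by the usual clopen argument. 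It remains to prove $T^*=+\infty$.

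First I would show that the training error is non-increasing along the flow. For any $F\in\mathbb{T}$ and any realization $\zeta$ of the auxiliary randomness, denote by $(A_k)_k$ the associated partition and by $\tilde r(A_k)$ the leaf values given by~\eqref{eq:gradient-tree-line-search-approximation}. Plugging $\widetilde T(x_i;F,\zeta)=\tilde r(A_{k(i)})$ into the sum and grouping terms by leaf produces
\[
\sum_{i=1}^n \frac{\partial L}{\partial z}(y_i, F(x_i))\, \widetilde T(x_i; F, \zeta)
= -\sum_k \frac{\bigl(\sum_{i:x_i\in A_k} \frac{\partial L}{\partial z}(y_i,F(x_i))\bigr)^2}{\sum_{i:x_i\in A_k}\frac{\partial^2 L}{\partial z^2}(y_i,F(x_i))} \leq 0,
\]
where positivity of the denominators is ensured by~(A1). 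Taking expectation over $\zeta$ and using $F'(t)=\mathcal{T}(F(t))$ gives $\tfrac{d}{dt}\sum_i L(y_i, F(t)(x_i)) \leq 0$, hence $\sum_i L(y_i, F(t)(x_i)) \leq \sum_i L(y_i, F_0(x_i))=:C_0$ for all $t\in[0,T^*)$.

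The second step extracts the a priori TV bound. Assumption~(A2) gives $|\frac{\partial L}{\partial z}/\frac{\partial^2 L}{\partial z^2}|\leq C_1$ on the level set $\{L\leq C_0\}$. Since each $\tilde r(A_k)$ is a convex combination (with positive weights $\frac{\partial^2 L}{\partial z^2}(y_i,F(t)(x_i))$) of the quantities $-\frac{\partial L}{\partial z}/\frac{\partial^2 L}{\partial z^2}(y_i,F(t)(x_i))$ over the indices $i$ with $x_i\in A_k$, one has $|\tilde r(A_k)|\leq C_1$, so $\|\widetilde T(\cdot;F(t),\zeta)\|_\infty \leq C_1$ pointwise in $\zeta$. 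Proposition~\ref{prop:reg-tree-measure} upgrades this to $\|\widetilde T(\cdot;F(t),\zeta)\|_{\mathrm{TV}}\leq 4^d C_1$. Realizing $\mathcal{T}(F(t))$ as the Bochner integral of $\widetilde T(\cdot;F(t),\zeta)$ in the Banach space $\mathbb{T}$ — licit because Proposition~\ref{prop:boosting-operator-Lipschitz} already provides $\mathcal{T}(F)\in \mathbb{T}$ — this gives $\|\mathcal{T}(F(t))\|_{\mathrm{TV}}\leq 4^d C_1$ uniformly on $[0, T^*)$. Integrating the ODE then yields $\|F(t)-F(s)\|_{\mathrm{TV}}\leq 4^d C_1|t-s|$, so if $T^* < +\infty$ the uniformly Lipschitz trajectory extends by continuity to a limit $F(T^*)\in\mathbb{T}$, and local existence at $F(T^*)$ contradicts the maximality of $T^*$.

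The main technical subtlety is the Bochner-integral identification of $\mathcal{T}(F)$, which is what allows moving the TV norm inside the expectation over $\zeta$; beyond that the proof is a routine Banach-space Picard-Lindelöf argument combined with the textbook energy bound coming from the Newton-step structure of the leaf values in~\eqref{eq:gradient-tree-line-search-approximation}.
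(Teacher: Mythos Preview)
Your proof is correct and follows essentially the same route as the paper: Cauchy--Lipschitz for local existence, monotonicity of the training error via the Newton-step identity (the paper's Equation~\eqref{eq:negative-derivative}), then the uniform TV bound on $\mathcal{T}(F(t))$ via the convex-combination argument (packaged in the paper as Lemma~\ref{lem:control-increment}), and finally the Cauchy/extension contradiction. One small caveat: the paper explicitly avoids treating $\mathcal{T}(F)$ as a Bochner integral in $(\mathbb{T},\|\cdot\|_{\mathrm{TV}})$ because of separability/measurability issues, and instead passes through the measure identity $\mu_{\mathcal{T}(F)}=\mathbb{E}_\zeta[\mu_{\widetilde T(\cdot;F,\zeta)}]$ to get $\|\mathcal{T}(F)\|_{\mathrm{TV}}\leq \mathbb{E}_\zeta[\|\widetilde T(\cdot;F,\zeta)\|_{\mathrm{TV}}]$; your justification ``licit because $\mathcal{T}(F)\in\mathbb{T}$'' is not quite the right reason, but the needed inequality holds for the reason just given.
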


Since the differential equation \eqref{eq:ODE} is driven by the infinitesimal boosting operator which is locally Lipschitz according to Proposition~\ref{prop:boosting-operator-Lipschitz},  the Cauchy--Lipschitz theorem ensures the existence and uniqueness of local solutions. The proof of Theorem~\ref{thm:EDO-T} consists in verifying that the maximal solution corresponding to the local solution started at $F_0$ at time $t=0$ is defined for all time $t\geq 0$.

\subsection{Convergence of gradient boosting} \label{sec:CV}
In this section, we consider  convergence of the gradient boosting sequence  $(\hat F_m^\lambda)_{m\geq 0}$ in the vanishing learning rate asymptotic $\lambda\to 0$. Theorem~\ref{thm:cv-lambda-to-0} states, for all $T>0$, the convergence in probability 
\[
(\hat F_{[t/\lambda]}^\lambda)_{t\in[0,T]} \stackrel{\P}\longrightarrow (\hat F_t)_{t\in[0,T]}
\]
in the Skohorod space $\mathbb{D}([0,T],\mathbb{B})$ endowed with the norm of uniform convergence. The non-separability of the Banach space $\mathbb{B}$ makes it not suitable for the analysis and we first develop our results in the separable space $L^2_{\mathbf x}$ introduced in Section~\ref{sec:L2}. More precisely, we prove the convergence of  the positive and negative parts of the gradient boosting sequence seen in the space $\overline{\mathbb{T}}^+\subset L^2_\mathbf{x}$. The fact that  $\overline{\mathbb{T}}^+$ is a proper metric space (see Proposition~\ref{prop:L2-T+} $i$) makes tightness issues easily tractable.  Furthermore, the fact that $L^2$-convergence in $\overline{\mathbb{T}}^+$ to a continuous limit implies uniform convergence (see Proposition~\ref{prop:L2-T+} $ii$)  will ultimately be used to derive the uniform convergence stated in Theorem~\ref{thm:cv-lambda-to-0}.

We now consider the decomposition of gradient boosting into positive and negative parts. The Jordan decomposition $\mu=\mu^+-\mu^-$ of the signed measure $\mu\in\mathcal{M}_0$ naturally induces the following decomposition of tree functions: for $T\in\mathbb{T}$, we have
\[
T=T^+-T^-\quad \mbox{with $\mu_{T^+}=(\mu_T)^+$ and $\mu_{T^-}=(\mu_T)^-$}.
\]
The tree functions $T^+,T^-$ are respectively associated with the positive and negative part  of $\mu_T$ and hence belong to $\mathbb{T}^+$. This decomposition can be applied to the softmax gradient tree $\widetilde T(\,\cdot\,;F,\zeta)$ and we can write
\[
  \widetilde T(\,\cdot\,;F,\zeta)=\widetilde T^+(\,\cdot\,;F,\zeta)-\widetilde T^-(\,\cdot\,;F,\zeta).
\]
From now on, for the sake of lighter notation and because we see $\widetilde{T}$ as a (random) operator $\mathbb{T}\to \mathbb{T}$, we will omit the superfluous parts of our notation and simply write $\widetilde{T}(F, \zeta)$ --- and sometimes $\widetilde{T}(F)$ when identifying the randomness is not useful -- as a shorthand notation for $\widetilde{T}(\,\cdot\,;F,\zeta)$.
Therefore we now write the above decomposition simply as
\[
  \widetilde T(F)=\widetilde T^+(F)-\widetilde T^-(F).
\]

The gradient boosting sequence $(\hat F_m^\lambda)_{m\geq 0}$ is then decomposed recursively as $\hat F_0^\lambda= \hat F_0^{\lambda+}-\hat F_0^{\lambda-}$ and
\[
\left\{\begin{array}{ll}
\hat F_{m+1}^{\lambda+}&=\;\hat F_{m}^{\lambda+}+\lambda \widetilde T^+(\hat F_{m}^{\lambda},\zeta_{m+1})\\
\hat F_{m+1}^{\lambda-}&=\;\hat F_{m}^{\lambda-}+\lambda \widetilde T^-(\hat F_{m}^{\lambda},\zeta_{m+1})
\end{array}\right. ,\quad m\geq 0.
\]
Clearly, we have $\hat F_m^\lambda=\hat F_m^{\lambda +}-\hat F_m^{\lambda -}$, and $\hat F_m^{\lambda +},\hat F_m^{\lambda -}\in\mathbb{T}^+$ but it is not true in general that $\hat F_m^{\lambda +}$ and $\hat F_m^{\lambda -}$ are the positive and negative parts of $\hat F_m^{\lambda}$ because the measures associated to the two components may share some common mass (at point $0$) that vanishes when taking the difference.
This slight abuse of notation should however cause no confusion.
Finally, the infinitesimal gradient boosting operator can also be decomposed into a positive and negative part as $\mathcal{T}=\mathcal{T}^+-\mathcal{T}^-$ with
\[
\left\{\begin{array}{ll}
\mathcal{T}^+(F)&=\;\mathbb{E}_\zeta[\widetilde T^+(F,\zeta)]\\
\mathcal{T}^-(F)&=\;\mathbb{E}_\zeta[\widetilde T^-(F,\zeta)]
\end{array}\right. ,\quad F\in L^2_\mathbf{x}.
\]
Note here that the operators are defined on $L^2_\mathbf{x}$ and not $L^2$ --- this is required because we need the values $(F(x_i))_{1\leq i\leq n}$ to be well-defined in order to define the softmax gradient tree $\widetilde T(F)$. The measures associated with $\mathcal{T}^+(F)$ and $\mathcal{T}^-(F)$ may share some common mass so that the decomposition does not coincide with the Jordan decomposition.

The following theorem characterizes the convergence of $(\hat F_{[t/\lambda]}^{\lambda+},\hat F_{[t/\lambda]}^{\lambda-})_{t\geq 0}$ with deterministic limit given as the solution of a differential equation.

\begin{theorem}~ \label{thm:cv-L2x}
 \begin{enumerate}[(i)]
 \item  Given an initial condition $(F^+_0,F^-_0)\in L^2_\mathbf{x}\times L^2_\mathbf{x}$, there is a unique solution $(F^+(t),F^-(t))_{t\geq 0}$ to the ODE system 
  \begin{equation}\label{eq:ODE-pm}
    \left\{\begin{array}{l}
    \deriv{t}F^+(t) = \mathcal{T}^+(F(t))\\
    \deriv{t}F^-(t)= \mathcal{T}^-(F(t))
	\end{array}\right.\quad \mbox{with $F(t)=F^+(t)-F^-(t)$},\quad t\geq 0,
  \end{equation}
  started from $(F^+_0,F^-_0)$. \\
  Furthermore, if $(F^+_0,F^-_0)\in \mathbb{T}^+\times \mathbb{T}^+$, then $(F^+(t),F^-(t))\in \mathbb{T}^+\times \mathbb{T}^+$ for all $t\geq 0$ and $(F(t))_{t\geq 0}$ is equal to the solution of \eqref{eq:ODE} started from $F_0=F^+_0-F^-_0\in\mathbb{T}$.
  \item The convergence in distribution
\[
(\hat F_{[t/\lambda]}^{\lambda+},\hat F_{[t/\lambda]}^{\lambda-})_{t\geq 0}\stackrel{d}\longrightarrow (\hat F_t^+,\hat F_t^-)_{t\geq 0},\quad \text{as }\lambda\to 0,
\] 
holds in the Skorokhod space $\mathbb{D}([0,\infty),\overline{\mathbb{T}}^+\times \overline{\mathbb{T}}^+)$ endowed with the $J_1$-topology where  $(\hat F_t^+,\hat F_t^-)_{t\geq 0}$ is equal to the solution of \eqref{eq:ODE-pm} started from $(\hat F_0^+,\hat F_0^-)$. 
  \end{enumerate}
\end{theorem}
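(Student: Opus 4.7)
The plan is to prove (i) by a Cauchy–Lipschitz argument in the Banach space $L^2_\mathbf{x}\times L^2_\mathbf{x}$, and then to derive (ii) by a Kurtz-type fluid-limit argument based on tightness in $\mathbb{D}([0,\infty),\overline{\mathbb{T}}^+\times\overline{\mathbb{T}}^+)$ together with a martingale decomposition of the gradient boosting chain. For (i), I would first upgrade \Cref{prop:boosting-operator-Lipschitz} to show that $\mathcal{T}^+$ and $\mathcal{T}^-$ are locally Lipschitz from $L^2_\mathbf{x}$ to $(\mathbb{T},\|\cdot\|_{\mathrm{TV}})$; this should be a direct adaptation of its proof, noting that $\widetilde T^\pm(F,\zeta)$ depends on $F$ only through the finite vector $(F(x_i))_{1\leq i\leq n}$, so the Lipschitz property with respect to $\|\cdot\|_\infty$ transfers to one with respect to $\|\cdot\|_{L^2_\mathbf{x}}$ up to a constant depending on $n$. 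The Cauchy–Lipschitz theorem then gives local existence and uniqueness; global existence follows from an a priori bound coming from the monotone decrease of the training error $\sum_iL(y_i,F(t)(x_i))$ along solutions (as in \Cref{prop:properties}) combined with Assumption~\ref{ass:A2}, which bounds the leaf values $\tilde r(A_k)$ and hence $\|\mathcal{T}^\pm(F(t))\|_{\mathrm{TV}}$ uniformly. Positivity is preserved because $\mathcal{T}^\pm(F)\in\mathbb{T}^+$, making each $t\mapsto\mu_{F^\pm(t)}$ non-decreasing as a measure; by linearity, $F=F^+-F^-$ satisfies \eqref{eq:ODE}, and uniqueness from \Cref{thm:EDO-T} identifies it with the solution started at $F_0$.

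For (ii), write the telescoping decomposition
\[
  \hat F_m^{\lambda\pm} = \hat F_0^{\pm} + \lambda\sum_{k=0}^{m-1}\mathcal{T}^\pm(\hat F_k^\lambda) + M_m^{\lambda\pm}, \qquad M_m^{\lambda\pm}=\lambda\sum_{k=0}^{m-1}\bigl(\widetilde T^\pm(\hat F_k^\lambda,\zeta_{k+1})-\mathcal{T}^\pm(\hat F_k^\lambda)\bigr),
\]
so that each $M^{\lambda\pm}$ is a square-integrable $L^2_\mathbf{x}$-valued martingale. The crucial preliminary step is a uniform a priori bound $\sup_{m\leq T/\lambda}\|\hat F_m^{\lambda\pm}\|_{\mathrm{TV}}\leq C(T)$: using \Cref{prop:reg-tree-measure} to control each tree by a constant times its sup-norm, and Assumption~\ref{ass:A2} combined with a quasi-monotonicity argument on the training error (whose one-step decrease is exact up to an $O(\lambda^2)$ Taylor remainder) to control the leaf values, one obtains that the loss stays bounded along the discrete trajectory. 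Given this bound, tightness in $\mathbb{D}([0,\infty),\overline{\mathbb{T}}^+\times\overline{\mathbb{T}}^+)$ follows from standard arguments: compact containment comes from \Cref{prop:L2-T+}(i), and the Skorokhod modulus on any compact time interval is $O(\lambda)$ because individual jumps have $L^2_\mathbf{x}$-norm at most $\lambda C(T)$.

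To identify the limit, I would bound the martingale part: its $\lambda^2$-weighted quadratic variation is $O(\lambda)$ on $[0,T]$, so Doob's inequality yields $\sup_{m\leq T/\lambda}\|M_m^{\lambda\pm}\|_{L^2_\mathbf{x}}=O_\mathbb{P}(\sqrt\lambda)$, which is also the mechanism behind \Cref{thm:second-order}. Combined with the Riemann-sum approximation $\lambda\sum_{k<[t/\lambda]}\mathcal{T}^\pm(\hat F_k^\lambda)\approx \int_0^t \mathcal{T}^\pm(\hat F_s^\lambda)\,\rmd s$, the local Lipschitz property of $\mathcal{T}^\pm$, and a Gronwall estimate, any subsequential limit must solve \eqref{eq:ODE-pm}, and uniqueness from (i) yields the announced convergence in distribution. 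I expect the main obstacle to be the uniform a priori total-variation bound on $\hat F_m^{\lambda\pm}$: because the positive and negative parts can accumulate mass much faster than their difference, one must control the growth of $\|\hat F_m^{\lambda+}\|_{\mathrm{TV}}+\|\hat F_m^{\lambda-}\|_{\mathrm{TV}}$ without relying on cancellations, and making the near-monotonicity of the loss precise under the softmax randomness will require a careful second-order analysis of the line search approximation.
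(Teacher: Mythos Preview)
Your proposal is correct and follows essentially the same route as the paper: local Lipschitz continuity of $\mathcal{T}^\pm$ on $L^2_\mathbf{x}$ plus training-error control for (i), and tightness via uniform TV bounds plus a martingale decomposition for (ii). The only refinement is that the paper resolves your anticipated ``main obstacle'' by proving \emph{exact} monotonicity of the training error for all $\lambda\le\lambda_0(T)$ (Proposition~\ref{prop:monotonicity}, via the Taylor-type Lemma~\ref{lem:lambda-K}) rather than quasi-monotonicity, which immediately yields the clean bound $\|\hat F_{[t/\lambda]}^{\lambda\pm}\|_{\mathrm{TV}}\le\|F_0\|_{\mathrm{TV}}+4^dMT$, and identifies the limit via the continuous mapping theorem applied to $F\mapsto F_t-F_0-\int_0^t\mathcal{T}^\pm(F_s)\,\rmd s$ rather than Gr\"onwall---but both variants work.
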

In point \textit{(ii)}, since the limit is time continuous, the convergence also holds with respect to the topology of uniform-in-time convergence on compact sets of $[0, \infty)$.
An important tool in the proof of Theorem~\ref{thm:cv-L2x} is a control of the training error 
\[
L_n(F)= \sum_{i=1}^n L(y_i,F(x_i)),\quad F\in\mathbb{B}.
\]
The next proposition states that for small learning rates, the training error is almost surely non-increasing along the gradient boosting sequence $(\hat F_m^\lambda)_{m\geq 0}$. This in turn yields a control of the increments of the sequence because --- as we will show in the course of the proof of Theorem~\ref{thm:cv-L2x}, see Equation~\eqref{eq:totalVarBounded} --- the total variation of the gradient tree $\norm{\widetilde T(\,\cdot\,;F,\zeta)}_{\mathrm{TV}}$ is uniformly bounded on the level set $\{F\in\mathbb{B}:L_n(F)\leq C\}$.
\begin{proposition}\label{prop:monotonicity}
  For all $T>0$, there exists $\lambda_0>0$ such that $\lambda\in (0,\lambda_0]$ implies that 
  \[
  t\mapsto L_n(\hat F_{[t/\lambda]}^\lambda) \quad \mbox{is a.s. non-increasing on $[0,T]$}.
  \] 
\end{proposition}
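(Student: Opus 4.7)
The strategy is to combine a second-order Taylor expansion of $L_n$ at $\hat F_m^\lambda$ with an induction on $m$, confining the trajectory to the sublevel set $\mathcal{L}_0 := \{F \in \mathbb{B} : L_n(F) \leq L_0\}$, where $L_0 = L_n(\hat F_0^\lambda)$ is a deterministic constant depending only on the data and the loss. First I would observe that for any $F \in \mathcal{L}_0$ and any non-empty leaf $A_k$, the Newton-type leaf value~\eqref{eq:gradient-tree-line-search-approximation} is a weighted average (with positive weights summing to $1$) of $-\pderiv{z}{L}(y_i,F(x_i))/\pderiv[2]{z}{L}(y_i,F(x_i))$ over $\{i:x_i\in A_k\}$, so by Assumption~\ref{ass:A}~\ref{ass:A2} there is a constant $B$ with $|\tilde r(A_k)| \leq B$ uniformly over $F\in\mathcal{L}_0$, all splitting schemes, and all non-empty leaves. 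Consequently, while the trajectory stays in $\mathcal{L}_0$, each step moves $F(x_i)$ by at most $\lambda B$, so on $[0,T/\lambda]$ the values $(\hat F_m^\lambda(x_i))_i$ remain in a fixed compact interval $K=K(T)$ depending only on the data, the loss and $T$—but not on $\lambda$ or $m$.

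\textbf{Key computation.} Writing $F = \hat F_m^\lambda$ and $\widetilde T = \widetilde T(\,\cdot\,;F,\zeta_{m+1})$, and using that $\widetilde T(x_i) = \tilde r(A_{k(i)})$ is constant on each leaf, Taylor's theorem gives
\begin{equation*}
  L_n(F + \lambda \widetilde T) - L_n(F)
  = \lambda \sum_{i=1}^n \pderiv{z}{L}(y_i, F(x_i))\,\widetilde T(x_i)
  + \frac{\lambda^2}{2} \sum_{i=1}^n \pderiv[2]{z}{L}(y_i, F(x_i) + \theta_i)\, \widetilde T(x_i)^2,
\end{equation*}
for some $|\theta_i| \leq \lambda |\widetilde T(x_i)| \leq \lambda B$. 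Regrouping by leaves and injecting $\tilde r(A_k) = -G_k/H_k$ with $G_k = \sum_{i\in A_k}\pderiv{z}{L}(y_i,F(x_i))$ and $H_k = \sum_{i\in A_k}\pderiv[2]{z}{L}(y_i,F(x_i))$, the first-order term simplifies exactly to $-\lambda \sum_k \tilde r(A_k)^2 H_k$. With $\tilde H_k = \sum_{i\in A_k}\pderiv[2]{z}{L}(y_i, F(x_i)+\theta_i)$, this yields
\[
  L_n(F + \lambda \widetilde T) - L_n(F) \;=\; \sum_k \tilde r(A_k)^2 \Bigl( -\lambda H_k + \tfrac{\lambda^2}{2} \tilde H_k \Bigr),
\]
and it suffices to show that each bracket is non-positive for $\lambda$ small enough.

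\textbf{Uniform bounds and closing the induction.} By Assumption~\ref{ass:A}~\ref{ass:A1}, $\pderiv[2]{z}{L}$ is positive and locally Lipschitz, so on a fixed compact $\tilde K$ slightly enlarging $K$ it satisfies $0 < c_2 \leq \pderiv[2]{z}{L}(y_i,z) \leq C_2$ uniformly in $i$ and $z \in \tilde K$. Choosing $\lambda_0 = \lambda_0(T)$ with $\lambda_0 B$ small enough that $F(x_i) + \theta_i \in \tilde K$ whenever $F(x_i) \in K$ and $|\theta_i| \leq \lambda_0 B$, and moreover $\lambda_0 \leq 2c_2/C_2$, we get $\tilde H_k \leq (C_2/c_2)H_k$, hence $-\lambda H_k + \lambda^2 \tilde H_k/2 \leq 0$ for each non-empty leaf (empty leaves contribute zero by the convention $\tilde r(A_k) = 0$). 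This establishes the one-step inequality $L_n(F + \lambda \widetilde T) \leq L_n(F)$ for any $F \in \mathcal{L}_0$ with $(F(x_i))_i \in K^n$. An immediate induction on $m \leq T/\lambda$ then closes the loop: $\hat F_0^\lambda \in \mathcal{L}_0$ and $(\hat F_0^\lambda(x_i))_i$ is (trivially) in $K$; and if the inductive hypothesis holds at step $m$, then both properties are preserved at step $m+1$ thanks to the one-step inequality and the $\lambda B$-bound on displacements. The main delicate point is precisely this uniformity: it is crucial that the $T$-dependent compact $K$ be fixed before choosing $\lambda_0$, so that $c_2$, $C_2$ (and hence $\lambda_0$) can be taken to hold simultaneously for every state visited by the trajectory on $[0,T]$, regardless of $\lambda$.
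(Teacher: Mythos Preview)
The proposal is correct and follows essentially the same approach as the paper: a leaf-by-leaf second-order Taylor expansion yielding the identity $L_n(F+\lambda\widetilde T)-L_n(F)=\sum_k \tilde r(A_k)^2(-\lambda H_k+\tfrac{\lambda^2}{2}\tilde H_k)$, combined with a bootstrap confining the trajectory to a $T$-dependent compact on which the second derivatives are uniformly bounded above and below. The paper merely packages the one-step inequality as a separate lemma (Lemma~\ref{lem:lambda-K}) and phrases the confinement via an exit-time argument rather than an explicit induction, but the substance is identical.
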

Note that in the case of regression $L(y,z)=\frac{1}{2}(y-z)^2$, Proposition~\ref{prop:monotonicity} trivially holds for all $T>0$ and $\lambda>0$  because the tree values are obtained by line search so that the training error cannot increase, see Example~\ref{ex:regression} and Equation~\eqref{eq:boosting-line-search}. In the general case however, the one-step approximation~\eqref{eq:boosting-line-search-approximation} does not ensure such a property but the proposition states that monotonicity still holds for small learning rates.

\subsection{Second-order variations} \label{sec:CLT}

We announced in \Cref{thm:second-order} in the introduction that second-order variations of the stochastic dynamics around the infinitesimal gradient boosting limit are of order $\sqrt{\lambda}$.
Rather than postponing the very short proof of this result, we develop it here as it will enable us to heuristically derive a more precise form of second-order variations.
\begin{proof}[Proof of \Cref{thm:second-order}]
In the course of the proof of \Cref{thm:cv-L2x} (see Equations~\eqref{eq:snd-variation-defG} and~\eqref{eq:G-close-to-M}), we show that one can write (we ignore the decomposition into positive and negative parts in this section)
\begin{equation} \label{eq:martingale-decomp}
\hat{F}^{\lambda}_{[t/\lambda]} - \hat{F}^{\lambda}_0 = M^{\lambda}_{[t/\lambda]}+ \int_0^{t}\mathcal{T}(\hat{F}^{\lambda}_s)\,\rmd s + O_{\mathbb{P}}(\lambda),
\end{equation}
where the $O_{\mathbb{P}}(\lambda)$ is uniform in $t\in[0,T]$ and $M^{\lambda}_{[t/\lambda]}$ is a square-integrable martingale in $L^2_{\mathbf{x}}$ satisfying
\[
\sup_{0\leq t\leq T}\norm{M^{\lambda}_{[t/\lambda]}}^2_{L^2_{\mathbf{x}}} = O_{\mathbb{P}}(\lambda).
\]
In the proof, this is used to show that any limiting process $(\hat{F}_t)_{t\geq 0}$ satisfies
\[
  \hat{F}_t - \hat{F}_0 = \int_0^{t}\mathcal{T}(\hat{F}^{\lambda}_s)\,\rmd s,
\]
and therefore must be the infinitesimal gradient boosting limit, defined as the solution of the ODE $\deriv{t}\hat{F}_t=\mathcal{T}(\hat{F}_t)$.
Subtracting this limit in \eqref{eq:martingale-decomp}, we get
\[
\hat{F}^{\lambda}_{[t/\lambda]} - \hat{F}_t = M^{\lambda}_{[t/\lambda]}+ \int_0^{t}\Big(\mathcal{T}(\hat{F}^{\lambda}_{[s/\lambda]})-\mathcal{T}(\hat{F}_s)\Big)\,\rmd s + O_{\mathbb{P}}(\lambda).
\]
Using the fact that for $\lambda$ small enough, $\hat{F}^\lambda_{[t/\lambda]}$ stays in a domain $E\subset L^2_{\mathbf{x}}$ where $\mathcal{T}$ is $C$-Lipschitz for some $C>0$, we can apply Grönwall's lemma to deduce
\[
  \sup_{t\in [0,T]}\norm{\hat{F}^\lambda_{[t/\lambda]}-\hat{F}_t}_{L^2_{\mathbf{x}}} \leq \Big(\sup_{t\in [0,T]}\norm{M^\lambda_{[t/\lambda]}}_{L^2_{\mathbf{x}}} + O_{\mathbb{P}}(\lambda)\Big)e^{TC} = O_{\mathbb{P}}(\sqrt{\lambda}),
\]
which concludes the proof.
\end{proof}

\paragraph{Heuristics for a CLT}

We have shown that variations around the infinitesimal gradient boosting limit are of order $\sqrt{\lambda}$.
In fact, possibly under stronger assumptions than \Cref{ass:A}, we conjecture that $\mathcal{T}$ is continuously differentiable as an operator $L^2_{\mathbf{x}}\to L^2_{\mathbf{x}}$, and that the following central limit theorem holds.

\begin{conjecture}[Central limit theorem]
  We have the following convergence
  \[
  \frac{1}{\sqrt{\lambda}} (\hat{F}^\lambda_{[t/\lambda]} - \hat{F}_t)_{t\geq 0} \;\stackrel{d}\longrightarrow \; \mathscr{F},
  \]
  where the convergence holds in distribution on $\mathbb{D}([0,\infty),L^2_{\mathbf{x}})$.
  The process $\mathscr{F} = (\mathscr{F}_t)_{t\geq 0}$ is a zero-mean continuous Gaussian process in  $L^2_{\mathbf{x}}$ characterized by the  SDE
  \[
  \mathrm{d}\mathscr{F}_t \;=\; \mathrm{d} \mathscr{G}_t + (\mathrm{d}_{\hat{F}_t}\mathcal{T})(\mathscr{F}_t) \,\mathrm{d}t,
  \]
  where $\mathscr{G}$ is a cylindrical Wiener process on $L^2_{\mathbf{x}}$ with  covariance structure
  \[
  \E \left[\langle\mathscr{G}_t,f\rangle \langle\mathscr{G}_s,g\rangle\right] \;=\;\int_0^{t\wedge s} \E\left[\langle f, \widetilde{T}(\hat{F}_u)-\mathcal{T}(\hat{F}_u)\rangle\langle g, \widetilde{T}(\hat{F}_u)-\mathcal{T}(\hat{F}_u)\rangle\right].
  \]
\end{conjecture}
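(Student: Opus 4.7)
The natural starting point is the martingale decomposition already derived in the proof of \Cref{thm:second-order}, namely
\[
\hat{F}^{\lambda}_{[t/\lambda]} - \hat{F}_t \;=\; M^{\lambda}_{[t/\lambda]} + \int_0^{t}\bigl(\mathcal{T}(\hat{F}^{\lambda}_{[s/\lambda]})-\mathcal{T}(\hat{F}_s)\bigr)\,\rmd s + O_{\mathbb{P}}(\lambda).
\]
Setting $Z^\lambda_t := \lambda^{-1/2}(\hat{F}^\lambda_{[t/\lambda]}-\hat{F}_t)$, the plan is to show that $Z^\lambda$ satisfies an approximate linear SDE whose driving noise $\lambda^{-1/2}M^\lambda_{[\cdot/\lambda]}$ converges to the Wiener process $\mathscr{G}$, and to identify the limit by a continuous-mapping argument together with uniqueness of the limiting linear SDE. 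The hard part, in my view, is the regularity of $\mathcal{T}$: the paper only proves local Lipschitz continuity, whereas the conjecture requires genuine Fréchet differentiability of $\mathcal{T}:L^2_{\mathbf{x}}\to L^2_{\mathbf{x}}$. One would probably need to revisit the proof of \Cref{prop:boosting-operator-Lipschitz} and strengthen the regularity assumptions on $L$, for instance by requiring $\partial_z^2 L$ and $\partial_z L/\partial_z^2 L$ to be $C^1$, so that differentiation under the expectation with respect to the splitting scheme distribution $P_{\beta,K}$ becomes rigorously justified via the Radon--Nikodym formula~\eqref{eq:RN}.

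For the noise term, I would establish a functional martingale CLT in the Hilbert space $L^2_{\mathbf{x}}$ for $N^\lambda_t := \lambda^{-1/2}M^\lambda_{[t/\lambda]}$. The martingale increments are $\sqrt{\lambda}(\widetilde T(\hat F^\lambda_m,\zeta_{m+1})-\mathcal{T}(\hat F^\lambda_m))$, so the predictable quadratic variation is
\[
\langle N^\lambda\rangle_t \;=\; \sum_{m=0}^{[t/\lambda]-1}\lambda\,\E_\zeta\!\left[\bigl(\widetilde T(\hat F^\lambda_m,\zeta)-\mathcal{T}(\hat F^\lambda_m)\bigr)^{\otimes 2}\right],
\]
which, by the convergence $\hat F^\lambda_{[\cdot/\lambda]}\to \hat F$ from \Cref{thm:cv-L2x} and continuity of the conditional covariance operator in its argument (a consequence of the Lipschitz estimates in Section~\ref{sec:formal-softmax-trees}), converges to $\int_0^t \E_\zeta[(\widetilde T(\hat F_s,\zeta)-\mathcal{T}(\hat F_s))^{\otimes 2}]\,\rmd s$. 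Applied to test functions $f,g\in L^2_{\mathbf{x}}$, this yields exactly the announced covariance of $\mathscr{G}$. The standard route is then Jacod--Shiryaev-type Hilbert-space martingale CLTs (e.g.\ the central limit theorem for Hilbert-valued martingales in~\cite{JS03} or its adaptations), together with a negligible-jumps condition that follows from the uniform bound $\norm{\widetilde T(F,\zeta)}_{\mathrm{TV}}=O(1)$ on the level set where $L_n$ is controlled, combined with $\sup_m\lambda\norm{\widetilde T(\hat F^\lambda_m,\zeta_{m+1})}_{L^2_\mathbf{x}}=O(\lambda)$.

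For the drift term, a first-order Taylor expansion gives
\[
\frac{\mathcal{T}(\hat F^\lambda_{[s/\lambda]})-\mathcal{T}(\hat F_s)}{\sqrt{\lambda}} \;=\; (\rmd_{\hat F_s}\mathcal{T})(Z^\lambda_s) + R^\lambda_s,
\]
where $\norm{R^\lambda_s}_{L^2_\mathbf{x}}\leq \omega(\sqrt{\lambda}\|Z^\lambda_s\|)\|Z^\lambda_s\|$ and $\omega$ is the modulus of continuity of $\rmd \mathcal{T}$. Combining the two pieces, $Z^\lambda$ satisfies
\[
Z^\lambda_t = N^\lambda_t + \int_0^t (\rmd_{\hat F_s}\mathcal{T})(Z^\lambda_s)\,\rmd s + \varepsilon^\lambda_t,
\]
with $\sup_{t\leq T}\|\varepsilon^\lambda_t\|_{L^2_\mathbf{x}}\to 0$ in probability, thanks to \Cref{thm:second-order} which ensures $\sup_t\sqrt{\lambda}\|Z^\lambda_t\|=O_{\P}(\sqrt{\lambda})$. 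Since the mild formulation
\[
Z^\lambda_t = \int_0^t U(t,s)\,\rmd N^\lambda_s + \varepsilon^{\prime\lambda}_t,\qquad \mathscr{F}_t = \int_0^t U(t,s)\,\rmd\mathscr{G}_s,
\]
holds, where $U(t,s)$ is the propagator of the linearized equation $\deriv{t}v=(\rmd_{\hat F_t}\mathcal{T})(v)$, the continuous mapping theorem applied to this stochastic convolution finishes the identification.

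Finally, tightness of $(Z^\lambda)$ in $\mathbb{D}([0,\infty),L^2_{\mathbf{x}})$ has to be established separately; this is the delicate step in infinite dimensions since $L^2_{\mathbf{x}}$ is not locally compact. The natural route is to prove tightness of finite-dimensional projections (using the martingale CLT above plus Grönwall on the linear equation) and to combine this with a compact containment condition obtained by showing that $Z^\lambda_t$ stays, with high probability, in a bounded set of a Hilbert subspace with compact injection into $L^2_{\mathbf{x}}$ --- for instance a space controlling total variation of the iterates' positive and negative parts, using the monotonicity of the training error from \Cref{prop:monotonicity}. The main obstacle, aside from differentiability of $\mathcal{T}$, is the need to produce such a compact containment set adapted to the signed, non-monotone objects $Z^\lambda$, for which the nice compactness properties of $\overline{\mathbb{T}}^+$ from \Cref{prop:L2-T+} do not directly apply.
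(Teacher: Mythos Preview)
Your outline aligns closely with the paper's own treatment: this statement is presented as a \emph{conjecture}, and the paper gives only heuristics rather than a proof. Those heuristics follow exactly the route you describe --- start from the martingale decomposition, rescale by $1/\sqrt{\lambda}$, linearize the drift via a putative differential $\rmd_{\hat F_t}\mathcal{T}$, and argue that the rescaled martingale $M^\lambda/\sqrt{\lambda}$ should converge to a Gaussian driving noise with the indicated second Meyer process. The paper explicitly flags the two obstacles you identify: Fr\'echet differentiability of $\mathcal{T}$ (which would require assumptions beyond \Cref{ass:A}) and tightness/convergence of the martingale part in the infinite-dimensional space $L^2_{\mathbf{x}}$.

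Where you go further than the paper is in proposing concrete tools: a Hilbert-space martingale CLT via convergence of the predictable quadratic variation, the mild formulation with the propagator $U(t,s)$ of the linearized equation, and an attempt at compact containment via total-variation control of the positive and negative parts. The paper instead invokes an informal Dubins--Schwarz analogy (continuous martingales with deterministic quadratic variation are Wiener processes) and stops there. Your additions are reasonable and in the right spirit, but two caveats are worth flagging. First, the remainder bound you write, $\norm{R^\lambda_s}\leq \omega(\sqrt{\lambda}\norm{Z^\lambda_s})\norm{Z^\lambda_s}$, only gives $\sup_s\norm{R^\lambda_s}\to 0$ in probability provided $\sup_s\norm{Z^\lambda_s}=O_{\mathbb{P}}(1)$; \Cref{thm:second-order} gives this in $L^2_{\mathbf{x}}$-norm, but you will also need it uniformly in $s$, which requires the tightness you have not yet established --- so there is a mild circularity to untangle. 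Second, your compact-containment idea via $\overline{\mathbb{T}}^+$ is, as you yourself note, not directly applicable to the signed fluctuations $Z^\lambda$, and this is precisely where the paper says ``careful arguments to tackle the inevitable technical difficulties related to infinite-dimensional diffusion processes'' would be needed; neither you nor the paper resolve this, which is why the statement remains a conjecture.
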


The reason for this limit is the following.
First let us define
\begin{align*}
\mathscr{F}^\lambda_{t} &= \frac{1}{\sqrt{\lambda}} (\hat{F}^\lambda_{[t/\lambda]} - \hat{F}_t)\\
& = \frac{1}{\sqrt{\lambda}}M^\lambda_{[t/\lambda]} + \frac{1}{\sqrt{\lambda}}\int_0^t\big(\mathcal{T}(\hat{F}^\lambda_{[s/\lambda]})-\mathcal{T}(\hat{F}_s)\big)\,\mathrm{d}s \;+\; O_{\mathbb{P}}(\sqrt{\lambda}).
\end{align*}
Assuming that $\mathcal{T}$ is regular enough, we should have
\[
\frac{1}{\sqrt{\lambda}} \big(\mathcal{T}(\hat{F}^\lambda_{[s/\lambda]})-\mathcal{T}(\hat{F}_s)\big) \;\simeq\; \rmd\mathcal{T}_{\hat{F}_s}(\mathscr{F}^\lambda_s) + O_{\mathbb{P}}(1),
\]
so the conjecture essentially relies on checking tightness and convergence of the sequence of martingales
$M^\lambda/\sqrt{\lambda}$.
Note that
\[
\frac{1}{\sqrt{\lambda}}M^\lambda_{[t/\lambda]} = \sum_{k=0}^{[t/\lambda]} \sqrt{\lambda}\big(\widetilde{T}(\hat{F}^{\lambda}_k) - \mathcal{T}(\hat{F}^{\lambda}_k)\big),
\]
where the $(\widetilde{T}(\hat{F}^{\lambda}_k) - \mathcal{T}(\hat{F}^{\lambda}_k))$ terms are zero-mean and essentially independent with integrable square norm.
Heuristically, $M^\lambda/\sqrt{\lambda}$ should converge to a continuous martingale $M$ with a deterministic ``second Meyer process'' \citep[the infinite-dimensional equivalent of the quadratic variation, see][]{Met82} given by
\[
\langle\!\langle M \rangle\!\rangle_t = \int_0^t\E\left[\big(\widetilde{T}(\hat{F}_{[s/\lambda]})-\mathcal{T}(\hat{F}_s)\big)^{\otimes2}\right] \,\rmd s,
\]
where $h^{\otimes 2}$  denotes the bilinear form $(f,g)\mapsto\langle f, h\rangle\langle g,h\rangle$.
An informal adaptation of the Dubins--Schwarz theorem, i.e.\ that continuous martingales with deterministic quadratic variation are Wiener processes, suggests that any continuous martingale with the above second Meyer process is the Gaussian process $\mathscr{G}$ described in the conjecture.

\subsection{Properties of infinitesimal gradient boosting} \label{sec:prop-inf-gb}
We call infinitesimal gradient boosting the solution $(\hat F_t)_{t\geq 0}$ of the differential Equation~\eqref{eq:ODE} with initial condition~\eqref{eq:boosting-init}. In the following, we consider some properties of infinitesimal gradient boosting, including the behavior of training error and residuals, the asymptotic behavior as $t\to+\infty$ and also the space-time regularity.

\begin{proposition}\label{prop:properties}
Infinitesimal gradient boosting satisfies the following properties:
\begin{enumerate}[(i)]
\item the training error $L_n(\hat F_t)=\sum_{i=1}^n L(y_i,\hat F_t(x_i))$, $t\geq 0$, is non-increasing;
\item the residuals $r_{t,i}=\pderiv{z}L(y_i,\hat F_t(x_i))$, $1\leq i\leq n$, are centered, i.e. 
\[
\bar r_t:=\frac{1}{n}\sum_{i=1}^n r_{t,i}\equiv 0,\quad t\geq 0.
\]
\end{enumerate}
\end{proposition}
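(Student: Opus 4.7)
The plan is to differentiate both quantities along the ODE~\eqref{eq:ODE} and exploit the explicit formula~\eqref{eq:gradient-tree-line-search-approximation} for the softmax gradient tree leaf values. Since pointwise evaluation $F\mapsto F(x_i)$ is a continuous linear functional on $\mathbb{B}$, the map $t\mapsto \hat F_t(x_i)$ is differentiable with derivative $\mathcal{T}(\hat F_t)(x_i) = \E_\zeta[\widetilde T(x_i;\hat F_t,\zeta)]$. For a realization of $\zeta$ inducing the leaves $(A_k)_k$, I introduce the leaf sums
\[
S_k(t) = \sum_{i:x_i\in A_k}\pderiv{z}L(y_i,\hat F_t(x_i)),\qquad H_k(t) = \sum_{i:x_i\in A_k}\pderiv[2]{z}L(y_i,\hat F_t(x_i)),
\]
so that Definition~\ref{def:softmax-gradient-tree} reads $\tilde r(A_k) = -S_k(t)/H_k(t)$, with the convention $0/0=0$ on empty leaves.

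For statement~\textit{(i)}, the chain rule and the fact that each $x_i$ lies in exactly one leaf yield
\[
\deriv{t}L_n(\hat F_t) \;=\; \sum_{i=1}^n\pderiv{z}L(y_i,\hat F_t(x_i))\,\mathcal{T}(\hat F_t)(x_i) \;=\; \E_\zeta\bigg[\sum_k\tilde r(A_k)S_k(t)\bigg] \;=\; -\,\E_\zeta\bigg[\sum_k\frac{S_k(t)^2}{H_k(t)}\bigg] \;\leq\; 0,
\]
where positivity of $H_k(t)$ on non-empty leaves comes from assumption~\ref{ass:A1}.

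For statement~\textit{(ii)}, applying the same computation to $g(t):=\sum_i\pderiv{z}L(y_i,\hat F_t(x_i))$ produces the linear ODE
\[
g'(t) \;=\; \sum_{i=1}^n\pderiv[2]{z}L(y_i,\hat F_t(x_i))\,\mathcal{T}(\hat F_t)(x_i) \;=\; \E_\zeta\bigg[\sum_k\tilde r(A_k)H_k(t)\bigg] \;=\; -\,\E_\zeta\bigg[\sum_k S_k(t)\bigg] \;=\; -g(t).
\]
Since the initialization~\eqref{eq:boosting-init} is by construction the minimizer of the strictly convex function $z\mapsto\frac{1}{n}\sum_i L(y_i,z)$, its first-order optimality condition gives $g(0)=0$. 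The linear ODE then forces $g\equiv 0$ on $[0,\infty)$, i.e.\ $\bar r_t\equiv 0$.

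No substantive obstacle arises here: the exchange of the finite sum over $i$ with $\E_\zeta$ is immediate, pointwise differentiation of $\hat F_t$ along the ODE is justified by the continuity of evaluation on $\mathbb{B}$, and the $0/0$ convention is harmless because empty leaves contribute zero to every sum involved. The essential structural input is the line-search one-step approximation~\eqref{eq:gradient-tree-line-search-approximation}, which is precisely what turns $\sum_k\tilde r(A_k)S_k$ into a sum of non-positive squares and makes $\sum_k\tilde r(A_k)H_k$ collapse to $-g(t)$.
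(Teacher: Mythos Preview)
Your proof is correct and follows essentially the same route as the paper: differentiate along the ODE, expand $\mathcal{T}(\hat F_t)(x_i)$ via the leaf-value formula~\eqref{eq:gradient-tree-line-search-approximation}, and observe that the resulting expression is a negative sum of squares for~\textit{(i)} and collapses to the linear ODE $g'(t)=-g(t)$ for~\textit{(ii)}. The paper's own derivation of~\textit{(i)} already appears as Equation~\eqref{eq:negative-derivative} in the proof of Theorem~\ref{thm:EDO-T}, and its treatment of~\textit{(ii)} is identical to yours.
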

The first point is very natural since gradient boosting aims at minimizing the training error. The second point can be interpreted as follows:  it is not possible to reduce the training error by adding a constant term to the model; indeed,
a simple computation shows that $c\mapsto L_n(\hat F_t+c)$ has derivative $n\bar r_t=0$ at $c=0$ and, the function being convex, this corresponds to a minimum. As will be clear from the proof, this property is due to the initialization~\eqref{eq:boosting-init} and to the line search approximation~\eqref{eq:boosting-line-search-approximation}.

We next consider the long time behavior of infinitesimal gradient boosting. A few more assumptions are required for our analysis.

\begin{assumption}~ \label{ass:Abis}
\begin{enumerate}[label={$(A\arabic*)$}] \addtocounter{enumi}{2}
\item \label{ass:A3} for all $y\in\mathbb{R}$, $\inf_z L(y,z)=0$; 
\item \label{ass:A4} for all $y\in\mathbb{R}$ and all $R > 0$, $\sup_{z: |{\pderiv{z}L(y,z)}| \leq R}  \pderiv[2]{z}L(y,z) < \infty$;
\item \label{ass:A5} there is $J\subset [\![1,p]\!]$ with $\abs{J}\leq d$ such that $(x_i^J)_{1\leq i\leq n}$ are pairwise distinct, where $x_i^J=(x_i^j)_{j\in J}\in [0,1]^J$.
\end{enumerate}
\end{assumption}

Assumptions~\ref{ass:A3} and \ref{ass:A4} state conditions on the loss function $L$. It is easily checked that the classical loss functions for regression and classification satisfy those conditions, see Examples \ref{ex:regression}--\ref{ex:AdaBoost}. Furthermore,  \ref{ass:A3} can always be assumed without loss of generality  because shifting the loss function by its infimum does not affect the gradient boosting algorithm. On the other hand, Assumption~\ref{ass:A5} states a condition on the training sample $(x_i)_{1\leq i\leq n}$ and is  discussed in Remark~\ref{rk:A3} below. It is used in the following proposition characterizing the critical points of the ODE~\eqref{eq:ODE}.

\begin{proposition}\label{prop:critical-points}
Under Assumption~\ref{ass:A5},  $\mathcal{T}(F)=0$ if and only if 
\[
\pderiv{z}L(y_i,F(x_i))=0,\quad \mbox{for all $1\leq i\leq n$}.
\]
\end{proposition}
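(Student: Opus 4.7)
Write $g_i := \partial_z L(y_i,F(x_i))$ and $h_i := \partial_z^2 L(y_i,F(x_i))$; by \ref{ass:A1} the latter is strictly positive.

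For the $(\Leftarrow)$ direction the argument is immediate: if every $g_i$ vanishes, then for any splitting scheme the numerator in \eqref{eq:gradient-tree-line-search-approximation} is zero and the denominator is strictly positive, so every leaf value $\tilde r(A_v)$ is zero, $\widetilde T(\,\cdot\,;F,\zeta)\equiv 0$ almost surely, and $\mathcal T(F) = 0$.

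For the $(\Rightarrow)$ direction, my starting point is the identity
\[
\sum_{i=1}^n g_i\,\mathcal T(F)(x_i)
\;=\; \mathbb E_\xi\!\bigg[\sum_{v\in\{0,1\}^d}\tilde r(A_v(\xi))\sum_{i:\,x_i\in A_v(\xi)}\!g_i\bigg]
\;=\; -\,\mathbb E_\xi\!\bigg[\sum_{v}\frac{\bigl(\sum_{i:\,x_i\in A_v(\xi)} g_i\bigr)^{2}}{\sum_{i:\,x_i\in A_v(\xi)} h_i}\bigg],
\]
obtained by regrouping the sum over $i$ by leaves and substituting the explicit form of $\tilde r(A_v)$ given by \eqref{eq:gradient-tree-line-search-approximation} (with the convention that empty leaves contribute $0$). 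The assumption $\mathcal T(F)\equiv 0$ forces the left-hand side to vanish, while the right-hand side is the opposite of a non-negative expectation; combined with $h_i > 0$ this yields that, $P_{\beta,K}$-a.s., every non-empty leaf $A_v(\xi)$ satisfies $\sum_{i:\,x_i\in A_v(\xi)} g_i = 0$. Since Proposition~\ref{prop:RN-derivative} provides a strictly positive Radon--Nikodym density $\rmd P_{\beta,K}/\rmd P_0$, the same conclusion also holds $P_0$-a.s.

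The closing step is to pass from this wealth of leaf-wise linear constraints to ``$g_i=0$ for every $i$'' using~\ref{ass:A5}. I work on the positive-$P_0$-probability event that all $2^d-1$ splits are drawn from the coordinate set $J$; then the thresholds $(u_v)$ are i.i.d.\ uniform on $(0,1)$. Choosing every $u_v$ close to $0$ produces a depth-$d$ tree with a single non-empty leaf containing all of $\{x_1,\ldots,x_n\}$, which gives the global constraint $\sum_{i=1}^n g_i = 0$. I then argue by induction on the cardinality of the set $\mathcal H$ of indices for which $g_i=0$ is already known. At each step, among the samples $x_i$ with $i\notin\mathcal H$, I pick $x_{i_0}$ that is lexicographically minimal in the coordinates of $J$; distinctness of $(x_i^J)$ from~\ref{ass:A5}, together with the bound $|J|\leq d$, lets me construct (on a positive-$P_0$-probability event) a depth-$d$ splitting scheme whose leaf containing $x_{i_0}$ has $I(A)\setminus\mathcal H = \{i_0\}$. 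The leaf may still contain samples from $\mathcal H$, but those contribute $0$ to the leaf sum, so the constraint reads $g_{i_0}=0$ and the induction continues.

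\textbf{Main obstacle.} The algebraic identity is routine. The delicate part is the final combinatorial argument: a depth-$d$ leaf is merely a $d$-fold intersection of axis-aligned half-spaces, not a genuine hypercube, so an ``interior'' sample cannot in general be isolated in its own leaf (e.g.\ a sample located at the centroid of a symmetric configuration of four others). The induction via lexicographically-extremal samples---using previously-handled samples freely because their $g$'s are already zero---together with the aggregate constraint $\sum_i g_i = 0$ is the right bookkeeping to convert~\ref{ass:A5} into the full conclusion.
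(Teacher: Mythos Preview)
Your proof is correct and follows essentially the same approach as the paper: both directions hinge on the identity
\[
\sum_{i=1}^n g_i\,\mathcal T(F)(x_i)=-\mathbb E_\xi\Big[\sum_v\frac{(\sum_{i:x_i\in A_v}g_i)^2}{\sum_{i:x_i\in A_v}h_i}\Big],
\]
the transfer from $P_{\beta,K}$ to $P_0$ via the strictly positive Radon--Nikodym density, and the isolation of a lexicographically extremal sample in a depth-$d$ leaf using splits along the coordinates of $J$. The paper packages this as a single contrapositive step---pick $i_0$ lex-minimal among the indices with nonzero residual and invoke the auxiliary Lemma~\ref{lem:bound-p-region} to produce a leaf whose residual sum equals $r_{i_0}$---whereas you run the same isolation argument inductively, peeling off one index at a time; these are equivalent, and your preliminary observation that $\sum_i g_i=0$ is not actually needed for the induction to go through.
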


This means that critical points of the ODE~\eqref{eq:ODE} are functions for which the residuals are all null.  

\begin{remark}~\label{rk:A3}
We briefly comment upon Assumption \ref{ass:A5}. It is trivially satisfied in the following two cases:
  \begin{itemize}[-]
    \item the $(x_i)_{1\leq i\leq n}$ are pairwise distinct and $d\geq p$ (consider $J=[\![1,p]\!]$);
    \item there exists $j\in [\![1,p]\!]$ such that the $(x_i^j)_i$ are pairwise distinct (consider $J=\{j\}$).
  \end{itemize}
Next, we discuss a simple example where \ref{ass:A5} does not hold.   Consider the problem of regression with $p=2, d=1$ and $(x_i, y_i)_{1\leq i\leq 4}$ given by
  \[
    \begin{cases}
    x_1 = (\frac 13,\frac 13), \; y_1 = 1, \quad & x_2 = (\frac 23,\frac 13), \; y_2 = -1, \\
    x_3 = (\frac 13,\frac 23), \;y_3 = -1, \quad & x_4 = (\frac 23,\frac 23), \;y_4 = 1.
    \end{cases}
  \]
  For this configuration, starting from $F \equiv 0$, any randomized tree with depth $d=1$ is null because, for any vertical of horizontal split, the sum of residuals  compensate in each regions.   Therefore,  $\mathcal{T}(F)=0$ but the residuals are nonzero.   Similar examples can be built in higher dimension as long as $d < p$.
\end{remark}

The analysis of critical point suggests that, in the long-time asymptotic,   infinitesimal gradient boosting should converge to a critical point with null residuals. This is the subject of our next result.

\begin{proposition} ~ \label{prop:igb-asymptotic}
Under Assumption~\ref{ass:Abis}, we have
\begin{gather*}
\pderiv{z}L(y_i,\hat F_t(x_i))\longrightarrow 0,\quad 1\leq i\leq n,\\
\text{and}\quad L_n(\hat{F}_t)\longrightarrow 0
\end{gather*}
as $t\to\infty$.
\end{proposition}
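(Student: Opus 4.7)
I start with a finite-dimensional reduction: the gradient tree $\widetilde T(x_i;F,\zeta)$ depends on $F$ only through the values $(F(x_j))_{1\le j\le n}$, so the trajectory $\phi(t):=(\hat F_t(x_i))_{1\le i\le n}$ is the solution of an autonomous ODE $\phi'(t)=\Psi(\phi(t))$ in $\mathbb{R}^n$, with Lyapunov function $V(\phi):=\sum_{i=1}^nL(y_i,\phi_i)=L_n(\hat F_t)$. By Proposition~\ref{prop:properties}(i), $V(\phi_t)$ is non-increasing and converges to some $V_\infty\ge 0$. A direct computation using the line-search formula~\eqref{eq:gradient-tree-line-search-approximation} yields
\[
-V'(\phi_t)\;=\;G(\phi_t)\;:=\;\E_\zeta\Big[\sum_{k=1}^{2^d}\frac{\big(\sum_{i:x_i\in A_k}r_{t,i}\big)^2}{\sum_{i:x_i\in A_k}\pderiv[2]{z}L(y_i,\phi_{t,i})}\Big]\;\ge\;0,
\]
with $r_{t,i}=-\pderiv{z}L(y_i,\phi_{t,i})$, so that $\int_0^\infty G(\phi_t)\,\rmd t=V(\phi_0)-V_\infty<\infty$.

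Next I would establish a priori bounds along the trajectory. Assumption~\ref{ass:A2} gives a uniform upper bound $M_1$ on $|\pderiv{z}L/\pderiv[2]{z}L|$ on the sublevel set $\{L\le L_n(\hat F_0)\}$; since each leaf value $\tilde r(A_k)$ is a convex combination (with positive weights $\pderiv[2]{z}L(y_i,\phi_i)$) of such ratios, I get $\|\Psi(\phi_t)\|_\infty\le M_1$, whence $\phi_t$ is Lipschitz in $t$. Assumption~\ref{ass:A4} together with a short bootstrap then yields uniform bounds on $|r_{t,i}|$ and on $\pderiv[2]{z}L(y_i,\phi_{t,i})$, making $t\mapsto G(\phi_t)$ uniformly continuous. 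Combined with the integrability above, Barbalat's lemma forces $G(\phi_t)\to 0$.

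The final step is to turn $G(\phi_t)\to 0$ into the pointwise vanishing of the residuals, and then into $L_n(\hat F_t)\to 0$. I would prove a quantitative version of Proposition~\ref{prop:critical-points} of the form $G(\phi)\ge c\sum_{i=1}^n r_i^2/\pderiv[2]{z}L(y_i,\phi_i)$, valid uniformly along the trajectory, where $c>0$ arises from the positive $P_{\beta,K}$-mass (Proposition~\ref{prop:RN-derivative}) of splitting schemes whose root split is along the distinguished coordinate $j\in J$ of Assumption~\ref{ass:A5} with threshold in a gap between two consecutive $(x_i^j)_i$. Together with the upper bounds on $\pderiv[2]{z}L$ from the previous step, this yields $r_{t,i}\to 0$ for every $i$. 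Finally, convexity of $L(y_i,\cdot)$ combined with Assumptions~\ref{ass:A1} and~\ref{ass:A3} implies that $\pderiv{z}L(y_i,\phi_{t,i})\to 0$ forces $L(y_i,\phi_{t,i})\to\inf_zL(y_i,z)=0$, and summing gives $L_n(\hat F_t)\to 0$.

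The main technical obstacle I anticipate is the possible escape of $\phi_t$ to infinity in $\mathbb{R}^n$, which occurs exactly when $\inf_zL(y_i,z)$ is not attained (for instance, in binary classification with exponential or cross-entropy loss and perfectly separable data). In this regime compactness and LaSalle-type arguments are unavailable, and moreover $\pderiv[2]{z}L(y_i,\phi_{t,i})$ may degenerate to $0$, so that the coercivity bound above must be handled carefully; the final convexity argument for passing from residuals to $L_n$ must also be carried out without extracting a cluster point in $\mathbb{R}^n$, relying directly on Assumption~\ref{ass:A3}.
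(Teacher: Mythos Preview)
Your route differs from the paper's in its overall architecture, and there are two genuine gaps.

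\textbf{Uniform continuity of $G$.} The Barbalat step needs $t\mapsto G(\phi_t)$ uniformly continuous, but the a~priori bounds you list do not secure this. You correctly get $|r_{t,i}|\le R$ (convexity, bounded $L_n$) and $w_{t,i}:=\pderiv[2]{z}L(y_i,\phi_{t,i})\le C$ (Assumption~\ref{ass:A4}), and from~\ref{ass:A2} that $\phi_t$ and $r_{t,i}$ are Lipschitz in $t$. However $G$ also depends on the $w_{t,i}$, and~\ref{ass:A1} only gives \emph{local} Lipschitz continuity of $\pderiv[2]{z}L(y_i,\cdot)$; along the possibly unbounded trajectory (your acknowledged obstacle in classification) nothing prevents this Lipschitz constant from blowing up, so $t\mapsto w_{t,i}$ need not be uniformly continuous. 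A repair is to apply Barbalat to $\sum_i r_{t,i}^2$ instead, which \emph{is} Lipschitz in $t$; but then you must first show $\int_0^\infty\sum_i r_{t,i}^2\,\rmd t<\infty$, which needs the coercivity step.

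\textbf{Coercivity when $|J|>1$.} Your sketch invokes a root split along ``the distinguished coordinate $j\in J$'' in a gap between two consecutive $(x_i^j)_i$; this implicitly assumes $|J|=1$, whereas~\ref{ass:A5} only grants $|J|\le d$. For $|J|>1$ no single split separates the sample, and the full bound $G(\phi)\ge c\sum_i r_i^2/w_i$ is more than you can expect from depth-$d$ trees. The paper instead fixes an index $i_0$ with $|r_{i_0}|\ge\delta$ and uses the inclusion--exclusion identity
\[
r_{i_0}=\sum_{v\in\{-1,1\}^J}\sigma_v\,S(x_{i_0}^J+\rho v),\qquad S(h):=\sum_i r_i\1_{A(h)}(x_i),
\]
to find some corner $v$ with $|S(x_{i_0}^J+\rho v)|\ge 2^{-d}\delta$, and then Lemma~\ref{lem:bound-p-region} gives a uniform lower bound on the $P_{\beta,K}$-probability that $A_0(\xi)$ realises this region. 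This yields $G(\phi)\ge c\,(\max_i|r_i|)^2$ with $c$ depending only on $R$, $C$, and geometric data, which is enough.

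The paper sidesteps both issues by reversing your order of conclusions and arguing by contradiction: assume $L_n(\hat F_t)\downarrow\ell>0$; on $\{\ell\le L_n\le L_n(\hat F_0)\}$ one has simultaneously $\max_i|r_i|\ge\delta>0$ (from~\ref{ass:A3}) and the bounds $R,C$ above, so the coercivity argument gives $-\deriv{t}L_n(\hat F_t)=G(\phi_t)\ge\eta>0$ uniformly in $t$, contradicting $L_n\ge 0$. No uniform continuity of $G$ and no control on the variation of $\pderiv[2]{z}L$ beyond~\ref{ass:A1} are needed. The convergence $\pderiv{z}L(y_i,\hat F_t(x_i))\to 0$ then follows from $L_n\to 0$ (each term $L(y_i,\hat F_t(x_i))\to 0$, and for a strictly convex function with infimum~$0$ this forces the derivative to vanish), which is the reverse of your final implication.
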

In the specific case of regression or classification,  we get under Assumption \ref{ass:Abis}:
  \begin{itemize}
    \item   $\hat{F}_t(x_i) \to y_i$ in the case of regression (Example~\ref{ex:regression});
    \item   $\hat{F}_t(x_i) \to +\infty$ if $y_i=1$ and $\hat{F}_t(x_i) \to - \infty$ otherwise in the case of classification (Examples~\ref{ex:classification} and~\ref{ex:AdaBoost}).
  \end{itemize}
This shows  that infinitesimal gradient boosting is prone to overfitting as $t\to+\infty$ as it tries to match the training sample perfectly. This is a well-known feature of gradient boosting and early stopping is usually used to avoid overfitting and obtain good generalization properties, see \cite{ZY05}.

Finally, we study the regularity of infinitesimal gradient boosting with respect to space and time. To this aim, we define a function space $\mathbb{W}$ and first a reference measure $\pi_0\in\mathcal{M}_0$ as follows. For a splitting scheme $\xi$, we consider $\pi_\xi=\sum\delta_x\in\mathcal{M}_0$ the  point measure with a Dirac mass at each vertex in $[0,1)^p$ of the  partition $(A_v(\xi))_{v\in\{0,1\}^d}$ into hypercubes; the measure $\pi_0$ is then defined as the intensity measure of the point process $\pi_\xi$ under $P_0(\rmd \xi)$, that is
\[
\pi_0=\int \pi_\xi \, P_0(\rmd \xi)\in\mathcal{M}_0.
\]
The function space $\mathbb{W}=\mathbb{W}([0,1]^p,\mathbb{R})$ is defined by
\[
\mathbb{W}=\Big\{T\in \mathbb{T}\ :\ \mu_T\ll \pi_0 \text{ and } \frac{\rmd \mu_T}{\rmd \pi_0}\in L^\infty([0,1]^p,\pi_0)\Big\}
\]
and is endowed with the norm 
\[
\norm{T}_{W}=\norm[\big]{\frac{\rmd \mu_T}{\rmd \pi_0}}_\infty,\quad T\in\mathbb{W}.
\]
Clearly,  $\mathbb{W}\subset\mathbb{T}$ with $\norm{T}_{TV}\leq  \norm{T}_{W}$, for all $T\in\mathbb{W}$. Furthermore, $(\mathbb{W},\norm{\cdot}_W)$ is a Banach space isomorphic to $L^\infty([0,1]^p,\pi_0)$ via the map $T\mapsto \rmd \mu_T/\rmd\pi_0$. 

\begin{proposition}\label{prop:regularity}
Infinitesimal gradient boosting $(\hat F_t)_{t\geq 0}$ defines a smooth space in $\mathbb{W}$, i.e. $\hat F_t\in\mathbb{W}$ for all $t\geq 0$ and the mapping $t\mapsto  \hat F_t$ is  continuously differentiable. Furthermore, the same result holds for the positive and negative parts $(\hat F_t^+)_{t\geq 0}$ and $(\hat F_t^-)_{t\geq 0}$ defined jointly as the solution of~\eqref{eq:ODE-pm}.
\end{proposition}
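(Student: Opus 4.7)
The plan is to lift the ODE $\hat F'_t = \mathcal{T}(\hat F_t)$, known to hold in $\mathbb{T}$ and $\mathbb{B}$, up to the finer space $\mathbb{W}$. By the integral form $\hat F_t = \hat F_0 + \int_0^t \mathcal{T}(\hat F_s)\,\rmd s$ and continuity of $t\mapsto \hat F_t$ in $\mathbb{B}$ (Theorem~\ref{thm:cv-lambda-to-0}), this reduces to two facts: (a) $\mathcal{T}$ maps bounded subsets of $\mathbb{B}$ into $\mathbb{W}$ with locally bounded $\mathbb{W}$-norm, and (b) $\mathcal{T}:(\mathbb{B},\norm{\cdot}_\infty)\to(\mathbb{W},\norm{\cdot}_W)$ is locally Lipschitz along the orbit. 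The initial datum $\hat F_0\equiv c$ already belongs to $\mathbb{W}$, since $\mu_{\hat F_0}=c\delta_0$ and the corner $0$ is a deterministic vertex of every partition, so that $\pi_0(\{0\})>0$.

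For (a), I would write $\mu_{\widetilde T(F,\xi)}=\sum_{v\in V(\xi)}w_v(F,\xi)\delta_v$, where $V(\xi)\subset [0,1)^p$ denotes the finite set of lower-corner vertices of the partition $(A_v(\xi))_v$ (so that $\pi_\xi=\sum_{v\in V(\xi)}\delta_v$), and each weight $w_v(F,\xi)$ is a signed sum of at most $2^d$ leaf values $\tilde r(A_k;F,\xi)$. Writing $\rho^F = \rmd P_{\beta,K}/\rmd P_0$ for the Radon-Nikodym derivative of Proposition~\ref{prop:RN-derivative} (which depends on $F$ through the residuals), Assumption~\ref{ass:A2} combined with Proposition~\ref{prop:properties} bounds $\abs{w_v(F,\xi)}$ uniformly by some $M$ on the sublevel set $\{F:L_n(F)\leq L_n(\hat F_0)\}$; together with $\rho^F\leq K^{2^d-1}$, this yields, for every Borel $B\subset [0,1]^p$,
\[
\abs{\mu_{\mathcal{T}(F)}(B)} \;\leq\; \int \sum_{v\in V(\xi)\cap B}\abs{w_v(F,\xi)}\rho^F(\xi)\, P_0(\rmd\xi) \;\leq\; MK^{2^d-1}\pi_0(B),
\]
hence $\mathcal{T}(F)\in \mathbb{W}$ with $\norm{\mathcal{T}(F)}_W\leq MK^{2^d-1}$.

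For (b), the same decomposition gives, for every Borel $B$,
\[
\int_B\Bigl(\tfrac{\rmd\mu_{\mathcal{T}(F_1)}}{\rmd\pi_0}-\tfrac{\rmd\mu_{\mathcal{T}(F_2)}}{\rmd\pi_0}\Bigr)\rmd\pi_0 \;=\; \int \sum_{v\in V(\xi)\cap B}\Bigl(w_v(F_1,\xi)\rho^{F_1}(\xi)-w_v(F_2,\xi)\rho^{F_2}(\xi)\Bigr)P_0(\rmd\xi).
\]
A pointwise estimate $\abs{w_v(F_1,\xi)\rho^{F_1}(\xi)-w_v(F_2,\xi)\rho^{F_2}(\xi)}\leq L\norm{F_1-F_2}_\infty$, uniform in $v$ and $\xi$ and obtained from the smoothness of the ratios defining $\tilde r$ and of the softmax weights in $\rho^F$ under \ref{ass:A1}-\ref{ass:A2}, then yields $\norm{\mathcal{T}(F_1)-\mathcal{T}(F_2)}_W\leq L\norm{F_1-F_2}_\infty$ by the same averaging argument as in (a). Consequently $s\mapsto\mathcal{T}(\hat F_s)$ is continuous in $\mathbb{W}$; the Bochner integral is well-defined in the Banach space $\mathbb{W}$, so $\hat F_t\in\mathbb{W}$ for all $t\geq 0$, and $t\mapsto \hat F_t$ is $C^1$ with derivative $\mathcal{T}(\hat F_t)$ by the fundamental theorem of calculus for Bochner integrals. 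The argument applies verbatim to $\mathcal{T}^{\pm}$ and to the system~\eqref{eq:ODE-pm}, which share the same structural form.

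The main obstacle is the uniform-in-$\xi$, pointwise Lipschitz bound at the core of (b). Proposition~\ref{prop:boosting-operator-Lipschitz} only provides Lipschitzness of $\mathcal{T}$ in the integrated total-variation norm, whereas here one must control the Radon-Nikodym density of the difference in $L^\infty(\pi_0)$. Pushing the smoothness estimates underlying Proposition~\ref{prop:reg-tree-Lipschitz} through both the softmax weights in $\rho^F$ and the ratio-structured leaf values $\tilde r(A_k;F,\xi)$, uniformly over the combinatorics of the splitting scheme, is where essentially all of the technical work is concentrated.
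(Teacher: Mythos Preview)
Your approach is correct and is essentially the same as the paper's: the paper isolates your points (a) and (b) as a separate lemma (Lemma~\ref{lem:regularity}), proving that $\mathcal{T}:(\mathbb{B},\norm{\cdot}_\infty)\to(\mathbb{W},\norm{\cdot}_W)$ is locally Lipschitz via the two set-function inequalities $|\mu_{\mathcal{T}(F)}(A)|\leq M\pi_0(A)$ and $|\mu_{\mathcal{T}(F)}(A)-\mu_{\mathcal{T}(F')}(A)|\leq M\norm{F-F'}_\infty\pi_0(A)$, and then deduces the proposition exactly as you do from the integral representation.

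Your closing worry is overstated. The uniform-in-$\xi$ pointwise Lipschitz bound you need for (b) is not new work beyond what is already available: since for fixed $\xi$ the measures $\mu_{\widetilde T(F_1,\xi)}$ and $\mu_{\widetilde T(F_2,\xi)}$ are supported on the \emph{same} finite vertex set $V(\xi)$, the density of their difference with respect to $\pi_\xi$ at any vertex is trivially bounded by $\norm{\mu_{\widetilde T(F_1,\xi)}-\mu_{\widetilde T(F_2,\xi)}}_{\mathrm{TV}}$, and this is precisely the quantity controlled by Equation~\eqref{eq:gradient-tree-lipschtiz-2} in the proof of Proposition~\ref{prop:boosting-operator-Lipschitz}. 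Likewise, the uniform Lipschitz bound on $\rho^F(\xi)$ is exactly Equation~\eqref{eq:reg-tree-lipschitz-3} from the proof of Proposition~\ref{prop:reg-tree-Lipschitz}. Splitting the product $w_v\rho^F$ via the triangle inequality and inserting these two estimates gives your bound directly, with no further combinatorial analysis needed.
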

As a consequence, there exists a continuously differentiable mapping $t\mapsto \phi_t\in L^\infty([0,1]^p,\pi_0)$ such that
\[
\hat F_t(x)= \int_{[0,x]} \phi_t(y) \,\pi_0(\rmd y),\quad x\in [0,1]^p, t\geq 0.
\]
Note that $\pi_0$ is absolutely continuous with respect to $\nu$, because all possible vertices of a partition corresponding to a splitting scheme $\xi$ with distribution $P_0$ have coordinates that are either null, or of the form $\prod_{i=1}^N U_i$, where $N$ is random and the $U_i$ are i.i.d.\ uniform in $[0,1]$. As a consequence, Proposition~\ref{prop:regularity} implies that  $(t,x)\mapsto \hat F_t(x)$ is jointly continuous on $[0,\infty)\times [0,1]^p$ and the same holds for $\hat F_t^+(x)$ and $\hat F_t^-(x)$.

Surprisingly, infinitesimal gradient boosting is regular not only in time but also in space. Recall that $(\hat F_{t})_{t\geq 0}$ appears as the limit of gradient boosting $(\hat F_{[t/\lambda]}^\lambda)_{t\geq 0}$ which is highly discontinuous. The limit $\lambda\to 0$ as a regularizing effect with respect to time and space. The temporal regularizing effect is quite natural since the jump size $\lambda$ vanishes in the limit. The spatial regularizing effect is due to the randomization of softmax gradient trees: due to split randomization, the discontinuities in the different gradient trees occur  at different places and an averaging effect provides a regular function in the limit. This heuristic reasoning is made rigorous in Lemma~\ref{lem:regularity}  where we state that $\mathcal{T}(F)\in \mathbb{W}$ for all $F\in\mathbb{B}$, evidencing the spatial regularization effect of the infinitesimal gradient boosting operator.

\section{Proofs} \label{sec:proofs}
\subsection{Proofs related to Section~\ref{sec:formal-softmax-trees}} \label{sec:proofs-reg-trees}
\begin{proof}[Proof of Proposition~\ref{prop:RN-derivative}]Let $A\subset ([\![1,p]\!]\times (0,1))^{\mathscr{T}_{d-1}}$ be measurable and consider the event $\{\xi\in A\}$. Modulo null sets, it can be decomposed into the disjoint union
\[
\{\xi\in A\}=\bigcup_{\phi}\{\xi\in A,\; \xi=(j_v^{\phi(v)},u_v^{\phi(v)})_{v\in\mathscr{T}_{d-1}}\}
\]
with the union taken over all selection maps  $\phi:\mathscr{T}_{d-1}\to [\![1,K]\!]$. We deduce
\[
\mathbb{P}(\xi\in A)=\sum_{\phi}\int\mathbb{P}(\xi\in A,\; \xi=(j_v^{\phi(v)},u_v^{\phi(v)})_{v\in\mathscr{T}_{d-1}} \mid \xi^1,\ldots,\xi^K)\otimes_{k=1}^K P_0(\rmd \xi^k).
\]
By invariance of the product measure, all the terms in the sum are equal to the term corresponding to $\phi\equiv 1$ and there are $K^{2^d-1}$ such terms, with $2^d-1$ the cardinal of $\mathscr{T}_{d-1}$. We deduce
\begin{align*}
&\mathbb{P}(\xi\in A)\\
&=K^{2^d-1} \int \mathbb{P}(\xi\in A,\; \xi=\xi^1 \mid \xi^1,\ldots,\xi^K) \otimes_{k=1}^K P_0(\rmd \xi^k)\\
&= K^{2^d-1}\int \1_{\{\xi^1\in A\}} \prod_{v\in \mathscr{T}_{d-1}} \big(\mathrm{softmax}_\beta( \Delta(j_v^{k},u_v^{k}; A_v(\xi^1))_{1\leq k\leq K})\big)_{1} \otimes_{k=1}^K P_0(\rmd \xi^k)\\
&= \int_A\Big(\int \prod_{v\in \mathscr{T}_{d-1}}  \frac{\exp(\beta \Score(j_v^1,u_v^1; A_v(\xi^1)))}{K^{-1}\sum_{k=1}^K \exp(\beta \Score(j_v^{k},u_v^{k}; A_v(\xi^1)))} \otimes_{k=2}^K P_0(\rmd \xi^k)\Big)P_0(\rmd \xi^1),
\end{align*}
where the second equality relies on  Equation~\eqref{eq:law-of-xi} and the third on the definition of the softmax function. This characterizes the distribution $P_{\beta,K}$ of $\xi$ and proves that the Radon-Nykodym derivative is given by Equation~\eqref{eq:RN}.
\end{proof}

\begin{proof}[Proof of Proposition~\ref{prop:reg-tree-Lipschitz}]
Consider input $\mathbf{r}=(r_i)_{1\leq i\leq n}$ and $\mathbf{r}'=(r_i')_{1\leq i\leq n}$ and let $(x_i)_{1\leq i\leq n}$ be fixed. We may replace without risk of confusion $(x_i,r_i)_{1\leq i\leq n}$ and  $(x_i,r_i')_{1\leq i\leq n}$ by $\mathbf{r}$ and $\mathbf{r}'$ respectively in the notation below. For a fixed (deterministic) splitting scheme $\xi=(j_v,u_v)_{v\in\mathscr{T}_{d-1}}$, we compare $T(\;\cdot\;;\mathbf{r},\xi)$ and $T(\;\cdot\;;\mathbf{r}',\xi)$ --- we will actually remove the ``$\,\cdot\,$'' parts of these expressions for the rest of the proof. Denoting by $(A_v)_{v\in\{0,1\}^d}$ the partition associated to $\xi$, we have 
\[
T(\;\cdot\;;\mathbf{r},\xi) = \sum_{v\in \{0,1\}^d }r(A_v)\1_{A_v}\quad\mbox{and}\quad  T(\;\cdot\;;\mathbf{r}',\xi)=\sum_{v\in \{0,1\}^d }r'(A_v)\1_{A_v}, 
\]
with $r(A_v)$ (resp. $r'(A_v)$) the mean of the values $r_i$ (resp. $r'_i$) with $x_i\in A_v$. Since  
 \[
|r(A_v)-r'(A_v)|\leq \|\mathbf{r}-\mathbf{r}'\|_{\infty},
\]
we deduce
\begin{equation}\label{eq:reg-tree-lipschitz-1}
\| T(\;\cdot\;;\mathbf{r},\xi)-T(\;\cdot\;;\mathbf{r}',\xi)\|_\infty \leq \|\mathbf{r}-\mathbf{r}'\|_{\infty}.
\end{equation} 

Next, we consider a random splitting scheme $\xi$. It should be stressed that the splitting scheme distribution depends on the input (respectively $\mathbf{r}$ and $\mathbf{r}'$) and we denote by $P_{\beta,K}$ and $P_{\beta,K}'$ the respective splitting scheme distributions. Proposition~\ref{prop:RN-derivative} implies
\[
\bar T_{\beta,K,d}(x;\mathbf{r})=  \int T(x;\mathbf{r},\xi)\frac{\rmd P_{\beta,K}}{\rmd P_{0}}(\xi)P_{0}(\rmd \xi)
\]
and similarly for $\bar T_{\beta,K,d}(x;\mathbf{r}')$ with $\mathbf{r}$ and  $\rmd P_{\beta,K}$ replaced by $\mathbf{r}'$ and $\rmd P_{\beta,K}'$ respectively. We deduce 
\begin{align}
\big|\bar T_{\beta,K,d}(x;\mathbf{r})-\bar T_{\beta,K,d}(x;\mathbf{r}')\big|&\leq \int \big|T(x;\mathbf{r},\xi)-T(x;\mathbf{r}',\xi)\big|\; \frac{\rmd P_{\beta,K}}{\rmd P_{0}}(\xi)P_{0}(\rmd \xi)\nonumber\\
&\quad + \int |T(x;\mathbf{r}',\xi)| \; \Big|\frac{\rmd P_{\beta,K}}{\rmd P_{0}}(\xi)-\frac{\rmd P_{\beta,K}'}{\rmd P_{0}}(\xi)\Big| P_{0}(\rmd \xi).\label{eq:reg-tree-lipschitz-2}
\end{align}
The first term is bounded from above by $\|\mathbf{r}-\mathbf{r}'\|_{\infty}$ thanks to Equation~\eqref{eq:reg-tree-lipschitz-1}. For the second term, we use  $|T(x;\mathbf{r}',\xi)|\leq \|\mathbf{r}'\|_\infty $ and the following Lipschitz property of the Radon-Nykodym derivative, justified below,
\begin{equation}
\sup_{\xi}\big|\frac{\rmd P_{\beta,K}}{\rmd P_{0}}(\xi)-\frac{\rmd P_{\beta,K}'}{\rmd P_{0}}(\xi)\big|
\leq CR \|\mathbf{r}-\mathbf{r}'\|_{\infty}\label{eq:reg-tree-lipschitz-3}
\end{equation}
for $\max(\|\mathbf{r}\|_\infty,\|\mathbf{r}'\|_\infty)\leq R$ and  $C= \beta (2^d-1)K^{2^d-1}$. With these bound, Equation~\eqref{eq:reg-tree-lipschitz-2} implies 
\[
\big\|\bar T_{\beta,K,d}(\;\cdot\;;\mathbf{r})-\bar T_{\beta,K,d}(\;\cdot\;;\mathbf{r}')\big\|_\infty\leq (1+CR^2)\|\mathbf{r}-\mathbf{r}'\|_\infty,
\]
proving that $\mathbf{r}\mapsto \bar T_{\beta,K,d}(\;\cdot\;;\mathbf{r})$ is locally Lipschitz.

We finally prove Equation~\eqref{eq:reg-tree-lipschitz-3} using the explicit formula~\eqref{eq:RN} for the Radon-Nykodym derivative. In definition~\eqref{eq:def-score}, the score $\Delta(j,u;A)$ implicitly depends on the input $\mathbf{r}$ so for clarity we write below $\Delta(j,u;A,\mathbf{r})$. Simple computations yield
\[
|\Delta(j,u;A,\mathbf{r})-\Delta(j,u;A,\mathbf{r}')|\leq 2R \|\mathbf{r}-\mathbf{r}'\|_\infty
\]
for $\max(\|\mathbf{r}\|_\infty,\|\mathbf{r}'\|_\infty)\leq R$ and all $(j,u)$ and $A$. Furthermore, it is elementary to see that the softmax function $z\mapsto e^{z_1}/\sum_{k=1}^K e^{z_k}$ is $1/2$-Lipschitz for the uniform norm on $\mathbb{R}^K$. Therefore, for fixed $\xi,\xi_1,\xi_2,\ldots,\xi_K$ and $v\in\mathscr{T}_{d-1}$, the different factors in Equation~\eqref{eq:RN} satisfy
\[
\Big|\frac{\exp(\beta \Score(j_v^1,u_v^1; A_v(\xi),\mathbf{r}))}{\sum_{k=1}^K \exp(\beta \Score(j_v^{k},u_v^{k}; A_v(\xi),\mathbf{r}))} -\frac{\exp(\beta \Score(j_v^1,u_v^1; A_v(\xi),\mathbf{r}'))}{\sum_{k=1}^K \exp(\beta \Score(j_v^{k},u_v^{k}; A_v(\xi),\mathbf{r}'))}\Big|\\
\leq  \beta R\; \|\mathbf{r}-\mathbf{r}'\|_\infty.
\]
Using the inequality $\big|\Pi_{i\in I} a_i-\Pi_{i\in I} b_i\big| \leq \sum_{i\in I}|a_i-b_i|$ for finite families $(a_i),(b_i)\in  [-1,1]^I$ with $I=\mathscr{T}_{d-1}$ of cardinal $2^d-1$ and integrating with respect to $\delta_\xi(\rmd \xi^1)\otimes_{k=2}^K P_0(\rmd \xi^k)$, we deduce Equation~\eqref{eq:reg-tree-lipschitz-3}. 
\end{proof}

\subsection{Proofs related to Section~\ref{sec:T-space}} \label{sec:proofs-T-space}
\begin{proof}[Proof of Proposition~\ref{prop:reg-tree-measure}]
 Consider $a,b\in[0,1]^p$ with $0\leq a^i<b^i\leq 1$ for all $1\leq i\leq p$. Using the notation of \cite{Neu71}, we consider the map $S=\1_{[a,b\rangle}$, where
  \[
    [a,b\rangle := I_1 \times I_2 \times \dots \times I_p, \qquad \text{with}\qquad I_i=
    \begin{cases}
      [a^i,b^i) & \text{if } b^i < 1\\
      [a^i,b^i] & \text{if } b^i = 1.
    \end{cases}
  \]
  Let us show that  $S\in \mathbb{T}$ and characterize the measure $\mu_S$. For each vertex $\epsilon=(\epsilon_1,\dots,\epsilon_p)\in \{0,1\}^p$, we denote by $|\epsilon|=\sum_{i=1}^p \epsilon_i$ the number of positive components. The vertices of the hypercube $[a,b]$ are $a+\epsilon\cdot(b-a)$   where $\cdot$ denotes coordinate-wise multiplication. Then the signed measure $\mu_S$ 
  \begin{equation} \label{eq:mu-atom-decomposition}
    \mu_S = \sum_{\epsilon\in\{0,1\}^p} (-1)^{|\epsilon|}\1_{[0,1)^p}(a+\epsilon\cdot(b-a)) \delta_{a+\epsilon\cdot(b-a)}
  \end{equation}
belongs to $\mathcal{M}_0$ and is such that $\mu_S([0,x])=S(x)$ for all $x\in[0,1]^p$. This proves  that $\1_{[a,b\rangle}=S\in\mathbb{T}$. Furthermore, we see from \eqref{eq:mu-atom-decomposition} that $\norm{\1_{[a,b\rangle}}_{\mathrm{TV}}=2^{q}\leq 2^p$, with $q$ the number of coordinates $i$ such that $b^i<1$.
  
  Now recall the recursive construction of the regions $(A_v(\xi))_{v\in \mathscr{T}_d}$ from the splitting scheme $\xi$, described in Section~\ref{sec:formal-softmax-trees}.
  For each leaf $v\in\{0,1\}^d$, the region $A_v(\xi)$ is defined as the intersection of  $d$ subsets of $[0,1]^p$ of the form $[a,b\rangle$, where there is a unique $i\in [\![1,p]\!]$ such that $a^i >0$ or $b^i <1$.   Therefore $A_v$ is necessarily of the form $[a,b\rangle$, where there are  $q\leq \min(p,d)$ coordinates $i\in [\![1,p]\!]$ such that $a^i >0$ or $b^i <1$.   From the discussion above, we deduce the  bound
  \[
  \norm{T}_{\mathrm{TV}} \;\leq\; \sum_{v \in\{0,1\}^d}\abs{\tilde r_v}\,\norm{\1_{A_v(\xi)}}_{\mathrm{TV}} \;\leq\; 2^{d+\min(p,d)}\norm{T}_{\infty}.
  \]
In the case $d<p$, each region $A_v(\xi)$ is of the form $[a,b\rangle$ where the set $J$ of coordinates $j\in [\![1,p]\!]$ such that $a^j >0$ or $b^j <1$ has cardinal $q\leq d<p$. If $\epsilon\in\{0,1\}^p$ is such that $a+\epsilon \cdot(b-a)\in[0,1)^p$ and $j\notin J$, then $(a+\epsilon\cdot(b-a))^j = a^j = 0$.   This shows that the signed measure $\mu_S$ associated with $S=\1_{[a,b\rangle}$ and defined in~\eqref{eq:mu-atom-decomposition} has its support in
  \[
    \mathcal{F}_{J} = \{x\in [0,1)^p\ :\ x^j=0\text{ for all }j\notin J\} \subset \mathcal{F}_d.
  \]
  Since $T$ is a linear combination of such functions, the proof is complete.
\end{proof}

\begin{proof}[Proof of Proposition~\ref{prop:L2-cv-on-T}]
\textit{(i)} The density results from standard approximation arguments, for instance, of continuous functions, which are dense in $L^2$, by step functions.

\textit{(ii)} Since $L^2$ is a metric space, it is sufficient to consider a sequence $T_1,T_2,\ldots\in \mathbb{T}$ that is bounded in total variation, and show there exists a convergent subsequence.
Because $(\mu_{T_n})_{n\geq 1}$ is a sequence of signed measures  that is supported on the compact space $[0,1]^p$ and bounded in total variation, we can extract a weakly convergent subsequence.
In fact, we can apply this argument to the sequences $(\mu^J_{T_n})_{n\geq 1}$ and assume that 
$\mu_{T_n}^J \,\weakcv \,\mu^{J}$ along some subsequence, jointly for all $J\subset[\![1,p]\!]$, where the $\mu^{J}$ are finite signed measures supported on $\overline{\mathcal{F}}_{J}\subset [0,1]^p$ --- note that considering closures is necessary.

We aim at proving $\nu$-a.e.\ convergence of the sequence of functions $(T_n)_{n\geq 1}$ --- this implies $L^2$ convergence by the dominated convergence theorem because $\nu$ is finite and the $T_n$ are bounded. More precisely, we prove that
\begin{equation} \label{eq:limit-not-in-T}
  T_n(x)\,\underset{n\to\infty}{\tol}\, T(x) := \sum_{J\subset [\![1,p]\!]}\1_{\mathcal{F}_{\geq J}}(x) \mu^J([0,x])\quad\mbox{$\nu$-a.e.},
\end{equation}
where $\mathcal{F}_{\geq J} = \{x\in[0,1]^p\ :\ \forall j\in J, \,x^j > 0\}$. Note that of course the limit is not necessarily in $\mathbb{T}$.

First consider a  point $a\in [0,1]^p$ and some $J\subset [\![1,p]\!]$.
Viewing $\mu_{T_n}^J$ and $\mu^{J}$ as measures on the space $\overline{\mathcal{F}}_J$, by the  Portmanteau theorem, the weak convergence $\mu_{T_n}^J \,\weakcv \,\mu^{J}$ implies
\[
\mu_{T_n}^J([0,a]) \,\underset{n\to\infty}{\tol} \, \mu^{J}([0,a])
\]
if $\mu^J(\partial([0,a^J])) = 0$, with $a^J=(a^j\1_{j\in J})_j$ and where the boundary in the previous expression is the topological boundary within the space $\overline{\mathcal{F}}_{J}$ --- \emph{not within $[0,1]^p$}.
Notice that
\begin{itemize}
  \item if $a\notin \mathcal{F}_{\geq J}$, that is if there exists $j\in J$ such that $a^j=0$, then $[0,a]\cap \mathcal{F}_{J}=\emptyset$, so that $\mu_{T_n}^J([0,a])=0$ for all $n\geq 1$;
  \item we have the inclusion
  \[
  \partial([0,a^J]) \subset \bigcup_{\substack{j\in J\\0\leq a^j<1}} \{x\in[0,1]^p\ :\ x^j = a^j\}.
  \]
\end{itemize}
Now let us define
\[
A = \{0,1\}\cup\Big\{t\in[0,1]\ :\ \sum_{J\subset[\![1,p]\!]}\sum_{j=1}^p\mu^J(\{x\in[0,1]^p\ :\ x^j = t\}) = 0\Big\}.
\]
The set $A$ has a  countable complement in $[0,1]$ and therefore has  Lebesgue measure~$1$.
Since $0,1\in A$, we have $\nu(A^p)=1$ as well.
Finally consider $a\in A^p$ and let us show that the sequence $(T_n(a))_{n\geq 1}$ converges to $T(a)$ defined in \eqref{eq:limit-not-in-T}.
It is sufficient to show that $\mu_{T_n}^J([0,a])$ converges to $\1_{\mathcal{F}_{\geq J}}(a)\mu^J([0,a])$ for any $J\subset [\![1,p]\!]$. From the discussion above, this is obvious whenever $a\notin \mathcal{F}_{\geq J}$.
If this is not the case, we have
\[
  \partial([0,a^J]) \subset \bigcup_{\substack{j\in J\\0 <a^j<1}} \{x\in[0,1]^p\ :\ x^j = a^j\},
\]
which is a null $\mu^{J}$-measure set by construction of $A$, implying
\[
\mu_{T_n}^J([0,a]) \,\tol \, \mu^J([0,a]).
\]
As this is true for all $J\subset[\![1,p]\!]$, we conclude that
\[
T_n(a) \,\underset{n\to\infty}{\tol} \, T(a),\qquad \mbox{for all $a\in A^p$},
\]
therefore $T_n\to T$ in $L^2$, completing the proof.

\textit{(iii)} The sequence $(T_n)_{n\geq 1}$ being bounded in total variation, it is relatively compact in $L^2$ by \textit{(ii)}.
Using the identification of the possible adherence points~\eqref{eq:limit-not-in-T} of this sequence in the proof of the previous point, \textit{(iii)} is easily deduced.
\end{proof}

\begin{proof}[Proof of Proposition~\ref{prop:L2-T+}]
\textit{(i)}
To show that $\overline{\mathbb{T}}^+$ is a proper metric space, it is sufficient to show that any bounded sequence $T_1,T_2,\dots \in \mathbb{T}^+$ is relatively compact in $L^2$.

The fact that the sequence $(T_n)_{n\geq 1}$ is bounded in $L^2$ implies that the real sequence $(T_n(1))_{\geq 1}$ is bounded because the reference measure $\nu$ includes a Dirac mass at $1$. Since  $T_n\in\mathbb{T}^+$, the equality $T_n(1) = \norm{T_n}_{\mathrm{TV}}$ holds. Therefore the sequence $(T_n)$ is bounded in total variation so that relative compactness is deduced from point \textit{(ii)} in Proposition~\ref{prop:L2-cv-on-T}.

\textit{(ii)} This is a straightforward consequence of the  multidimensional version of Pólya's uniform convergence theorem for distribution functions --- for a general form of this convergence theorem, see \citet[Theorem~2]{BT67}.
\end{proof}

\subsection{Proofs related to Section~\ref{sec:inf-gb}} \label{sec:proofsMain}
\subsubsection{Proofs related to Section~\ref{sec:ode}} \label{sec:proofsODE}

\begin{proof}[Proof of Proposition~\ref{prop:boosting-operator-Lipschitz}]
From Definition~\ref{ass:A}, the function $\mathcal{T}(F)$ is defined pointwise by  $\mathcal{T}(F)(x)=\mathbb{E}_\zeta[\widetilde T(x;F,\zeta)]$ for all $x\in[0,1]^p$. By Proposition~\ref{prop:reg-tree-measure}, $\widetilde T(\,\cdot\,;F,\zeta)\in\mathbb{T}$ for all $\zeta$. Note that it is not straightforward to define $\mathcal{T}(F)$ as an expectation in the Banach space $(\mathbb{T},\norm{\cdot}_{TV})$ because of separability and measurability issues. Alternatively, we can take the expectation of the random measure $\mu_{\widetilde T(\,\cdot\,;F,\zeta)}$ and define $\mu = \mathbb{E}_\zeta[ \mu_{\widetilde T(\,\cdot\,;F,\zeta)}]\in\mathcal{M}_0$. It is then straightforward to see that $\mathcal{T}(F)(x)=\mu([0,x])$ for all $x\in[0,1]^p$ so that $\mathcal{T}(F)\in\mathbb{T}$ with 
\begin{equation}\label{eq:gradient-tree-lipschitz-1}
\mu_{\mathcal{T}(F)}=\mathbb{E}_\zeta[ \mu_{\widetilde T(\,\cdot\,;F,\zeta)}].
\end{equation}

We next prove that $\mathcal{T}:\mathbb{B}\to\mathbb{T}$ is locally Lipschitz. Assumption~\ref{ass:A1} plays here a crucial role  because it implies the following property: writing $\mathbf{z}=(F(x_i))_{1\leq i\leq n}\in\mathbb{R}^n$, $\mathbf{r}=(r_i)_{1\leq i\leq n}$ with $r_i=\pderiv{z}{L}(x_i,F(x_i))$ and
\[
\tilde r(A)=-\frac{\sum_{i=1}^n \frac{\partial L}{\partial z}(y_i,F(x_i))\1_{\{x_i\in A\}} }{\sum_{i=1}^n \frac{\partial^2 L}{\partial z^2}(y_i,F(x_i))\1_{\{x_i\in A\}}},\quad A\subset [0,1]^p,
\]
the maps $\mathbf{z}\mapsto \mathbf{r}$ and $\mathbf{z}\mapsto \tilde r(A)$ are locally Lipschitz  for the uniform norm on $\mathbb{R}^n$.
Indeed, by Assumption~\ref{ass:A1}, $L$ is $C^2$ with positive and Lipschitz-continuous second derivative.
Furthermore, because only finitely many different functions $\mathbf{z}\mapsto \tilde r(A)$ can be generated for different regions $A$, the Lipschitz constant can be assumed independent of $A$. Finally, since $F\mapsto \mathbf{z}$ is linear with operator norm $1$, the maps  $F\mapsto \mathbf{r}$ and  $F\mapsto \tilde r(A)$ are locally Lipschitz functions on  $\mathbb{B}$.

As in Remark~\ref{rk:zeta-xi}, we shall consider that the splitting scheme $\xi$ depends on the auxiliary randomness $\zeta$ and let $\widetilde T(\,\cdot\,;F,\xi)$ denote the gradient tree with splitting scheme $\xi$.  Let us recall that the distribution  $P_{\beta,K}$ of $\xi$ depends (implicitly) on on the vector of residuals $\mathbf{r}$ and that the splitting scheme $\xi$ induces a  partition $(A_v)_{v\in\{0,1\}^d}$. Then the leaf values of the gradient trees are given by $(\tilde r(A_v))_{v\in\{-1,1\}^d}$ ---  not by $(r(A_v)_{v\in\{-1,1\}^d})$ as in the case of regression trees. The gradient tree can thus be written
\[
  \widetilde{T}(\;\cdot\;;F,\xi) = \sum_{v\in \{0,1\}^d}\tilde{r}(A_v)\1_{A_v}.
\]
As in the proof of Proposition~\ref{prop:reg-tree-Lipschitz}, we first analyze the case of  a fixed splitting scheme $\xi$ inducing a fixed partition $(A_v)_{v\in\{0,1\}^d}$. Since the maps  $F\mapsto \tilde{r}(A_v)$, $v\in\{0,1\}^d$, are locally Lipschitz,  there exists, for any $R\geq 0$, a constant $C\geq 0$  that does not depend on the splitting scheme $\xi$ and such that
\[
  \norm{\widetilde{T}(\,\cdot\,;F,\xi) - \widetilde{T}(\,\cdot\,;F',\xi)}_{\infty} \,\leq\, C\norm{F-F'}_{\infty},
\]
 for all $F,F'\in \mathbb{B}$ with $\max(\norm{F}_{\infty},\norm{F'}_{\infty})\leq R$.
Since $\xi$ is induced by a $d$-depth splitting scheme,  Proposition~\ref{prop:reg-tree-measure} implies 
\begin{align}
  \norm{\mu_{\widetilde{T}(\,\cdot\,;F,\xi)} - \mu_{\widetilde{T}(\,\cdot\,;F',\xi)}}_{\mathrm{TV}} &\leq 4^d\norm{\widetilde{T}(\,\cdot\,;F,\xi) - \widetilde{T}(\,\cdot\,;F',\xi)}_{\infty}\nonumber\\
  &\leq  4^d C\norm{F-F'}_{\infty}\label{eq:gradient-tree-lipschtiz-2}.
\end{align}

Next, we compare $\mathcal{T}(F)$ and $\mathcal{T}(F')$  for $F,F'\in\mathbb{B}$. With similar notation  as in the proof of  Proposition~\ref{prop:reg-tree-Lipschitz}, we denote by $\mathbf{r}=(r_i)_{1\leq i\leq n}$ (resp. $\mathbf{r}'=(r'_i)_{1\leq i\leq n}$) the residuals  and by $P_{\beta,K}$ (resp. $P_{\beta,K}'$) the splitting scheme distribution associated with $F$ (resp. $F'$).  Equation~\eqref{eq:gradient-tree-lipschitz-1} can be rewritten as
\[
\mu_{\mathcal{T}(F)}=\int \mu_{\widetilde T(\,\cdot\,;F,\xi)} \frac{\rmd P_{\beta,K}}{\rmd P_{0}}\rmd P_{0}.
\]
The same equation holds for $\mu_{\mathcal{T}(F')}$  with $F$ replaced by $F'$ and $P_{\beta,K}$ by $P_{\beta,K}'$. We deduce, as in Equation~\eqref{eq:reg-tree-lipschitz-2},
\begin{align*}
\norm{\mathcal{T}(F)-\mathcal{T}(F')}_{\mathrm{TV}}&= \norm{\mu_{\mathcal{T}(F)}-\mu_{\mathcal{T}(F')}}_{\mathrm{TV}}\\
&\leq \int  \norm{\mu_{\widetilde{T}(\,\cdot\,;F,\xi)} - \mu_{\widetilde{T}(\,\cdot\,;F',\xi)}}_{\mathrm{TV}}\; \frac{\rmd P_{\beta,K}}{\rmd P_{0}}(\xi)P_{0}(\rmd \xi)\nonumber\\
&\quad + \int  \norm{\mu_{\widetilde{T}(\,\cdot\,;F',\xi)}}_{\mathrm{TV}} \; \Big|\frac{\rmd P_{\beta,K}}{\rmd P_{0}}(\xi)-\frac{\rmd P_{\beta,K}'}{\rmd P_{0}}(\xi)\Big| P_{0}(\rmd \xi).
\end{align*}
The end of the proof is then similar to the end of the proof of Proposition~\ref{prop:reg-tree-Lipschitz}: the first term is bounded from above by Equation~\eqref{eq:gradient-tree-lipschtiz-2} and the second one by Equation~\eqref{eq:reg-tree-lipschitz-3} together with the fact that the vector of residuals $\mathbf{r}$ is locally Lipschitz in $F$.
\end{proof}

The following lemma is a simple consequence of Assumption~\ref{ass:A2} and is crucial for controlling the total variation of gradient trees, which is needed in the proof of Theorem~\ref{thm:EDO-T}.
\begin{lemma}\label{lem:control-increment}
  Under Assumption~\ref{ass:A}, for $C>0$, define
  \[
  M(C)=\sup_{(y,z):L(y,z)\leq C} \left| \frac{\partial L}{\partial z}(y,z)\;\big/\; \frac{\partial^2 L}{\partial z^2}(y,z)\right| <\infty.
  \]
  Let $F\in\mathbb{T}$ be such that $L_n(F)=\sum_{i=1}^n L(y_i, F(x_i))\leq C$. Then
  \begin{equation} \label{eq:lem-bound}
  \|\widetilde T(\,\cdot\,;F,\zeta)\|_{\infty}\leq M(C),\qquad \|\widetilde T(\,\cdot\,;F,\zeta)\|_{\mathrm{TV}}\leq 4^d M(C),
  \end{equation}
  and consequently $\norm{\mathcal{T}(F)}_{\mathrm{TV}}\leq 4^dM(C)$.
\end{lemma}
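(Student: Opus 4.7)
The plan is to prove the three bounds in sequence, from the simplest pointwise estimate to the total variation bound, then transfer the last bound to $\mathcal{T}(F)$ via Jensen's inequality (applied to the measure representation).

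First, I would verify that $M(C)<\infty$, which is a direct restatement of Assumption~\ref{ass:A2}, so there is nothing to do beyond noting that the sup is well-defined.

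Next, I would control the leaf values of the gradient tree. Since $L_n(F)\leq C$ and each $L(y_i,z)\geq 0$, every summand satisfies $L(y_i,F(x_i))\leq C$, so by definition of $M(C)$,
\[
\bigl|\tfrac{\partial L}{\partial z}(y_i,F(x_i))\bigr|\;\leq\; M(C)\,\tfrac{\partial^2 L}{\partial z^2}(y_i,F(x_i)),\quad 1\leq i\leq n.
\]
Plugging this into the expression~\eqref{eq:gradient-tree-line-search-approximation} for $\tilde r(A_k)$, using non-negativity of $\partial_z^2 L$ (Assumption~\ref{ass:A1}), and taking absolute values inside the numerator sum, the bound telescopes to $|\tilde r(A_k)|\leq M(C)$ for every leaf $A_k$. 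Since $\widetilde T(\,\cdot\,;F,\zeta)$ is a convex combination (via indicator functions) of its leaf values, this yields $\|\widetilde T(\,\cdot\,;F,\zeta)\|_\infty\leq M(C)$, which is the first bound in~\eqref{eq:lem-bound}.

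The second bound follows by applying Proposition~\ref{prop:reg-tree-measure} to the tree $\widetilde T(\,\cdot\,;F,\zeta)\in\mathbb{T}$ (depth $d$), which gives $\|\widetilde T(\,\cdot\,;F,\zeta)\|_{\mathrm{TV}}\leq 4^d\|\widetilde T(\,\cdot\,;F,\zeta)\|_\infty\leq 4^d M(C)$. Finally, from Equation~\eqref{eq:gradient-tree-lipschitz-1} in the proof of Proposition~\ref{prop:boosting-operator-Lipschitz}, namely $\mu_{\mathcal{T}(F)}=\mathbb{E}_\zeta[\mu_{\widetilde T(\,\cdot\,;F,\zeta)}]$, the triangle inequality for the total variation norm (applied under the expectation) yields
\[
\|\mathcal{T}(F)\|_{\mathrm{TV}}=\|\mu_{\mathcal{T}(F)}\|_{\mathrm{TV}}\leq \mathbb{E}_\zeta\bigl[\|\mu_{\widetilde T(\,\cdot\,;F,\zeta)}\|_{\mathrm{TV}}\bigr]\leq 4^d M(C).
\]

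I do not expect any real obstacle here: the whole argument is essentially a bookkeeping exercise combining Assumption~\ref{ass:A2} (finiteness of $M(C)$), the non-negativity of $\partial_z^2 L$ from~\ref{ass:A1} to make the ratio estimate work, and Proposition~\ref{prop:reg-tree-measure} to pass from $\|\cdot\|_\infty$ to $\|\cdot\|_{\mathrm{TV}}$. The only subtlety worth spelling out is the ratio bound for $\tilde r(A_k)$, where one must be careful to put absolute values only in the numerator and keep the positive denominator intact.
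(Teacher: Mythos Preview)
Your proposal is correct and follows essentially the same approach as the paper: both use non-negativity of $L$ to get $L(y_i,F(x_i))\leq C$ for each $i$, bound $|\tilde r(A_k)|$ by $M(C)$ via the ratio inequality (the paper phrases this as a weighted average of the $|R_i|$, which is the same computation), then invoke Proposition~\ref{prop:reg-tree-measure} for the TV bound and pass to $\mathcal{T}(F)$ by taking expectation of the measure. The only cosmetic point is that ``convex combination'' is a slight misnomer---at each $x$ the tree equals exactly one leaf value since the $A_k$ partition $[0,1]^p$---but your conclusion $\|\widetilde T\|_\infty\leq\max_k|\tilde r(A_k)|$ is of course correct.
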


\begin{proof}
  Because the loss function is non negative, the inequality $L_n(F)\leq C$ yields $L(y_i,F(x_i))\leq C$ for all $i=1,\ldots,n$ and Assumption~\ref{ass:A2} implies
  \[
  \abs{R_i}\,:=\,\left| \frac{\pderiv{z}L(y_i,F(x_i))}{ \pderiv[2]{z}L(y_i,F(x_i))}\right|\leq M(C),\quad i=1,\ldots,n.
  \]
  Let $(A_v)_{v\in\{0,1\}^d}$ be the leaves of $\widetilde T(\,\cdot\,;F,\xi)$ so that
  \[
  \widetilde T(\;\cdot\;;F,\xi)=\sum_{v\in\{0,1\}^d} \tilde r(A_v)\mathds{1}_{A_j}
  \]
  with 
  \[
  \tilde r(A_j)=-\frac{\sum_{i:x_i\in A_j}\pderiv{z}L(y_i,F(x_i))}{\sum_{i:x_i\in A_j}\pderiv[2]{z}L(y_i,F(x_i))}.
  \]
  We finally observe that
  \[
  |\tilde r(A_v)|\leq \frac{\sum_{i:x_i\in A_v}\pderiv[2]{z} L(y_i,F(x_i))\left|R_i\right|}{\sum_{i:x_i\in A_v}\pderiv[2]{z} L(y_i,F(x_i))}\leq M(C)
  \]
  because we recognize a weighted average of the $\abs{R_i}$ terms, which are bounded from above by $M(C)$. This implies the bound for the uniform norm in Equation~\eqref{eq:lem-bound}, and the bound in total variation follows from Proposition~\ref{prop:reg-tree-measure}.
  The last bound is deduced from \eqref{eq:lem-bound} and the fact that we have $\norm{\mathcal{T}(F)}_{\mathrm{TV}} \leq \E[\norm{\widetilde{T}(\,\cdot\,;F,\zeta)}_{\mathrm{TV}}]$.
\end{proof}

\begin{proof}[Proof of Theorem~\ref{thm:EDO-T}]
Because the operator $\mathcal{T}$ is Lipschitz as proven in Proposition~\ref{prop:boosting-operator-Lipschitz}, the Cauchy--Lipschitz theorem  implies the \textit{local} existence and uniqueness of solutions for Equation~\eqref{eq:ODE}. It follows that, with the initial condition $F(0)=F_0$, there exists  a unique local solution of \eqref{eq:ODE} and that this solution can be extended uniquely into a maximal solution, still noted $F$, defined on a maximal interval $[0,t^\ast)$.  We need to prove that $t^\ast=+\infty$. For the sake of clarity, we write $F_t$ for the solution at time $t$ and $F_t(x)$ for its evaluation at $x$. We first prove that the function 
\[
t\mapsto L_n(F_t)=\sum_{i=1}^n L(y_i, F_t(x_i))
\]
is non-increasing on $[0, t^\ast)$. Indeed, its derivative is equal to
\begin{align}
\deriv{t}L_n(F_t)&=\sum_{i=1}^n \big(\deriv{t}F_t(x_i)\big)\,\frac{\partial L}{\partial z}(y_i, F_t(x_i))\nonumber\\
&= \sum_{i=1}^n \mathcal{T}(F_t)(x_i)\, \frac{\partial L}{\partial z}(y_i, F_t(x_i))\nonumber\\
&= - \mathbb{E}_\zeta\left[ \sum_{v\in\{0,1\}^d}\frac{\left(\sum_{i=1}^n \frac{\partial L}{\partial z}(y_i, F_t(x_i))\1_{A_v}(x_i)\right)^2 }{\sum_{i=1}^n \frac{\partial^2 L}{\partial z^2}(y_i,F_t(x_i))\1_{A_v}(x_i)}\right] \label{eq:negative-derivative}\\
&\leq 0. \nonumber
\end{align}
The second equality uses the fact that $(F_t)_{t\in [0,t^\ast)}$ is a solution of \eqref{eq:ODE}. The third equality uses Definitions~\ref{def:softmax-gradient-tree} and~\ref{def:boosting-operator} where the partition $(A_v)_{v\in\{0,1\}^d}$ depends on the external randomness $\zeta$. The derivative being non-positive, we have proved that $t\mapsto L_n(F_t)$ is non-increasing on $[0,t^\ast)$. 

Consider the level set 
\[
\Lambda:=\{G\in\mathbb{T}:L_n(G)\leq L_n(F_0)\}.
\]
We have just proved that $t\mapsto L_n(F_t)$ is non-increasing  so that  $F_t\in\Lambda$ for all $t\in[0,t^\ast)$.
Therefore, by Lemma~\ref{lem:control-increment}, there exists a constant $C>0$ such that $\norm{\mathcal{T}(F_t)}_{\mathrm{TV}} \leq C$ for all $t\in[0,t^\ast)$.
We deduce that for all $0\leq t\leq s < t^\ast$,
\[
\|F_s-F_t\|_{\mathrm{TV}}=\left\|\int_t^s \mathcal{T}(F_u)\,\rmd u\right\|_{\mathrm{TV}} \leq C(s-t).
\]
If $t^\ast$ were finite, this would imply that $(F_t)$ is Cauchy as $t\to t^\ast$, contradicting the maximality assumption of the solution $(F(t))_{t\in[0,t^\ast)}$.
We deduce that $t^\ast=+\infty$ and that the total variation norm of $F_t$ increases at most linearly.
\end{proof}

\subsubsection{Proofs related to Section~\ref{sec:CV} and proofs of Theorem~\ref{thm:cv-lambda-to-0} and~\ref{thm:EDO}} \label{sec:proofsCV}

The following Lemma is crucial for the proof of Proposition~\ref{prop:monotonicity} below.
\begin{lemma}\label{lem:lambda-K}
  Let $L$ satisfy Assumption~\ref{ass:A} and consider $K_1$ (resp.\ $K_2$) a finite (resp.\ compact) subset of $\mathbb{R}$.
  Then there exists $\lambda_0>0$ such that for all $\lambda\in (0,\lambda_0]$, $k\geq 1$, $y_1,\dots,y_k\in K_1$ and $z_1,\dots,z_k\in K_2$, the following inequality holds
  \[
    \sum_{i=1}^k L(y_i, z_i+\lambda z) \;\leq\; \sum_{i=1}^k L(y_i, z_i),\quad \mbox{with } z = -\displaystyle\frac{\sum_{i=1}^k \pderiv{z}L(y_i,z_i)}{\sum_{i=1}^k \pderiv[2]{z}L(y_i,z_i)}.
  \]
\end{lemma}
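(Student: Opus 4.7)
The plan is to view the sum as a single scalar function $\phi(u) := \sum_{i=1}^k L(y_i, z_i+u)$, whose derivatives at $0$ are precisely $\phi'(0)=\sum_i \pderiv{z}L(y_i,z_i)$ and $\phi''(0)=\sum_i \pderiv[2]{z}L(y_i,z_i)$, so that the value $z$ in the statement is the Newton step $-\phi'(0)/\phi''(0)$. I would then show that a second-order Taylor expansion around $0$, combined with uniform bounds on the first and second derivatives of $L$ on a suitable compact set, forces $\phi(\lambda z)\leq \phi(0)$ for all $\lambda$ small enough, uniformly in $k$ and in the $(y_i,z_i)$.

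The first step is to exploit compactness. Since $K_1$ is finite and $K_2$ is compact, and since $z\mapsto \pderiv[2]{z}L(y,z)$ is continuous and strictly positive for each $y$ by \ref{ass:A1}, the quantities
\[
a := \min_{(y,z)\in K_1\times K_2}\pderiv[2]{z}L(y,z)>0,\qquad M:=\max_{(y,z)\in K_1\times K_2}\abs[\Big]{\pderiv{z}L(y,z)}<\infty
\]
are well-defined. Using $\phi''(0)\geq ka$ and $\abs{\phi'(0)}\leq kM$, the Newton step is uniformly bounded: $\abs{z}\leq M/a$, \emph{regardless of $k$ or the choice of $y_i,z_i$}. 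Therefore, for any $\lambda\in(0,1]$, all points of the form $z_i+\lambda z$ lie in the enlarged compact set $K_2' := K_2 + [-M/a, M/a]$, and we can set $B:=\max_{(y,z)\in K_1\times K_2'}\pderiv[2]{z}L(y,z)<\infty$.

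Second, applying Taylor's theorem with Lagrange remainder to $\phi$, there exists $\xi$ between $0$ and $\lambda z$ such that
\[
\phi(\lambda z)-\phi(0) \;=\; \lambda z\, \phi'(0) + \frac{(\lambda z)^2}{2}\phi''(\xi) \;=\; \frac{(\phi'(0))^2}{2(\phi''(0))^2}\bigl(\lambda^2\phi''(\xi)-2\lambda \phi''(0)\bigr),
\]
after substituting $z=-\phi'(0)/\phi''(0)$. Using $\phi''(\xi)\leq kB$ (which holds because $\xi$ corresponds to evaluating $\pderiv[2]{z}L$ at points in $K_1\times K_2'$) together with $\phi''(0)\geq ka$, the bracket is non-positive whenever $\lambda B \leq 2a$. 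Setting $\lambda_0 := \min(1, 2a/B)$ then yields the conclusion for all $\lambda\in(0,\lambda_0]$, uniformly in $k$ and in $(y_i,z_i)$.

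The only delicate point is making the estimates uniform in $k$: both $\phi'(0)$ and $\phi''(\xi)$ are extensive, scaling like $k$, but they enter the decrease inequality through the ratio $\phi''(\xi)/\phi''(0)$ which is bounded by $B/a$ independently of $k$. This cancellation is precisely what makes the Newton-type update robust to the sample size, and is the heart of the lemma.
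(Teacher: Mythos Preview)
Your proof is correct and follows essentially the same route as the paper: both reduce to the scalar function $\phi(u)=\sum_i L(y_i,z_i+u)$, apply a second-order Taylor expansion with Lagrange remainder, observe that the Newton step $z=-\phi'(0)/\phi''(0)$ is bounded uniformly in $k$ by compactness, and conclude by bounding the ratio $\phi''(\xi)/\phi''(0)$ independently of $k$. The only cosmetic difference is that the paper bounds the Newton step via the ratio $\max_{K_1\times K_2}\bigl|\pderiv{z}L/\pderiv[2]{z}L\bigr|$ (a weighted-average argument) and defines $\lambda_0$ as a minimum of pointwise ratios $2\,\pderiv[2]{z}L(y,z)/\pderiv[2]{z}L(y,z+\theta M)$, whereas you bound numerator and denominator separately and take $\lambda_0=\min(1,2a/B)$; both routes yield the same conclusion.
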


\begin{proof}
  Fix  $y_1,\dots,y_k\in K_1$ and $z_1,\dots,z_k\in K_2$ and define
  \[
    h(u) = \sum_{i=1}^k L(y_i, z_i+ u),\quad u\in \mathbb{R}.
  \]
  By Taylor's approximation, we have
  \[
    h(u)-h(0) = uh'(0) + \frac{u^2}{2}h''(\theta u),
  \]
  for some $\theta\in [0,1]$.
  Therefore, for $z=-h'(0)/h''(0)$, 
  \begin{align*}
    h(\lambda z)-h(0) &= -\lambda \frac{h'(0)^2}{h''(0)} + \frac{\lambda^2}{2} \frac{h'(0)^2}{ h''(0)^2}h''(\theta\lambda z)\\
    &= -\lambda \frac{h'(0)^2}{h''(0)} \Big(1-\frac{\lambda}{2}\frac{h''(\theta\lambda z)}{h''(0)}\Big),
  \end{align*}
  for some $\theta\in [0,1]$.
  We need to prove that uniformly in $k\geq 1$,  $y_1,\dots,y_k\in K_1$, $z_1,\dots,z_k\in K_2$  and $\theta\in [0,1]$,  the last expression is non-positive for  $\lambda$ small enough,  which amounts to showing that
  \begin{equation}\label{eq:proof-lemma-compactness}
    \lambda \leq 2 \frac{h''(0)}{h''(\theta\lambda z)} = 2\frac{\sum_{i=1}^k \pderiv[2]{z}L(y_i,z_i)}{\sum_{i=1}^k \pderiv[2]{z}L\Big(y_i,z_i-\theta\lambda\frac{\sum_i \pderiv{z}L(y_i,z_i)}{\sum_i \pderiv[2]{z}L(y_i,z_i)}\Big)}.
  \end{equation}
  Using the fact that $L(y,z)$ is $C^2$ in $z$ with $\pderiv[2]{z}L(y,z)> 0$, first let
  \[
    M := \max_{y\in K_1,z\in K_2}\abs[\bigg]{\frac{\pderiv{z}L(y,z)}{\pderiv[2]{z}L(y,z)}} < \infty
  \]
  and
  \[
    \lambda_0 := \min_{y\in K_1,z\in K_2,\theta\in[-1,1]} 2 \frac{\pderiv[2]{z}L(y,z)}{\pderiv[2]{z}L(y,z+\theta M)} > 0.
  \]
  Clearly, for all $k\geq 1$, $y_1,\dots,y_k\in K_1$, $z_1,\dots,z_k\in K_2$and $\lambda \leq \lambda_0\wedge 1$, the inequality~\eqref{eq:proof-lemma-compactness} is satisfied, concluding the proof.
\end{proof}

\begin{proof}[Proof of Proposition~\ref{prop:monotonicity}]
The proof relies on Lemma~\ref{lem:lambda-K}. Let $K$ be a compact set containing the interval $[F_0-\delta,F_0+\delta]$ with $F_0$ the initial value of the boosting procedure and $\delta>0$. Let $\lambda_0$ as in Lemma~\ref{lem:lambda-K} and $\lambda\in (0,\lambda_0]$. We prove that  $\|\hat F_m^\lambda-F_0\|_\infty \leq \delta$ implies $L_n(\hat F_{m+1}^\lambda)\leq L_n(\hat F_{m}^\lambda)$. Consider a leaf of the tree $T(\,\cdot\,;\hat F_m^\lambda,\zeta_{m+1})$ that contains the values $(x_i)_{i\in I}$ for some $I\subset [\![1,n]\!]$. Applying Lemma~\ref{lem:lambda-K} with $k=\mathrm{card}(I)$ and $z_i=\hat F_m^\lambda(x_i)\in K$ for $i\in I$, we obtain  
\[
\sum_{i\in I}L(y_i,\hat F_{m+1}^\lambda(x_i))\leq \sum_{i\in I}L(y_i,\hat F_{m}^\lambda(x_i)).
\]
In words, the updated model reduces the error within each leaf. Summing over all leaves, we get $L_n(\hat F_{m+1}^\lambda)\leq L_n(\hat F_{m}^\lambda)$ so that the updated model reduces the training error. This shows that the training error is non-increasing as long as $(\hat F_m^\lambda)_{m\geq 0}$ remains in the ball $\Delta=\{F:\|F-F_0\|_\infty\leq \delta\}$.

We next evaluate the time needed to exit $\Delta$.  Let  $C=L_n(F_0)$ be the initial value of the training error. By Lemma~\ref{lem:control-increment}, $L_n(\hat F_m^\lambda)\leq C$ implies
\[
\|T(\,\cdot\,;\hat F_m^\lambda,\zeta_{m+1}) \|_\infty\leq  M:=M(C)\quad \mbox{a.s.},
\]
whence we deduce
\[
\|\hat F_{m+1}^\lambda-\hat F_{m}^\lambda \|_\infty=\lambda \|T(\,\cdot\,;\hat F_m^\lambda,\zeta_{m+1}) \|_\infty\leq \lambda M\quad \mbox{a.s.}
\]
As long as $(\hat F_m^\lambda)_{m\geq 0}$ remains in $\Delta$, the training error is non-increasing and hence lower than its initial value  $L_n(F_0)=C$ and this implies $\|F_{m+1}^\lambda-F_{m}^\lambda \|_\infty\leq \lambda M$. To exit $\Delta$, the chain must travel the distance $\delta$ at speed less than $\lambda M$ so that at least $\delta/(\lambda M)$ iterations are needed. Renormalizing time, we get that $t\mapsto L_n(\hat F_{[t/\lambda]}^\lambda)$ does not exit $\Delta$  and is non-increasing on $[0,T]$ with $T=\delta/M$. Since $\delta>0$ is arbitrary and $M$ depends only on $C=L_n(F_0)$, we conclude the proof by taking $\delta=MT$ and choosing $\lambda_0$ accordingly. 
\end{proof}

\begin{proof}[Proof of Theorem~\ref{thm:cv-L2x} \textit{(i)}]~ \\
The proof of Proposition~\ref{prop:boosting-operator-Lipschitz} is easily adapted to prove that $\mathcal{T}^+$ and $\mathcal{T}^-$ are locally Lipschitz from $(\mathbb{B},\norm{\cdot}_\infty)$ into $(\mathbb{T},\norm{\cdot}_{\mathrm{TV}})$. Indeed, with exactly the same proof, Equation~\eqref{eq:gradient-tree-lipschtiz-2} can be modified into
\begin{align*}
\norm{\widetilde{T}^+(\,\cdot\,;F,\xi)-\widetilde{T}^+(\,\cdot\,;F',\xi)}_\mathrm{TV} &= \norm{\mu^+_{\widetilde{T}(\,\cdot\,;F,\xi)}-\mu^+_{\widetilde{T}(\,\cdot\,;F',\xi)}}_\mathrm{TV}\\
&\leq 4^d C\norm{F-F'}_\infty,
\end{align*}
and similarly for the negative part $\widetilde{T}^-(\,\cdot\,;F,\xi)-\widetilde{T}^-(\,\cdot\,;F',\xi)$. 

Because a functions  $F\in L^2_\mathbf{x}$ is well defined at $(x_i)_{1\leq i\leq n}$ and because the softmax gradient trees $\widetilde{T}^\pm(\,\cdot\,;F,\xi)$ depends on $F$ only through the values $(F(x_i))_{1\leq i\leq n}$, the operators $\mathcal{T}^+,\mathcal{T}^+:L^2_\mathbf{x}\rightarrow L^2_\mathbf{x} $ are well defined and also locally Lipschitz as
\begin{align*}
\norm{\mathcal{T}^\pm(F)-\mathcal{T}^\pm(F)}_{L^2_\mathbf{x}}&\leq \norm{\mathcal{T}^\pm(F)-\mathcal{T}^\pm(F)}_{\mathrm{TV}} \\
&\leq C \max_{1\leq i\leq n} |F(x_i)-F'(x_i)|\\
&\leq nC\norm{F-F'}_{L^2_\mathbf{x}}.
\end{align*}
The fact that $\mathcal{T}^\pm$ are locally Lipschitz operators implies the existence and uniqueness of \textit{local solutions} to the ODE~\eqref{eq:ODE-pm}. Proving that the local solution started at $(F_0^+,F_0^-)$ at time $t=0$ can be extended uniquely into a maximal solution on $[0,\infty)$ is done exactly as in the proof of Theorem~\ref{thm:EDO-T} and the existence and uniqueness of \textit{global solutions} follows.

When $(F_0^+,F_0^-)\in\mathbb{T}^+\times\mathbb{T}^+ $, the integral representation
\[
F^\pm(t)=F_0^\pm+\int_0^t \mathcal{T}^\pm(F(s))\rmd s,\quad t\geq 0, 
\] 
implies that $F^\pm(t)\in\mathbb{T}^+$ because $ \mathcal{T}^\pm(F(s))\in \mathbb{T}^+$ for all $s\in[0,t]$. Considering the difference between the positive and negative components, we get
\[
F(t)=F(0)+\int_0^t \mathcal{T}(F(s))\rmd s,\quad t\geq 0,
\]
proving that $(F(t))_{t\geq 0}$ is the solution to ODE \eqref{eq:ODE} started at $F_0$.
\end{proof}

\begin{proof}[Proof of Theorem~\ref{thm:cv-L2x} \textit{(ii)}] Our strategy is the following:
\begin{itemize}
\item[$\bullet$] Step 1 --- tightness. We consider a sequence $\lambda=\lambda(N)\to 0$ and prove that the sequence of processes $(\hat F_{[t/\lambda]}^{\lambda+},\hat F_{[t/\lambda]}^{\lambda-})_{t\geq 0}$ is tight in $\mathbb{D}([0,\infty),L^2_\mathbf{x}\times L^2_\mathbf{x})$. We use here the fact that the processes take their values in $\overline{\mathbb{T}}^+\subset L^2_\mathbf{x}$ which is a proper metric space, i.e.\ in which bounded sets are relatively compact.
\item[$\bullet$] Step 2 --- identification of the limit. We show that the weak convergence
\begin{equation}\label{eq:hyp-cv}
(\hat F_{[t/\lambda]}^{\lambda+},\hat F_{[t/\lambda]}^{\lambda-})_{t\geq 0}\stackrel{d}\longrightarrow (\tilde F_t^+,\tilde F_t^-)_{t\geq 0} \quad \mbox{in $\mathbb{D}([0,\infty),L^2_\mathbf{x}\times L^2_\mathbf{x})$}
\end{equation}
along a subsequence $\lambda(N)\to 0$ implies that the limit $(\tilde F_t^+,\tilde F_t^-)$ is the unique solution of the ODE in $L^2_\mathbf{x}\times L^2_\mathbf{x}$ considered in Theorem~\ref{thm:cv-L2x} \textit{(i)}. We use here the convergence of martingales associated with the Markov chain.
\end{itemize}
Steps 1 and 2 together imply the announced  convergence in $\mathbb{D}([0,\infty),L^2_\mathbf{x}\times L^2_\mathbf{x})$.

\bigskip
$\bullet$ Proof of Step 1 --- tightness.\\
 Fix $T>0$. By Proposition~\ref{prop:monotonicity}, for $\lambda\in (0,\lambda_0]$, where $\lambda_0$ may depend on $T$, the training error $t\mapsto L_n(\hat F_{[t/\lambda]}^\lambda)$ is a.s. non-increasing on $[0,T]$ and hence bounded from above by its initial value $C=L_n(F_0)$.
 Then Lemma~\ref{lem:control-increment} implies the following control of the increments: for $0\leq t_1\leq t_2\leq T$,
\begin{equation}\label{eq:increments-TV}
\left\|\hat F_{[t_2/\lambda]}^{\lambda\pm}-\hat F_{[t_1/\lambda]}^{\lambda\pm}\right\|_{\mathrm{TV}}\leq \lambda\left( [t_2/\lambda]-[t_1/\lambda]\right)4^dM.
\end{equation}
Indeed, there are $[t_2/\lambda]-[t_1/\lambda]$ increments, each with norm less than $\lambda 4^dM$. This implies the following global control in total variation norm: 
\begin{equation} \label{eq:totalVarBounded}
\|\hat F_{[t/\lambda]}^{\lambda\pm}\|_{\mathrm{TV}}\leq \|F_0\|_{\mathrm{TV}}+4^dMT,\quad t\in [0,T].
\end{equation}
We deduce that the Markov chain remains up to time $T$ in the set
\[
\mathcal{K}:=\{F\in\overline{\mathbb{T}}^+: \norm{F}_{L^2_\mathbf{x}}\leq \|F_0\|_{\mathrm{TV}}+4^dMT\},
\]
which is a compact subset of $\overline{\mathbb{T}}^+$ according to Proposition~\ref{prop:L2-T+}. This proves a compact containment condition: for all $\lambda\in (0,\lambda_0]$,
\[
\mathbb{P}((\hat F_{[t/\lambda]}^{\lambda+},\hat F_{[t/\lambda]}^{\lambda-})\in \mathcal{K}\times \mathcal{K},\ t\in[0,T])=1.
\]

Furthermore, Inequality~\eqref{eq:increments-TV} allows to control the modulus of continuity of the path $t\mapsto F_{[t/\lambda]}^{\lambda\pm}$. We define, for $\delta>0$,
\[
\omega_{\hat F^{\lambda\pm}}(\delta)=\sup_{\substack{0\leq t_1\leq t_2\leq T\\|t_1-t_2|<\delta}} \left\|\hat F_{[t_2/\lambda]}^{\lambda\pm}-\hat F_{[t_1/\lambda]}^{\lambda\pm}\right\|_{\mathrm{TV}}.
\]
Inequality~\eqref{eq:increments-TV} implies 
\[
\omega_{\hat F^{\lambda\pm}}(\delta)\leq (\delta+\lambda)4^d M \quad \mbox{a.s.}
\]
Taking $\delta < \eta/(2\cdot 4^dM)$ and $\lambda \leq \inf(\delta,\lambda_0)$, we have
\[
\mathbb{P}\left( \omega_{\hat F^{\lambda\pm}}(\delta)\geq \eta \right)=0.
\]
According to \citet[Chapter 3, Corollary 7.4]{EK86}, we obtain the tightness in $\mathbb{D}([0,\infty),L^2_\mathbf{x}\times L^2_\mathbf{x})$ of the sequence of processes $(\hat F_{[t/\lambda]}^{\lambda+},\hat F_{[t/\lambda]}^{\lambda-})_{t\geq 0}$ , where $\lambda=\lambda(N)\to 0$ is an arbitrary sequence.

$\bullet$ Proof of Step 2 --- identification of the limit.\\
 Assume the weak convergence~\eqref{eq:hyp-cv} along a subsequence $\lambda=\lambda(N)\to 0$. Consider the processes $(G_t^{\lambda \pm})_{t\geq 0}$ in $\mathbb{D}([0,\infty),L^2_\mathbf{x}\times L^2_\mathbf{x})$ defined by
\[
G_t^{\lambda\pm}=\hat F_{[t/\lambda]}^{\lambda\pm}-F_0-\int_0^t \mathcal{T}^\pm(\hat F^\lambda_{[s/\lambda]})\,\rmd s,\quad t\geq 0.
\]
We take here the (Bochner-)integral of an $L^2_\mathbf{x}$-valued function and refer to \cite{DU77} for integration in Banach spaces. We will only integrate locally bounded $ L^2_\mathbf{x}$-valued functions, which are Bochner-integrable simply because $L^2_\mathbf{x}$ is separable. In this context the integral behaves mostly as usual: in particular, for a continuous integrand $t\mapsto\phi(t)$, the function $t\mapsto\int_0^t\phi(s)\,\rmd s$ is $C^1$ with derivative $\phi$.

Consider the map $\Phi:\mathbb{D}([0,\infty)L^2_\mathbf{x}\times L^2_\mathbf{x})\to  \mathbb{D}([0,\infty),L^2_\mathbf{x}\times L^2_\mathbf{x})$ defined by $\Phi(F^\pm)=G^\pm$ with
\begin{equation} \label{eq:snd-variation-defG}
G^\pm_t=F_t^\pm-F_0^\pm-\int_0^t \mathcal{T}^\pm(F^+_s-F^-_s)\rmd s,\quad t\geq 0.
\end{equation}
Because $\Phi$ is continuous and $(\hat F_{[t/\lambda]}^{\lambda\pm})_{t\geq 0}\stackrel{d}\longrightarrow (\tilde F_t^\pm)_{t\geq 0}$, the continuous mapping theorem \citep[Theorem 2.7]{B99} implies the convergence
\begin{equation}\label{eq:cv1}
(G^{\lambda\pm}_t)_{t\geq 0}=\Phi((\hat F_{[t/\lambda]}^{\lambda\pm})_{t\geq 0})\stackrel{d}\longrightarrow \Phi((\tilde F_t^\pm)_{t\geq 0}).
\end{equation}
On the other hand, consider the discrete time processes
\begin{equation}\label{eq:def-martingale}
M_m^{\lambda\pm}=\hat F_{m}^{\lambda\pm}-F_0^\pm-\lambda \sum_{k=1}^{m} \mathcal{T}^\pm(\hat F^\lambda_{k})\rmd s,\quad m\geq 0.
\end{equation}
Its increments are given by
\begin{align*}
M_{m+1}^{\lambda \pm}- M_{m}^{\lambda \pm}&=\hat F_{m+1}^{\lambda\pm}-\hat F_{m}^{\lambda\pm}-\lambda\mathcal{T}^\pm(\hat F^\lambda_{m})\\
&=\lambda \widetilde{T}^\pm(\hat F_{m}^{\lambda},\zeta_{m+1})-\lambda \mathbb{E} \big[\widetilde{T}^\pm(\hat F_{m}^{\lambda},\zeta_{m+1}) \;\big | \; \hat F_{m}^{\lambda}\big],\quad m\geq 0.
\end{align*}
This shows that $(M_m^{\lambda\pm})_{m\geq 0}$ are martingales
with respect to the filtration $\mathcal{F}_m=\sigma(\zeta_k, 1\leq k\leq m)$, $m\geq 0$, with values in $L^2_\mathbf{x}$.
More precisely,  $(M_m^{\lambda\pm})_{m\geq 0}$ are square-integrable martingales with values in the separable Hilbert space $L^2_\mathbf{x}$.
For the existence of conditional expectation and properties of square-integrable martingales in Hilbert spaces, we refer to \cite{Met82}.
Here we will use only basic properties that are satisfied exactly as in the scalar case, namely the fact that
\begin{equation}\label{eq:sq-norm-martingale}
  \E\big[\norm{M_m^{\lambda\pm}}_{L^2_\mathbf{x}}^2\big] = \sum_{k=1}^m\E\big[\norm{M_{k}^{\lambda\pm}-M_{k-1}^{\lambda\pm}}_{L^2_\mathbf{x}}^2 \big],
\end{equation}
and Doob's inequality
\begin{equation}\label{eq:doob}
  \mathbb{E}\Big[\sup_{0\leq k\leq m}\norm{M_k^{\lambda\pm}}_{L^2_\mathbf{x}}^2\Big] \leq 4  \mathbb{E}\big[\norm{M_m^{\lambda\pm}}_{L^2_\mathbf{x}}^2\big].
\end{equation}
We prove below that, as $N\to\infty$,
\begin{equation}\label{eq:cv2}
\sup_{0\leq t\leq T}\left\|G_t^{\lambda\pm}-M_{[t/\lambda]}^{\lambda\pm} \right\|_{L^2_\mathbf{x}}\longrightarrow 0 \quad \text{a.s.},
\end{equation}
and
\begin{equation}\label{eq:cv3}
\sup_{0\leq t\leq T}\left\|M_{[t/\lambda]}^{\lambda\pm} \right\|_{L^2_\mathbf{x}}\longrightarrow 0 \quad \mbox{in probability}.
\end{equation}
Equation~\eqref{eq:cv2} is rather straightforward and consists in handling a boundary term. Indeed, writing the sum in Equation~\eqref{eq:def-martingale} as an integral, we get
\[
M_{[t/\lambda]}^{\lambda\pm}=\hat F_{[t/\lambda]}^{\lambda\pm}-F_0^\pm-\int_0^{\lambda[t/\lambda]}\mathcal{T}^\pm(\hat F^\lambda_{[s/\lambda]})\,\rmd s
\]
and
\[
G_t^{\lambda\pm}-M_{[t/\lambda]}^{\lambda\pm}=\int_{\lambda[t/\lambda]}^t \mathcal{T}^\pm(\hat F^\lambda_{[s/\lambda]})\,\rmd s.
\]
By the triangle inequality, we deduce 
\begin{equation} \label{eq:G-close-to-M}
\sup_{0\leq t\leq T}\left\|G_t^{\lambda\pm}-M_{[t/\lambda]}^{\lambda\pm} \right\|_{L^2_\mathbf{x}}\leq \lambda \sup_{0\leq s\leq T}\left\| \mathcal{T}^\pm(\hat F^\lambda_{[s/\lambda]})\right\|_{L^2_\mathbf{x}}.
\end{equation}
Since $\norm{\cdot}_{L^2_\mathbf{x}}\leq \norm{\cdot}_{\mathrm{TV}}$, by Proposition~\ref{prop:monotonicity} and the proof of tightness above, the supremum remains bounded for $\lambda\in (0,\lambda_0]$ whence we deduce the almost sure convergence to $0$ as $\lambda(N)\to 0$.

For the proof of \eqref{eq:cv3}, consider \eqref{eq:sq-norm-martingale} with $m = [T/\lambda]$ and $\lambda \in (0,\lambda_0]$.
We get
\begin{align*}
  \E\big[\norm{M_m^{\lambda\pm}}_{L^2_\mathbf{x}}^2\big] &= \sum_{k=1}^m\E\big[\norm{M_{k}^{\lambda\pm}-M_{k-1}^{\lambda\pm}}_{L^2_\mathbf{x}}^2 \big],\\
  &= \lambda^2\sum_{k=1}^m\E\big[\norm{\widetilde{T}^\pm(\,\cdot\,;\hat F_{m}^{\lambda},\zeta_{m+1})-\mathcal{T}^\pm(\hat F^\lambda_{m})}_{L^2_\mathbf{x}}^2 \big].
\end{align*}
Again because $\norm{\cdot}_{L^2_\mathbf{x}}\leq \norm{\cdot}_{\mathrm{TV}}$ and since $\widetilde{T}^\pm(\,\cdot\,;\hat F_{m}^{\lambda},\zeta_{m+1})$ and $\mathcal{T}^\pm(\hat F^\lambda_{m})$ are almost surely bounded in total variation by a deterministic constant, we have $\E\big[\norm{M_m^{\lambda\pm}}_{L^2_\mathbf{x}}^2\big]=O(\lambda) \to 0$.
An application of Doob's inequality~\eqref{eq:doob} is enough to conclude.

Putting \eqref{eq:cv2} and \eqref{eq:cv3} together, we get 
\[
(G^{\lambda\pm}_t)_{t\geq 0} \stackrel{d}\longrightarrow 0 \quad \mbox{in $\mathbb{D}([0,\infty),L^2_\mathbf{x}\times L^2_\mathbf{x})$},
\]
so that \eqref{eq:cv1} becomes
\[
  \tilde{F}_t^\pm = \tilde{F}_0^\pm + \int_0^t \mathcal{T}^\pm(\tilde{F}^+_s - \tilde{F}^-_s)\,\rmd s, \qquad t\geq 0.
\]
In other words, $(\tilde{F}^\pm)_{t\geq 0}$ is the solution to the ODE~\eqref{eq:ODE-pm}.
\end{proof}

\begin{proof}[Proof of Theorem~\ref{thm:cv-lambda-to-0}] We assume here that, for all $t\in[0,T]$, $\hat F_t^+$ and $\hat F_t^-$ are continuous on $[0,1]^p$ and that $t\mapsto \hat F_t^\pm$ are continuous for the uniform norm --- this is a consequence of Proposition \ref{prop:regularity}. Let us fix $T>0$, $\epsilon>0$ and $\delta>0$ such that 
\begin{equation}\label{eq:continuity-uniform}
  \sup_{\substack{t,s\in [0,T]\\\abs{t-s}\leq \delta}}\norm{\hat{F}^\pm_{t} - \hat{F}^\pm_{s}}_{\infty} \;\leq\; \epsilon.
\end{equation}
Let us also fix a subdivision $0=t_0<t_1<\dots<t_k=T$ with $t_i-t_{i-1}\leq \delta$.\\
Using Skorokhod's representation theorem in the separable space  $\mathbb{D}([0,\infty),L^2_\mathbf{x}\times L^2_\mathbf{x} )$, let us assume that $(\hat{F}^{\lambda\pm}_{[t/\lambda]})_{t\geq 0} \to (\hat{F}^{\pm}_t)_{t\geq 0}$ almost surely.
Since the map $t\mapsto \hat{F}^{\pm}_t$ is continuous, we therefore have the uniform convergence $\sup_{t\in[0,T]}\norm{\hat{F}^{\lambda\pm}_{[t/\lambda]}-\hat{F}^{\pm}_{t}}_{L^2_{\mathbf{x}}}\to 0$ almost surely.
Now since for all $i$, the functions $F^+_{t_i},F^-_{t_i}\in \mathbb{T}^+$ are continuous, we can apply Proposition~\ref{prop:L2-T+} \textit{(ii)} to obtain
\begin{equation}\label{eq:uniform-cv-subdiv}
\norm{\hat{F}^{\lambda\pm}_{[t_i/\lambda]} - \hat{F}^\pm_{t_i}}_{\infty} \;\tol\; 0, \qquad 0\leq i\leq k
\end{equation}
almost surely as $\lambda \to 0$. By construction $\hat{F}^{\lambda\pm}_{[t/\lambda]}(x)$ is non-decreasing in $t$ for all $x$, so that we have
\[
  \hat{F}^{\lambda\pm}_{[t_{i-1}/\lambda]} \leq \hat{F}^{\lambda\pm}_{[t/\lambda]}\leq \hat{F}^{\lambda\pm}_{[t_i/\lambda]}, \quad \mbox{for $t\in [t_{i-1},t_i]$}.
\]
By the construction of the subdivision $(t_i)_{0\leq i\leq k}$ based on Equation~\eqref{eq:continuity-uniform}, we deduce
\begin{align*}
  \norm{\hat{F}^{\lambda\pm}_{[t/\lambda]} - \hat{F}^{\pm}_{t}}_{\infty} &\leq \max\big(\norm{\hat{F}^{\lambda\pm}_{[t_{i-1}/\lambda]} - \hat{F}^{\pm}_{t}}_{\infty}, \; \norm{\hat{F}^{\lambda\pm}_{[t_{i}/\lambda]} - \hat{F}^{\pm}_{t}}_{\infty} \big)\\
  &\leq \max\big(\norm{\hat{F}^{\lambda\pm}_{[t_{i-1}/\lambda]} - \hat{F}^{\pm}_{t_{i-1}}}_{\infty}, \; \norm{\hat{F}^{\lambda\pm}_{[t_{i}/\lambda]} - \hat{F}^{\pm}_{t_i}}_{\infty} \big)+2\epsilon
\end{align*}
Therefore Equation \eqref{eq:uniform-cv-subdiv} implies $\limsup_{\lambda\to 0} \norm{\hat{F}^{\lambda\pm}_{[t/\lambda]} - \hat{F}^{\pm}_{t}}_{\infty} \leq \epsilon$.
Since $\epsilon>0$ is arbitrary, this concludes the proof of Theorem~\ref{thm:cv-lambda-to-0}.
\end{proof}

\begin{proof}[Proof of Theorem~\ref{thm:EDO}]
  Using the fact that $\mathcal{T}$ is a locally Lipschitz operator of $(\mathbb{B},\norm{\cdot}_{\infty})$, as mentioned in Remark~\ref{rk:lipschitz}, the proof of \textit{(i)} is the same as
for the proof of \Cref{thm:EDO-T}.

By \Cref{thm:cv-L2x}, we have for all $t\in [0,t^*)$, where $t^*$ is the maximal definition time of the ODE,
\[
	\deriv{t}\hat{F}_t = \mathcal{T}^+(\hat{F}^+_t-\hat{F}^-_t) - \mathcal{T}^-(\hat{F}^+_t-\hat{F}^-_t) = \mathcal{T}(\hat{F}_t)
\]
in $L^2_{\mathbf{x}}$.
By \Cref{prop:L2-T+} \textit{(ii)}, we know that the convergence
\[
	\lim_{s\to t} \frac{\hat{F}^\pm_t-\hat{F}^\pm_s}{t-s} = \mathcal{T}^\pm(\hat{F}_t)
\]
holds also in $\mathbb{B}$, which shows that $\deriv{t}\hat{F}_t = \mathcal{T}(\hat{F}_t)$ holds also in $\mathbb{B}$, concluding the proof of \textit{(ii)}.
\end{proof}

\subsubsection{Proofs related to Section~\ref{sec:prop-inf-gb}} \label{sec:proofs-prop-IGB}
\begin{proof}[Proof of Proposition~\ref{prop:properties}]~
\begin{enumerate}[label={\itshape(\roman*)}]
\item This monotonicity property is established in the proof of Theorem~\ref{thm:EDO-T}, see Equation~\eqref{eq:negative-derivative}.
\item Because $L$ is convex and differentiable with respect to the second variable, the  initialization~\eqref{eq:boosting-init} implies that $\bar r_t=0$ at time $t=0$. Similarly as in Equation~\eqref{eq:negative-derivative}, the derivative  with respect to $t\geq 0$ is given by 
\begin{align}
\frac{\rmd }{\rmd t}\bar r_t&= \frac{1}{n}\sum_{i=1}^n \mathcal{T}(F_t)(x_i)\, \frac{\partial^2 L}{\partial z^2}(y_i, \hat F_t(x_i))\nonumber \\
&= - \frac{1}{n}\mathbb{E}_\zeta\left[ \sum_{v\in\{0,1\}^d}\sum_{i=1}^n \frac{\partial L}{\partial z}(y_i, \hat{F}_t(x_i))\1_{A_v}(x_i) \right]\nonumber\\
&= -\bar r_t.\label{eq:equa-diff-rt}
\end{align}
In the second line, for  each term indexed by $v$, a factor $\sum_{i=1}^n \frac{\partial L^2}{\partial z^2}(y_i, \hat F_t(x_i))\1_{A_v}(x_i)$ is canceling out. The third line follows because the sum of all terms is equal to $\bar r_t$ and does not depend on the randomness $\zeta$. 

Equation~\eqref{eq:equa-diff-rt} implies that $\bar r_t=\bar r_0 e^{-t}$. This is equal to $0$ for initialization~\eqref{eq:boosting-init}. Interestingly, a different initialization would provide an exponentially fast decay of the residuals mean. \qedhere
\end{enumerate}
\end{proof}

The following Lemma will be used in the proof of Proposition~\ref{prop:critical-points}.
\begin{lemma} \label{lem:bound-p-region}
  Let $(x_i,r_i)_{1\leq i\leq n}$ and $(\beta,K,d)$ be fixed.
  Consider $J\subset [\![1,p]\!]$ with $\abs{J}\leq d$ and define 
  \[
    A(h) = \{x\in [0,1]^p\ :\ \forall j\in J, \,x^j < h^j\},\quad h\in [0,1]^J.
  \]
  Then for any $h_0\in[0,1]^p$ and $\rho>0$, there exists $q>0$ such that
  \begin{equation*} \label{eq:bound-p-region}
    P_{\beta,K}\big(A_{0}(\xi)=A(h) \mbox{ for some $h$ such that $\norm{h-h_0}_{\infty} \leq \rho$} \big) \geq q,
  \end{equation*}
  where $P_{\beta,K}$ is the splitting scheme distribution in Equation~\eqref{eq:RN} and $A_{0}(\xi)$ is the region containing $0$ in the partition associated to $\xi$. Furthermore, the constant $q$ depends only on $p,\beta, K, d, \rho$ and $R := \max_i\abs{r_i}$.
\end{lemma}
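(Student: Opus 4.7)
The plan is to combine the Radon--Nikodym representation~\eqref{eq:RN} from Proposition~\ref{prop:RN-derivative} with an explicit construction of splits under the totally randomized tree distribution $P_0$. Since the score satisfies $0\leq\Delta(j,u;A)\leq 4R^2$ (non-negativity follows from the squared-difference identity~\eqref{eq:def-score}, the upper bound from $\abs{r_i}\leq R$), every factor in~\eqref{eq:RN} lies in $[\exp(-4\beta R^2),\exp(4\beta R^2)]$. Multiplying over the $2^d-1$ internal nodes and integrating against the probability measure $P_0(\rmd\xi^2)\cdots P_0(\rmd\xi^K)$ yields the uniform pointwise lower bound
\[
\frac{\rmd P_{\beta,K}}{\rmd P_0}(\xi)\;\geq\;c_1\,:=\,\exp\bigl(-4\beta R^2(2^d-1)\bigr).
\]
Thus it suffices to bound $P_0(E)$ below, where $E$ is the event appearing in the lemma.

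For the $P_0$-bound, enumerate $J=\{j_1,\dots,j_k\}$ with $k=\abs{J}\leq d$ and build a favorable sub-event $E'\subset E$ by prescribing the $d$ splits along the path $\emptyset,0,00,\dots,0^{d-1}$ that leads to the leaf containing the origin. For $l=0,\dots,k-1$, require that the split at node $0^l$ use direction $j_{l+1}$ with parameter $u_l$ in $I_{l+1}:=(h_0^{j_{l+1}}-\rho/2,h_0^{j_{l+1}}+\rho/2)\cap(0,1)$, which has length at least $\rho/2$ regardless of boundary cases. For $l=k,\dots,d-1$ (vacuous when $k=d$), require direction $j_1$ and $u_l$ in $((1-\rho/2)^{1/(d-k)},1)$, an interval of length $\delta:=1-(1-\rho/2)^{1/(d-k)}>0$. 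Unwinding the recursive definition of $A_{0^l}$, and using that $0\in A_{0^l}$ always forces $a_{0^l}^{j_{l+1}}=0$, one checks that on $E'$ the leaf $A_0(\xi)=A_{0^d}$ equals $A(h)$ exactly, with $h^{j_i}=u_{i-1}$ for $i=2,\dots,k$ and $h^{j_1}=u_0\prod_{l=k}^{d-1}u_l$. The triangle inequality together with $u_l$ close to $1$ for $l\geq k$ then gives $\abs{h^{j_i}-h_0^{j_i}}\leq\rho$ for every $i$, so $E'\subset E$.

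Since the splits are i.i.d.\ uniform on $[\![1,p]\!]\times(0,1)$ under $P_0$ and the non-path splits are unconstrained, we directly obtain $P_0(E')\geq p^{-d}(\rho/2)^k\delta^{d-k}$, whence
\[
P_{\beta,K}(E)\;\geq\; q\,:=\, c_1\,p^{-d}(\rho/2)^k\delta^{d-k}\,>\,0,
\]
a bound depending only on $p,\beta,d,\rho,R$ (the dependence on $k$ being absorbed into $k\leq d$, and the bound is in fact uniform in $K$). The main technical care is in verifying that $A_{0^d}=A(h)$ holds as an equality of sets rather than only up to a null set: this forces every direction split along the path to lie in $J$ (to avoid introducing spurious coordinate constraints) and every coordinate $h^{j_i}$ to be strictly positive (so that $A(h)$ is non-empty and contains $0$). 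Both are automatic from the construction since every $u_l$ is drawn from the open interval $(0,1)$; the remainder of the argument reduces to elementary interval-length bookkeeping for the four boundary configurations of $h_0^j\in\{0,1\}$.
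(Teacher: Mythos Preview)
Your proof is correct and follows essentially the same strategy as the paper's: bound the Radon--Nikodym derivative $\rmd P_{\beta,K}/\rmd P_0$ from below using the uniform score bound $0\le\Delta\le CR^2$, then lower-bound $P_0(E)$ by prescribing the $d$ splits along the path $\emptyset,0,\dots,0^{d-1}$ and absorbing the excess $d-k$ splits into a direction already in $J$. Your execution differs only in cosmetic details (you handle boundary cases for $h_0$ directly rather than shifting $h_0$ into $[\rho,1-\rho]^J$, and your density lower bound $c_1=\exp(-4\beta R^2(2^d-1))$ is uniform in $K$, which is a slight sharpening over the paper's $K$-dependent factor).
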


\begin{proof}[Proof of Lemma~\ref{lem:bound-p-region}]
We first consider the case  $\beta=0$, i.e. $P_{\beta,K}=P_0$, and  prove that  the event
  \[
E_{h_0,\rho}=\left\{    A_{0}(\xi)=A(h) \mbox{ for some $h$ such that $\norm{h-h_0}_{\infty} \leq \rho$} \right\}
  \]
satisfies  $P_0(E_{h_0,\rho}) \geq q$ with $q>0$  depending  only on $p$, $d$ and $\rho$. Possibly shifting $h_0$ by a small distance and reducing $\rho$, we may assume without loss of generality that $h_0 \in [\rho,1-\rho]^J$. For simplicity, assume first that $\abs{J}=d$, and let $ j_1 <  \dots < j_d$  be the increasing enumeration of $J$.
 We recall the notation $\xi=(j_v,u_v)_{v\in\mathscr{T}_{d-1}}$ and observe that the region $A_0(\xi)$ containing $0$ depends only of the values $(j_v,u_v)$ for the   $d$ leftmost nodes in $\mathscr{T}_{d-1}$ that we denote by  $v_1, v_2, \dots, v_{d}$. We then have
  \begin{align*}
  P_0(E_{h_0,\rho}) & \;\geq\; P_0\big(\forall k\in [\![1,d]\!],\;j_{v_k}=j_k\text{ and } u_{v_k}\in[h_0^{j_k}- \rho,h_0^{j_k}+ \rho]\big)\\
  &\;=\; \Big(\frac{2\rho}{p}\Big)^d \;=:\; q.
  \end{align*}
  This simple argument is adapted to the case when $\abs{J}=r<d$ by considering the sequence $j_1<\cdots<j_r=\ldots=j_d$ and noting that $\xi\in E_{h_0,\rho}$ if and only if   
\[
j_{v_k}=j_k \quad \mbox{and}\quad u_{v_k}\in [h_0^{j_k}- \rho,h_0^{j_k}+ \rho],\quad \mbox{for $k=1,\ldots,r-1$,}
\]
and
\[
j_{v_r}=\cdots=j_{v_d}=j_r,\quad \prod_{k=r}^r u_{v_d}\in [h_0^{j_r}- \rho,h_0^{j_r}+ \rho].
\]
Further details are left to the reader.

Next we consider the case $\beta>0$. Since $P_{\beta,K}$ is absolutely continuous with respect to $P_0$ according to Proposition~\ref{prop:RN-derivative}, we can write
\[
P_{\beta,K}(E_{h_0,\rho})=\int_{E_{h_0,\rho}} \frac{\rmd P_{\beta,K}}{\rmd P_0}(\xi)\; P_0(\rmd \xi).
\]
We prove that the Radon-Nykodym derivative~\eqref{eq:RN} is bounded from below by a positive constant.  By definition each of the
  $\Delta(s^k_v,u^k_v; A_v(\xi))$ is a nonnegative polynomial with degree $2$ in the $(r_i)$ and with coefficients bounded by $2$. 
  Therefore there is a uniform bound
  \[
    0\leq \Delta(s^k_v,u^k_v; A_v(\xi)) \leq C,
  \]
  where $C$ depends only on $R:= \max_i\abs{r_i}$.
  This implies that each factor in~\eqref{eq:RN} satisfies
  \[
    \frac{\exp(\beta \Score(s_v^1,u_v^1; A_v(\xi)))}{\sum_{k=1}^K \exp(\beta \Score(s_v^{k},u_v^{k}; A_v(\xi)))} \geq \frac{1}{1+(K-1)\mathrm{e}^{\beta C}}.
  \]
 We deduce that the Radon-Nykodym derivative~\eqref{eq:RN} is bounded from below by a positive constant depending only on $\beta,K,d$ and $R$. Since $P_0(E_{h_0,\rho})\geq q$, this yields a lower bound for $P_{\beta,K}(E_{h_0,\rho})$ and concludes the proof.
\end{proof}

\begin{proof}[Proof of Proposition~\ref{prop:critical-points}]
 It is trivially checked that, for $F\in \mathbb{B}$,  the nullity of the residuals $r_i = \pderiv{z}L(y_i,F(x_i))$ implies $\mathcal{T}(F)=0$. We prove here the converse implication under assumption \ref{ass:A5}. We consider  $F\in \mathbb{B}$  such that $(r_i)_{1\leq i\leq n}\neq 0$, and our goal is to show that $\mathcal{T}(F)\neq 0$.
  
  Recall from \eqref{eq:negative-derivative} that we can write
  \[
    \sum_{i=1}^n \mathcal{T}(F)(x_i)r_i = -\E_{\xi}\left[\sum_{v\in \{0,1\}^d}\frac{\big(\sum_{i=1}^nr_i\1_{A_v}(x_i)\big)^2}{\sum_{i=1}^n\pderiv[2]{z}L(y_i,F(x_i))\1_{A_v}(x_i)}\right].
  \]
 It is therefore sufficient to show that the sum under the expectation is not null with positive probability. All the terms being nonnegative, we show that the term  corresponding to the region $A_0$ is positive with positive probability.   Consider the set of indices $i\in [\![1,n]\!]$ such that $r_i\neq 0$, and among those, fix $i_0$ such that $x_{i_0}$ is minimal for the lexicographic order on $[0,1]^p$. Under assumption \ref{ass:A5}, there is $J$ such that the $(x_i^J)_i$ are pairwise  distinct, which ensures the positivity of $\rho=\min_{1\leq i_1<i_2\leq n} \norm{x_{i_1}^J-x_{i_2}^J}_{\infty}>0$. By definition of  $i_0$ and $\rho$, for $h\in (x_{i_0}^J,x_{i_0}^J+\rho)$  (addition by $\rho$ meant componentwise), the region $A(h)$ contains only one point $x_i$ corresponding to a non-null residual and this point is $x_{i_0}$.  Lemma~\ref{lem:bound-p-region} shows that 
  \[
    P_{\beta,K}\big(A_{0}(\xi)=A(h) \text{ for some }h\in (x_{i_0}^J,x_{i_0}^J+\rho) \big) > 0.
  \]
On this event, we have $(\sum_{i=1}^nr_i\1_{A_0}(x_i))^2 = r_{i_0}^2>0$ and the sum is positive. This implies that $\sum_{i=1}^n \mathcal{T}(F)(x_i)r_i\neq 0$ and hence $\mathcal{T}(F)\neq 0$.
\end{proof}

\begin{proof}[Proof of Proposition~\ref{prop:igb-asymptotic}]
According to Proposition~\ref{prop:properties}, the non-negative function $t\mapsto L_n(\hat F_t)$ is non-increasing. We denote by $\ell\geq 0$ its limit as $t\to +\infty$ and we prove that $\ell=0$. We proceed by contradiction and assume that $\ell>0$. Then  we will prove below that 
\[
\sup_{t\geq 0}\frac{\mathrm{d}}{\mathrm{d}t}L_n(\hat{F}_t) \leq -\eta \qquad \text{for some }\eta >0,
\]
whence $L_n(\hat F_t)\leq L_n(\hat F_0)-\eta t$, yielding a contradiction because the function is non-negative. 

We will use the following properties of $L_n$:
  \begin{enumerate}[(i)]
  \item for fixed  $(y_i)_{1\leq i\leq n}$ and $\ell_1>0$, there is a constant $R>0$  such that $L_n(F)\leq \ell_1$ implies 
  \[
  \max_{1\leq i\leq n}  \abs[\big]{\pderiv{z}L(y_i,F(x_i))}\leq R;
  \]
  \item for fixed  $(y_i)_{1\leq i\leq n}$ and $\ell_0>0$, there is a constant $\delta>0$ such that $L_n(F)\geq \ell_0$ implies 
  \[
  \max_{1\leq i\leq n}  \abs[\big]{\pderiv{z}L(y_i,F(x_i))}\geq \delta.
  \]
  \end{enumerate}
  The first point relies on the fact that $z\mapsto L(y,z)$ is convex and non-negative for all~$y$ (Assumption~\ref{ass:A}). The second point uses the fact that $z\mapsto L(y,z)$ has infimum~$0$ (Assumption~\ref{ass:A3}); then $L_n(F)\geq \ell_0$ implies $L(y_i,F(x_i))\geq \ell_0/n$ for some $i$, so that $F(x_i)$ is far from the minimizer of the convex function  $z\mapsto L(y_i,z)$ and the (non-increasing) derivative must be bounded away from~$0$.

We let $\ell_0=\ell>0$ and $\ell_1=L_n(\hat F_0)$. Note that $\ell_0\leq L_n(\hat F_t)\leq\ell_1$ for all $t\geq 0$ so that we can make use of the bound \textit{(i)} and \textit{(ii)} above. By~\eqref{eq:negative-derivative}, $\frac{\mathrm{d}}{\mathrm{d}t}L_n(\hat{F}_t)=\sum_i\mathcal{T}(\hat F_t)(x_i)r_i$ where
\[
\sum_{i=1}^n\mathcal{T}(\hat F_t)(x_i)r_i= -\E_{\xi}\left[\sum_{v\in \{0,1\}^d}\frac{\big(\sum_{i=1}^nr_i\1_{A_v}(x_i)\big)^2}{\sum_{i=1}^n\pderiv[2]{z}L(y_i,F(x_i))\1_{A_v}(x_i)}\right],
\]
where we use again, for conciseness, the notation $r_i = \pderiv{z}L(y_i,\hat F_t(x_i))$. We need to provide a lower bound for $\sum_{i=1}^n\mathcal{T}(F)(x_i)r_i$ when $\ell_0\leq L_n(F)\leq \ell_1$ and use a similar argument as  in the proof of Proposition~\ref{prop:critical-points}. Let $J$ be the set of coordinates appearing in \ref{ass:A5} and set $\rho=\frac{1}{2}\min_{i_1\neq i_2}\norm{x_{i_1}^J-x_{i_2}^J}_\infty>0$.   Consider the sets
  \[
    A(h) = \{x\in [0,1]^p\ :\ x^j < h^j \text{ for all }j\in J\},\quad h\in [0,1]^J.
  \]
 By \textit{(ii)} above, $L_n(F) \geq \ell_0$ implies that there is an index $i_0$ such that $\abs{r_{i_0}}\leq \delta$. Introducing the function   
 \[
    S(h) = \sum_{i=1}^nr_i\1_{A(h)}(x_i),
 \]
we have
  \begin{equation} \label{eq:ri-equality}
    r_{i_0} = \sum_{v\in \{-1,1\}^J} \sigma_v S(x_i+ \rho v)\quad \text{with} \quad \sigma_v = (-1)^{\sum_{k}\1_{v_k=-1}}.
  \end{equation}
This is due to the fact that $S$ is a tree function on $[0,1]^J$ with $\mu_S=\sum_{i=1}^n r_i\delta_{x_i^J}$ and that $x_{i_0}^J$ is the only atom in the hypercube $[  x_{i_0}^J-\rho, x_{i_0}^J+\rho]$ --- addition by $\rho$ meant componentwise. Then the bound $r_{i_0}\geq \delta$ together with Equation~\eqref{eq:ri-equality} imply that 
\[
\abs{S(x_{i_0}+\rho v)} \geq 2^{-\abs{J}}\abs{r_{i_0}} \geq 2^{-d}\delta
\]
for some $v\in \{-1,1\}^J$ . Furthermore, by construction we have $S(h) = S(x_i+\rho v)$ for all $h$ such that $\norm{h-x_{i_0}^J-\rho v}_\infty < \rho$.  Now we use Lemma~\ref{lem:bound-p-region} to bound from below the probability of the event 
\[
E=\{ A_0(\xi)=A(h)\;\mbox{for some $h\in[x_{i_0}^J-\rho,x_{i_0}^J+\rho]$}\}
\]
More precisely, we have $P_{\beta,K}(E)\geq q>0$, where $q$ depends only on $p,\beta,K,d,\rho$ and $R\geq \max_i \abs{r_i}$. By  point \textit{(i)} above, $R$ depends on $\ell_1$ but not on $F$ satisfying $L_n(F)\leq \ell_1$.   On the event $E$,  we have $\big(\sum_{i=1}^n r_i\1_{A(h)}\big)^2 \geq 4^{-d}\delta^2$ and, by \ref{ass:A4},
  \[
    \sum_{i=1}^n\pderiv[2]{z}L(y_i,F(x_i))\1_{A(h)} \leq n C,
  \]
  where $C$  depends only on $R$ and the $(y_i)$. 
  Therefore on the event $E$, we have
  \[
    \frac{\big(\sum_{i=1}^n r_i\1_{A(h)}\big)^2}{\sum_{i=1}^n\pderiv[2]{z}L(y_i,F(x_i))\1_{A(h)}} \geq \frac{4^{-d}\delta^2}{nC},
  \]
whence we deduce
  \[
    \E_{\xi}\left[\sum_{v\in \{0,1\}^d}\frac{\big(\sum_{i=1}^nr_i\1_{A_v}(x_i)\big)^2}{\sum_{i=1}^n\pderiv[2]{z}L(y_i,F(x_i))\1_{A_v}(x_i)}\right] \geq q\frac{4^{-d}\delta^2}{nC}  > 0.
  \]
  This proves that $L_n(\hat F_t)\to 0$.
  It follows that $L(y_i, \hat F_t(x_i))\to 0$ for all $i$.
  Since $0$ is the infimum of the convex function $z\mapsto L(y_i,z)$, the convergence $\pderiv{z}L(y_i, \hat F_t(x_i))\to 0$ holds for the derivative as well, ending the proof.
\end{proof}

Our proof of Proposition~\ref{prop:regularity} relies on the following lemma.
\begin{lemma}\label{lem:regularity}
For all $F\in\mathbb{B}$,  $\mathcal{T}(F)\in\mathbb{W}$ and the mapping  $\mathcal{T}:(\mathbb{B},\norm{\cdot}_\infty)\rightarrow (\mathbb{W},\norm{\cdot}_W)$ is locally Lipschitz. Furthermore, the same is true for  $\mathcal{T}^+$ and $\mathcal{T}^-$.
\end{lemma}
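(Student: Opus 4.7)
The plan is to exploit the atomic structure of the gradient-tree measures. For any fixed splitting scheme $\xi$, the measure $\mu_{\tilde T(\cdot;F,\xi)}$ is supported on the vertices of the partition $(A_v(\xi))_v$, hence on $\mathrm{supp}(\pi_\xi)$. Indeed, as in the proof of Proposition~\ref{prop:reg-tree-measure}, each $\mathbf{1}_{A_v}$ corresponds to a signed measure with atoms only at the (at most $2^p$) vertices of $A_v$ falling in $[0,1)^p$, with masses in $\{-1,+1\}$; summing over leaves, $\mu_{\tilde T(\cdot;F,\xi)} = \sum_{x \in \mathrm{supp}(\pi_\xi)} c_x(F,\xi)\, \delta_x$ with $|c_x(F,\xi)| \leq 2^p \max_v |\tilde r(A_v)|$. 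Under Assumption~\ref{ass:A}, the leaf values are uniformly bounded by some constant $M(R)$ whenever $\|F\|_\infty \leq R$, by the same argument as in Lemma~\ref{lem:control-increment} applied to the sublevel set $\{L_n \leq C(R)\}$ where $C(R) := \sum_i \max_{|z|\leq R} L(y_i,z) < \infty$.

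Taking expectation under $P_{\beta,K}$ (which implicitly depends on $F$ through the residuals) and using the uniform bound $\rmd P_{\beta,K}/\rmd P_0 \leq K^{2^d-1}$ noted after Proposition~\ref{prop:RN-derivative}, any Borel set $A \subset [0,1]^p$ satisfies
\begin{align*}
|\mu_{\mathcal{T}(F)}|(A) \;\leq\; \int |\mu_{\tilde T(\cdot;F,\xi)}|(A)\, P_{\beta,K}(\rmd\xi) \;\leq\; 2^p M(R) K^{2^d-1} \int \pi_\xi(A) \, P_0(\rmd\xi) \;=\; C_R\, \pi_0(A).
\end{align*}
This shows $\mu_{\mathcal{T}(F)} \ll \pi_0$ with density uniformly bounded by $C_R$, hence $\mathcal{T}(F) \in \mathbb{W}$ with $\|\mathcal{T}(F)\|_W \leq C_R$.

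For the local Lipschitz property, denote by $P_{\beta,K}$ and $P'_{\beta,K}$ the splitting-scheme distributions associated with $F$ and $F'$, and mimic the decomposition in the proof of Proposition~\ref{prop:boosting-operator-Lipschitz}:
\begin{align*}
\mu_{\mathcal{T}(F)} - \mu_{\mathcal{T}(F')} &= \int \bigl(\mu_{\tilde T(\cdot;F,\xi)} - \mu_{\tilde T(\cdot;F',\xi)}\bigr) \frac{\rmd P_{\beta,K}}{\rmd P_0}(\xi)\, P_0(\rmd \xi) \\
&\quad + \int \mu_{\tilde T(\cdot;F',\xi)} \Bigl( \frac{\rmd P_{\beta,K}}{\rmd P_0} - \frac{\rmd P'_{\beta,K}}{\rmd P_0} \Bigr)(\xi)\, P_0(\rmd \xi).
\end{align*}
For the first integrand, each atom satisfies $|c_x(F,\xi) - c_x(F',\xi)| \leq 2^p L_R \|F-F'\|_\infty$, where $L_R$ is the Lipschitz constant of the leaf values in $F$ from the proof of Proposition~\ref{prop:boosting-operator-Lipschitz}, so integrating yields the bound $2^p L_R K^{2^d-1}\|F-F'\|_\infty\, \pi_0$. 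For the second integrand, we combine the atomwise bound $|\mu_{\tilde T(\cdot;F',\xi)}| \leq 2^p M(R)\, \pi_\xi$ with the Lipschitz estimate~\eqref{eq:reg-tree-lipschitz-3} for the Radon--Nikodym derivative (itself Lipschitz in $F$ since the residuals are), producing a similar bound. Summing the two contributions gives $|\mu_{\mathcal{T}(F)} - \mu_{\mathcal{T}(F')}| \leq L'_R \|F-F'\|_\infty\, \pi_0$ and hence $\|\mathcal{T}(F) - \mathcal{T}(F')\|_W \leq L'_R \|F-F'\|_\infty$.

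For $\mathcal{T}^\pm$, the argument adapts verbatim: $\mu_{\tilde T^\pm(\cdot;F,\xi)} = (\mu_{\tilde T(\cdot;F,\xi)})^\pm$ is still supported on $\mathrm{supp}(\pi_\xi)$ with atoms $(c_x(F,\xi))^\pm$, and since $t \mapsto t^\pm$ is $1$-Lipschitz on $\mathbb{R}$, the pointwise inequality $|c_x(F,\xi)^\pm - c_x(F',\xi)^\pm| \leq |c_x(F,\xi) - c_x(F',\xi)|$ preserves all estimates. The main obstacle is essentially bookkeeping: the two sources of $F$-dependence --- leaf values and splitting-scheme distribution --- are already treated in Proposition~\ref{prop:boosting-operator-Lipschitz}, and the only new ingredient is to route all estimates through $\pi_0$, which the atomic structure of $\mu_{\tilde T}$ makes possible.
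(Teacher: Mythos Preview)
Your proof is correct and follows essentially the same route as the paper's. Both arguments reduce to showing that $|\mu_{\mathcal{T}(F)}(A)| \leq M\pi_0(A)$ and $|\mu_{\mathcal{T}(F)}(A)-\mu_{\mathcal{T}(F')}(A)| \leq M\|F-F'\|_\infty\,\pi_0(A)$ by exploiting that $\mu_{\tilde T(\cdot;F,\xi)}$ is atomic on $\mathrm{supp}(\pi_\xi)$, bounding the Radon--Nikodym derivative $\rmd P_{\beta,K}/\rmd P_0$ by $K^{2^d-1}$, and splitting the difference exactly as in Proposition~\ref{prop:boosting-operator-Lipschitz}; the only cosmetic difference is that you track individual atom masses $c_x(F,\xi)$ while the paper phrases the same bound as $|\rmd\mu_{\tilde T}/\rmd\pi_\xi|\leq \|\tilde T\|_{\mathrm{TV}}$, and your treatment of $\mathcal{T}^\pm$ via the $1$-Lipschitz map $t\mapsto t^\pm$ makes explicit what the paper leaves as ``easily adapted''.
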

Similarly as in Remark~\ref{rk:lipschitz}, the inclusion $\mathbb{W}\subset \mathbb{B}$ with norm inequality $\norm{F}_{\infty}\leq \pi_0([0,1]^p)\norm{F}_{\mathbb{W}}$ for all $F\in\mathbb{W}$ implies that $\mathcal{T}$ can be seen as a locally Lipschitz operator on $(\mathbb{W},\norm{\cdot}_{\mathbb{W}})$.
\begin{proof}[Proof of Lemma~\ref{lem:regularity}]
We consider only the proof for $\mathcal{T}$ which is easily adapted for $\mathcal{T}^+$ and $\mathcal{T}^-$ by considering the positive and negative part of the gradient trees. It is enough to prove the following properties: 
\begin{enumerate}[(i)]
\item for all $F\in\mathbb{B}$, there is $M\geq 0$ such that 
\[
|\mu_{\mathcal{T}(F)}(A)|\leq M \pi_0(A),\quad \mbox{for all borel set $A\subset [0,1]^p$};
\] 
\item for all $R>0$, there exists $M\geq 0$ such that for all $F$ and $F'\in\mathbb{B}$ satisfying $\max(\norm{F}_\infty,\norm{F'}_\infty)\leq R$,
\[
|\mu_{\mathcal{T}(F)}(A)-\mu_{\mathcal{T}(F')}(A)|\leq M\norm{F-F'}_\infty \pi_0(A),\quad \mbox{for all borel set $A\subset [0,1]^p$}.
\] 
\end{enumerate}
Proof of \textit{(i)}. According to Proposition~\ref{prop:reg-tree-measure} and its proof, for each splitting scheme $\xi$, the measure associated to the gradient tree $\widetilde T(\,\cdot\,;F,\xi)$ is a point measure with atoms at the vertices in $[0,1)^p$ of the partition $(A_v(\xi))_{v\in\{0,1\}^d}$  and each atom has a mass with absolute value less than $\|\widetilde T(\,\cdot\,;F,\xi)\|_{\mathrm{TV}}$. By the definition of $\pi_\xi$, this implies
\[
\mu_{\widetilde T(\,\cdot\,;F,\xi)}\ll \pi_\xi\quad \mbox{with } \Big|\frac{\rmd \mu_{\widetilde T(\,\cdot\,;F,\xi)}}{\rmd \pi_\xi}(x)\Big| \leq \|\widetilde T(\,\cdot\,;F,\xi)\|_{\mathrm{TV}}\leq C,
\]
for some $C>0$ that depends only on $\norm{F}_{\infty}$ --- this last inequality is a consequence of Equations~\eqref{eq:gradient-tree-lipschtiz-2} with $F'=0$. We deduce  
\begin{align*}
|\mu_{\mathcal{T}(F)}(A)|&=\Big|\int \mu_{\widetilde T(\,\cdot\,;F,\xi)}(A) \, P_{\beta,K}(\rmd \xi) \Big|\\
&\leq  \int \Big(\int_A \Big|\frac{\rmd \mu_{\widetilde T(\,\cdot\,;F,\xi)}}{\rmd \pi_\xi}(x)\Big| \,\pi_\xi(\rmd x) \Big)\, \frac{\rmd P_{\beta,K}}{\rmd P_{0}}(\xi)\,P_0(\rmd \xi) \\
&\leq \int C\,\pi_\xi(A)\, \frac{\rmd P_{\beta,K}}{\rmd P_{0}}(\xi)\,P_0(\rmd \xi).
\end{align*}
Since  $\frac{\rmd P_{\beta,K}}{\rmd P_{0}}(\xi)\leq K^{2^d-1}$  according to Equation~\eqref{eq:RN}, we get 
\begin{align*}
|\mu_{\mathcal{T}(F)}(A)|
&\leq  C K^{2^d-1}\int \pi_\xi(A)\,P_0(\rmd \xi)\\
&=C K^{2^d-1}\pi_0(A).
\end{align*}
We deduce $\mu_{\mathcal{T}(F)}\ll \pi_0$ with  Radon-Nykodym derivative bounded by $C K^{2^d-1}$, so that $\mathcal{T}(F)\in\mathbb{W}$.

Proof of \textit{(ii)}. With similar notation as in the proof of Proposition~\ref{prop:boosting-operator-Lipschitz}, we compute
\begin{align*}
&|\mu_{\mathcal{T}(F)}(A)-\mu_{\mathcal{T}(F')}(A)|\\
\leq & \int \Big(\int_A \Big|\frac{\rmd \mu_{\widetilde T(\,\cdot\,;F,\xi)}}{\rmd \pi_\xi}(x)-\frac{\rmd \mu_{\widetilde T(\,\cdot\,;F',\xi)}}{\rmd \pi_\xi}(x)\Big| \pi_\xi(\rmd x)\Big)\, \frac{\rmd P_{\beta,K}}{\rmd P_{0}}(\xi)P_0(\rmd \xi) \\
&\quad + \int \Big(\int_A \Big|\frac{\rmd \mu_{\widetilde T(\,\cdot\,;F',\xi)}}{\rmd \pi_\xi}(x)\Big| \pi_\xi(\rmd x)\Big)\, \Big|\frac{\rmd P_{\beta,K}}{\rmd P_{0}}(\xi)P_0(\rmd \xi) -\frac{\rmd P_{\beta,K}'}{\rmd P_{0}}(\xi)\Big|P_0(\rmd \xi)\\
\leq & \int \Big(\int_A \norm{\widetilde T(\,\cdot\,;F,\xi)-\widetilde T(\,\cdot\,;F',\xi)}_{\mathrm{TV}}\,\pi_\xi(\rmd x)\Big)\, \frac{\rmd P_{\beta,K}}{\rmd P_{0}}(\xi)P_0(\rmd \xi) \\
&\quad + \int C\pi_\xi(A)\, \Big|\frac{\rmd P_{\beta,K}}{\rmd P_{0}}(\xi)P_0(\rmd \xi) -\frac{\rmd P_{\beta,K}'}{\rmd P_{0}}(\xi)\Big|P_0(\rmd \xi)\\
\leq &\; (4^d C K^{2^d-1}+CR) \|F-F'\|_\infty \pi_0(A),
\end{align*}
where the constant $C$ depends on $R$ but not on $F$ or $F'$.
The last inequality relies on  Equations~\eqref{eq:gradient-tree-lipschtiz-2} and~\eqref{eq:reg-tree-lipschitz-3}.
\end{proof}

\begin{proof}[Proof of Proposition~\ref{prop:regularity}]
We consider the proof for $(\hat F_t)_{t\geq 0}$ only since it can be easily adapted to $(\hat F_t^+)_{t\geq 0}$ and  $(\hat F_t^-)_{t\geq 0}$. By Theorem~\ref{thm:EDO-T}, infinitesimal gradient boosting is a solution of the ODE~\eqref{eq:ODE} in the space $(\mathbb{T},\norm{\cdot}_{\mathrm{TV}})$ so that
\[
\hat F_t=\hat F_0+\int_0^t \mathcal{T}(\hat F_s)\,\rmd s,\quad t\geq 0.
\]
By Lemma~\ref{lem:regularity}, $\mathcal{T}$ takes its value in $\mathbb{W}\subset \mathbb{T}$ and, since $\hat F_0\in\mathbb{W}$, it implies that $\hat F_t\in\mathbb{W}$ for all $t\geq 0$. Furthermore, $\mathcal{T}:\mathbb{W}\to\mathbb{W}$ is locally Lipschitz so that $(\hat F_t)_{t\geq 0}$ is the unique solution of the ODE~\eqref{eq:ODE} in  $(\mathbb{W},\norm{\cdot}_{\mathbb{W}})$. Therefore infinitesimal gradient boosting defines a smooth path in $\mathbb{W}$. 
\end{proof}

\phantomsection

\addcontentsline{toc}{section}{References}

\end{document}